\newcommand{\naf}{\neg}
\newcommand{\supp}[2]{\mathit{sp}_{#1}^{#2}}
\newcommand{\bd}[2]{\mathit{bd}_{#1}^{#2}}
\newcommand{\SCC}{\mathit{SCC}}
\newcommand{\TrRk}{\mathit{TrRk}}
\newcommand{\TrHd}{\mathit{TrHd}}
\newcommand{\TrBd}{\mathit{TrBd}}
\newcommand{\TrRule}{\mathit{TrRule}}
\newcommand{\Tr}{\mathit{Tr}}
\newcommand{\nop}[1]{}
\newcommand{\mysubsubsection}[1]{\smallskip\noindent{\bf#1}~}
\newcommand{\myparagraph}[1]{\vspace{2pt}\noindent\emph{#1}~}
\newcounter{myexample}
\newenvironment{myexample}[1][]{\refstepcounter{myexample}\par\medskip
   \noindent \mbox{\emph{Example~\themyexample. #1}}\quad \rmfamily}{\smallskip}
\newcommand{\varsP}{\ensuremath{\mathcal{V}_P}}
\newcommand{\atomsP}{\ensuremath{\mathcal{A}_P}}
\newcommand{\linAtomsP}{\ensuremath{\atomsP^{\mathit{lin}}}}
\newcommand{\costIP}{\ensuremath{c_P(I)}}
\newcommand{\costcalIP}{\ensuremath{c_P(\mathcal{I})}}
\newcommand{\AS}{\ensuremath{\mathit{AS}}}
\newcommand{\solver}{\text{asp-fzn}}
\newtheorem{theorem}{Theorem}
\newtheorem{proposition}{Proposition}
\newtheorem{definition}{Definition}
\newtheorem{lemma}{Lemma}
\newtheorem{corollary}{Corollary}
\newcommand{\citeauthorS}[1]{\citeauthor{#1}'s \citeyearpar{#1}}
\newcommand{\mysummanddisp}[2]{\hspace*{#1} #2}
\newcommand{\mysummanddispq}[2]{\mysummanddisp{#1}{#2} \quad }
\newcommand{\dataURL}{{\small\url{https://doi.org/10.5281/zenodo.16267414}}}
\providecommand{\customgenericname}{}
\newcommand{\newcustomtheorem}[2]{%
  \newenvironment{#1}[1]
  {%
   \renewcommand\customgenericname{#2}%
   \renewcommand\theinnercustomgeneric{##1}%
   \innercustomgeneric
  }
  {\endinnercustomgeneric}
}
\newcommand{\new}[1]{\textcolor{black}{#1}}
\newenvironment{compactitemize}
{\begin{list}{$\bullet$}{%
\setlength{\topsep}{2pt} 
\setlength{\leftmargin}{0pt} 
\setlength{\itemindent}{10pt}}} 
{\end{list}}
\newcommand{\myalign}[3]{

\vspace*{#1}

\begin{align}
#2
\end{align}

\vspace*{#3}

\noindent
}
\begin{document}


\lefttitle{ASP-FZN}

\jnlPage{1}{17}
\jnlDoiYr{2025}
\doival{10.1017/xxxxx}

\title[ASP-FZN]{{ASP-FZN}: A Translation-based \\ Constraint Answer Set Solver}


\begin{authgrp}
\author{ \sn{Thomas} \gn{Eiter}$^1$, \sn{Tobias} \gn{Geibinger}$^1$, \sn{Tobias} \gn{Kaminski}$^3$, \\ \sn{Nysret} \gn{Musliu}$^1$, \gn{Johannes} \sn{Oetsch}$^2$}
\affiliation{%
$^1$ TU Wien, Austria \\
$^2$ Jönköping University, Sweden \\
$^3$ Bosch Center for AI, Germany \\
\{firstname.lastname\}@tuwien.ac.at, johannes.oetsch@ju.se, tobias.kaminski@de.bosch.com}
\author{
}      
\end{authgrp}


\maketitle

\begin{abstract}
We present the solver asp-fzn for Constraint Answer Set Programming (CASP), which extends ASP with linear constraints. 
Our approach is based on translating CASP programs into the solver-independent FlatZinc language that supports several Constraint Programming and Integer Programming backend solvers. 
Our solver supports a rich language of linear constraints, including some common global constraints. As for evaluation, we show that asp-fzn is competitive with state-of-the-art ASP solvers on benchmarks taken from past ASP competitions. 
Furthermore, we evaluate it on several CASP problems from the literature and compare its performance with clingcon, which is a prominent CASP solver that supports most of the asp-fzn language. The performance of asp-fzn is very promising as it is already competitive on plain ASP and even outperforms clingcon on some CASP benchmarks. 
\end{abstract}
\begin{keywords}
Answer Set Programming, 
Constraint Programming, Integer Programming
\end{keywords}



\section{Introduction}

Answer Set Programming (ASP) is a 
popular rule-based formalism 
for various AI applications and combinatorial problem-solving, where 
a problem is represented by an ASP program
whose answer sets (models) represent the solutions, 
potentially also under certain optimization criteria.
Especially for modeling industrial problems, Constraint Answer Set Programming (CASP), which adds reasoning over linear constraints to ASP, proved to be quite effective, e.g., for scheduling problems~\citep{DBLP:conf/lpnmr/Balduccini11,geibinger2021constraint}.

While efficient CASP solvers are available, cf.\ 
the recent survey by 
\cite{DBLP:journals/tplp/Lierler23}, they still often lag behind state-of-the-art Constraint Programming (CP) or Mixed Integer Programming (MIP) solvers for certain problem domains.
%
CASP solvers are either based on dedicated algorithms or translations into related formalisms such as Satisfiability Modulo Theory (SMT). The latter approach is inspired by similar works for solving plain ASP programs, but has the downside that SMT solvers generally lack optimization features and are thus not applicable for many problems appearing in practice.
This begs the question why, instead of targeting SMT, the translation is not aimed at FlatZinc~\citep{DBLP:conf/cp/NethercoteSBBDT07}, which is a solver-independent intermediate language that offers those lacking optimization features and works with many modern CP and MIP solvers as backend engines. The lack of such an approach was also noted by \cite{DBLP:journals/tplp/Lierler23}.

To fill this gap, we present the CASP solver \solver{}, which translates CASP programs into FlatZinc, thereby leveraging decades of CP and MIP solver engineering for efficient (optimal) solution finding.
To support modern CASP encodings featuring not only linear constraints but also specific scheduling constraints and ASP constructs like variables, aggregates, choice, and disjunction, we uilize gringo's theory interface 
\citep{DBLP:journals/tplp/GebserKKS19,DBLP:journals/tplp/KaminskiRSW23} 
to obtain a simplified program format.
Our approach then combines and extends ideas from translation-based ASP solving~\citep{DBLP:conf/ijcai/AlvianoD16,DBLP:journals/corr/abs-2308-15888} to create a FlatZinc representation encompassing all mentioned constructs.
By the richness of FlatZinc, incorporating complex global constraints and hybrid optimization of both ASP weak constraints and objectives over linear variables is easy.
Notably, those features are not yet fully supported by other state-of-the-art CASP solvers like clingcon~\citep{DBLP:journals/tplp/BanbaraKOS17,DBLP:journals/algorithms/CabalarFSW23}.

Our main contributions are briefly summarized as follows:
\begin{compactitemize}
    \item We present a translation $\Tr(P)$ of head-cycle-free CASP programs $P$ into a low level constraint language, which can be parsed by several state-of-the-art CP and MIP solvers.
    \item Our translation extends and combines existing concepts from the literature and supports not only linear constraints but also choice rules, weight rules, disjunction, and optimization.
    \item We show that 
    $\Tr(P)$ captures all answer sets of $P$, with a one-to-one or many-to-one mapping to its models, depending on the presence of correspondence constraints.
    \item We introduce our solver \solver{}, which implements the described translation and utilizes external grounding and a parametric backend solver for answer-set optimization.
    \item We evaluate \solver{} using different backend solvers against state-of-the-art (C)ASP solvers, finding that it is competitive on plain ASP and outperforms clingcon on some CASP benchmarks.
\end{compactitemize}

The solver \solver{} thus enables solving expressive (C)ASP programs via CP and MIP solvers, leveraging their strengths. As with SAT-based ASP solvers, this approach benefits from the substantial engineering behind these solvers, future advancements, and the decoupling of (C)ASP solving from specialized, maintenance-heavy algorithms.

\section{Preliminaries}


We consider propositional Answer Set Programming (ASP)
\citep{brew-etal-11-asp}
with \emph{programs} $P$ that are sets of rules $r$ of the form
\myalign{-3ex}
{\label{eqn:rule}
    H \leftarrow B
}{-0.5ex}
where $H$ is the \emph{head} of the rule and $B$
its \emph{body}, also denoted by  $H(r)$  and $B(r)$, respectively;
by $\atomsP$ we denote the set of all propositional atoms occurring in $P$.
We distinguish two types of rules: 1) \emph{disjunctive rules}
and 2) \emph{choice rules}, where $H$ has the form 
\linebreak
\begin{minipage}{0.45\textwidth}
\myalign{-2.5ex}
{\label{eqn:disj}
    a_1 \mid \dots \mid a_m & & \text{({\em disjunctive head})}
}{-4.25ex}
\end{minipage}
\begin{minipage}{0.55\textwidth}
\myalign{-2.5ex}
{\label{eqn:choice}
\text{respectively}~~     \{ a_1 , \dots , a_m \} & & \text{ ({\em choice
head})}
}{-4.25ex}
\end{minipage}
\vspace{0pt}

\noindent 
where all $a_i$ are 
atoms. Intuitively, "$\mid$" stands for logical disjunction, i.e., at least one of the atoms must hold,  
while for choice, any number of $a_i$ can be true if
$H$
is true.
A disjunctive rule is a \emph{constraint rule} if $H(r)=\emptyset$  and a \emph{normal rule} if $|H(r)|=1$.



Furthermore, we consider two types of rule bodies: 1) \emph{normal} rule bodies of the form
\begin{align}\label{eqn:normal}
    b_1 , \dots , b_k, \naf b_{k+1}, \dots , \naf b_{n}
\end{align}
where all $b_i$ are 
atoms, $\naf$ is negation as failure, and 
 ``,'' is 
 conjunction, and 2)
%
\emph{weighted} rule bodies 
\begin{align}\label{eqn:weighted}
    l \leq \{ b_1 : w_1, \dots , b_k : w_k, \naf b_{k+1} : w_{k+1}, \dots , \naf b_n : w_n \}
\end{align}
where all $b_i$ are atoms, all  $w_i$ are integer \emph{weights}, and $l$ is the integer \emph{lower bound}; we let $B^+(r)= \{b_1,\ldots,b_k\}$ and $B^-(r)\,{=}\,\{b_{k+1},\ldots, b_n\}$.

By slight abuse of notation, $a\in H(r)$ denotes that atom $a$ occurs in 
$H(r)$
and $l\in B(r)$ that literal $l$, i.e., an atom or its negation, occurs in $B(r)$.
We further let $w_b^r$  denote the weight of atom $b$ in the body of rule $r$, let $\top$ denote an empty conjunction, 
and let $\bot$ denote an empty rule head.


\begin{myexample}\label{ex:running}
    Consider the program $P_1 = \{ \ \{a,b\} \leftarrow c,\  \bot \leftarrow 3\leq \{ a : 1, b : 2 \} ,\ c \leftarrow \naf d \ \}$.
    The first rule of $P_1$ is a choice rule with normal body, the second rule is a constraint rule with a weighted body, and the last rule is a normal rule.
\end{myexample}
\vspace*{-0.25\baselineskip}

\myparagraph{Semantics.}
An \emph{interpretation} of a program $P$  is a set $I \subseteq \atomsP$ of  atoms,
which satisfies a disjunctive head 
(\ref{eqn:disj}) if $a_i \in I$ for some $i \in [1,m]$, and satisfies every choice rule head (\ref{eqn:choice}). 

Given a rule $r$ and an interpretation $I$, $I\models H(r)$ denotes that $I$ satisfies the head of $r$. Satisfaction of the body $B(r)$ by $I$, denoted $I \models B(r)$, is as follows: 1) 
for a normal rule body (\ref{eqn:normal}),  $b_i \in I$ for every $i\in[1,k]$ and $b_j \not\in I$ for every $j\in(k,n]$ must hold; 2)
for a weighted rule body (\ref{eqn:weighted}), the  following linear inequality must hold:
 $l \;\leq \sum_{i\in[1,k], b_i\in I} w_i + \sum_{j \in(j,n], b_j\not\in I} w_j.$

An interpretation $I$ satisfies a rule $r$, denoted $I \models r$, whenever $I\models B(r)$ implies $I\models H(r)$ and $I$ is a model of program $P$, denoted $I\models P$, if $I \models r$ for all $r \in P$.

\myparagraph{Answer sets.} The \emph{(FLP) reduct} $P^I$ of program $P$ w.r.t.\ interpretation $I$ is the program containing, for
each $r \in P$ s.t.\ $I\models B(r)$, the following rules:
  (1)  if 
    $r$ is disjunctive, $H(r) \leftarrow B(r)$, and
 (2) if $r$ is a choice rule, for each $a \in H(r)$  the rule $a \leftarrow B^+(r)$  
if $B(r)$ is normal and $
a \leftarrow l' \leq \{ b_1 : w_1, \dots , b_k : w_k \}$ if $B(r)$ is a weighted body (\ref{eqn:normal}), where  
 $l'= \mathit{max}(0, l - \sum_{j \in (k,n], b_j\not\in I} w_j)$. 


Finally, an interpretation $I$ is an \emph{answer set} of program $P$ if $I$ is a $\subseteq$-minimal model of $P^I$. The set of all answer-sets of $P$ is denoted by $\AS(P)$.

\begin{myexample}\label{ex:running_ctd1}
Program $P_1$ from Example~\ref{ex:running} has $\AS(P_1)=\{ \{c\}, \{c,a\}, \{c,b\} \}$.
\end{myexample}

We allow programs $P$ to contain also a single \emph{minimization} statement (Priority levels can be added and compiled to this form using known techniques):
\begin{align}\label{eqn:min}
    \mathit{min} \ a_1 : w_1, \dots , a_k : w_k, \naf a_{k+1} : w_{k+1}, \dots , \naf a_n : w_n
\end{align}
The \emph{cost} of interpretation $I$ is $\costIP = \sum_{i \in [1,k], a_i\in I} w_i + \sum_{j \in (k,n], a_j\not\in I} w_j$
and 0 if $P$ has no minimization.
An answer set $I$ of $P$ is \emph{optimal} if $\costIP$ is minimal over 
$\AS(P)$.

\subsection{Constraint Answer Set Programming}

We next introduce \emph{linear constraints and variables} in our programs, thus turning to \emph{Constraint Answer Set Programming (CASP)}.
We consider a countable set $\mathcal{V}$  of linear variables. Each $v \in \mathcal{V}$ has a \emph{domain} $D(v)$ that is assumed to be an integer range, which defaults
to $[-\infty, +\infty]$;
it can be restricted by a \emph{domain constraint} of the form
\begin{align}\label{eqn:dom}
    v \in [l,u]
\end{align}
where $l$ and $u$, $l \leq u$,  are integer lower and upper bounds. 
In general, bounding the linear variables is not required but the CASP solver 
might infer bounds or fallback to some default values.

A \emph{linear constraint} is of the form
\begin{align}\label{eqn:lin}
    a \leftrightarrow v_1 \cdot w_1 + \dots + v_n \cdot w_n \circ g
\end{align}
where $a$ is an
atom, each $v_i$ is a linear variable, each $w_i$ and $g$ are integer constants, and $\circ \in \{ <,>,=,\neq, \leq, \geq \}$ is a comparison operator.
Intuitively, 
$a$ is constrained to the truth value of the linear constraint. Syntactically, $a$ can appear in the bodies of standard ASP rules (\ref{eqn:rule}). 
For any CASP program $P$, we denote by $\varsP$ and $\linAtomsP$ the sets of all linear variables and all propositional atoms occurring in linear constraints of $P$, respectively.

We additionally allow a CASP to contain global constraints. An \emph{alldifferent} constraint is of the form 
\myalign{-5ex}
{\label{eqn:alldiff}
     \&\mathit{distinct}\{ v_1 , \dots , v_n\}
}{-0.5ex}
where each $v_i$ is a linear variable and all are constrained to be pair-wise different.
A \emph{cumulative} \/
constraint is of the form 
\myalign{-4ex}
{\label{eqn:cumulative}
     \&\mathit{cumulative}\{ (s_1,l_1,r_1) , \dots , (s_n,l_n,r_n)\} \leq g
}{-0.5ex}
where $s_i$ is a linear variable representing the start of each interval, $l_i$ is a linear variable representing the length, $r_i$ is a linear variable denoting the resource usage, and $g$ is an integer bound. The constraint then enforces that at each time point, the sum of the resource usages of the overlapping intervals does not exceed $g$.
A global \emph{disjoint}\/ constraint is of form 
$\&\mathit{disjoint}\{ (s_1,l_1) , \dots , (s_n,l_n)\}$
and can be seen as a special case of a constraint (\ref{eqn:cumulative})  where $r_i$ and $g$ are assumed to be $1$. 

\myparagraph{Semantics.} An \emph{extended (e-)}\/ interpretation for a CASP program $P$ is a tuple $\mathcal{I}=\langle I,\delta \rangle$ where $I$ is a set of propositional atoms and $\delta: \varsP \rightarrow \mathbb{Z}$
is an \emph{assignment} of integers to linear variables $\varsP$.
Satisfaction $\mathcal{I} \models \phi$, where $\phi$ is a head, body, rule, program etc., is defined as above via $I$.

An e-interpretation $\mathcal{I}=\langle I,\delta \rangle$ is a \emph{constraint answer set} of $P$ if  (1) $I$ is an answer set of $P \cup \{ \{a\} \leftarrow \  \mid a \in \linAtomsP \}$, (2) for each domain constraint (\ref{eqn:dom}) in $P$, $\delta(v)\in [l,u]$, and (3) for each linear constraint (\ref{eqn:lin}) in $P$, $a \in I$ iff $\sum_{1\leq i \leq n} \delta(v_i) \cdot w_i \circ g$.
By slight abuse of notation we also use $\AS(P)$ to refer to the constraint answer sets of a CASP program $P$.

\begin{myexample}[ (Ex.~\ref{ex:running}~cont'd) ]\label{ex:running_ctd2}
Let $P_2 \,{=}\, P_1 \,{\cup}\, \{ x \,{\in}\, [0,2], \ y \,{\in}\, [0,1], \ d \leftrightarrow x \,{\cdot}\, 1 \,{+}\, y \,{\cdot}\, 1 \,{\neq}\, 3 \}$.
    Clearly, $P_2$ is a CASP program with $\AS(P_2) = \{\; \langle \{ c \}, \{ (x,2),(y,1) \} \rangle$, 
    $\langle \{ b, c \}, \{ (x,2), (y,1) \} \rangle$, 
    $\langle \{ a, c \}, \{ (x,2), (y,1) \} \rangle$, 
    $\langle \{ d \}, \{ (x,0), (y,0) \} \rangle$,  $\langle \{ d \}, \{ (x,1), (y,0) \} \rangle$,
        $\langle \{ d \}, \{ (x,2), (y,0) \} \rangle$, \,
    $\langle \{ d \}, \{ (x,1), (y,1) \} \rangle$,\,
    $\langle \{ d \}, \{ (x,0), (y,1) \} \rangle
    \; \}$.
\end{myexample}

For CASP programs, we allow minimization over the linear variables with statements 

\myalign{-3ex}
{\label{eqn:lin_min}
     \mathit{min} \ v_1 \cdot w_1 + \dots + v_n \cdot w_n 
}{-0.5ex}
where each $v_i$ is a linear variable and each $w_i$ is an integer constant.
The cost $\costcalIP$ of an e-interpretation $\mathcal{I}$ of a CASP program $P$ is 
the sum of the costs determined by statements (\ref{eqn:min}) and (\ref{eqn:lin_min}), and optimal constraint answer sets are, {\it mutatis mutandis}, analogous to optimal answer sets.

\vspace*{-0.5\baselineskip}

\section{Supported Models and Ranked Interpretations}

Prior to the translation, we introduce a few auxiliary concepts.
The \emph{positive dependency graph} of a (C)ASP program $P$
is $\mathit{DG}^+_P=(V,E)$ 
with nodes $V=\atomsP$ and
 edges $(a,b) \in E$
for all atoms $a,b$  s.t.\  $a \in H(r)$ and $b \in B^+(r)$ for some  rule $r\in P$. 
A program $P$ is \emph{tight} if $\mathit{DG}^+_P$ is acyclic; a rule $r \in P$ is \emph{locally tight} if $H(r)\cap B^+(r)=\emptyset$.
%
We denote for $a\in \atomsP$ by  $\SCC_P(a)$ its strongly connected component (SCC) in $\mathit{DG}^+_P$, which is 
\emph{non-trivial} if $|\SCC_P(a)| > 1$. A program
$P$ is \emph{head-cycle free (HCF)}\/ if every rule $r\in P$ and distinct $a\neq b \in H(r)$ fulfill $b \notin \SCC_P(a)$.

Clearly, a tight program has no non-trivial SCCs and are HCF, while a non-tight program may or may not be HCF. 
In the sequel, we assume that all programs are HCF; 
while this excludes some programs, it still allows us with minimization to embrace the class of $\mathsf{NP}$-optimization problems\footnote{see \url{https://complexityzoo.net/Complexity_Zoo}} as follows from \citep{DBLP:journals/amai/EiterFFW07},  and thus most problems appearing in practice.

Recall that for an ASP program $P$, an interpretation $I$ is a \emph{supported model} of $P$ if (1) $I\models P$ and (2) for each $a \in I$ some rule $r \in P$ exists such that $I\models B(r)$, $a \in H(r)$, and $H(r)\cap I = \{a\}$ if $r$ is disjunctive.
%
For tight ASP programs, supported models and answer sets coincide~\citep{DBLP:journals/tplp/ErdemL03}.
For non-tight HCF programs, we consider \emph{ranked supported models} 
as follows. 

We assume that $\varsP$ includes for each atom
$a \in \atomsP$ a variable $\ell_a$ not occurring in $P$; intuitively, it denotes the \emph{rank} (or \emph{level}) of $a$. 
An e-interpretation $\mathcal{I}=\langle I, \delta \rangle$ is \emph{ranked}, if for each $a \,{\in}\, \atomsP$, $\delta(\ell_a)  \,{=}\,\infty$ if $a\,\not\in I$ and $\delta(\ell_a)<\infty$ otherwise.
%
A rule $r$ 
\emph{supports} atom $a \,{\in}\, I$, if $a \,{\in}\, H(r)$, $H(r) \,{\cap}\, I = \{ a \}$ if $r$ is disjunctive, and $B(r)$ fulfills: 1) if $B(r)$ is 
normal (form (\ref{eqn:normal})),  
(i) $\delta(\ell_{b_i}) < \delta(\ell_a)$ for each 
$i \in [1,k]$ and (ii) $b_j \not\in I$ for each 
$j \in (k,n]$
and 2) if $B(r)$ is a weighted rule body, 
\begin{equation}
\label{eq:r-support}
    l\ \leq \sum_{b\in B^+(r), \delta(\ell_{b}) < \delta(\ell_a)}  \mysummanddispq{-0.75cm}{w_b^r} + \sum_{b\in B^-(r), b\not\in I} \mysummanddispq{-0.5cm}{w_b^r}\,.
\end{equation}
\begin{definition}
A \emph{ranked supported model} of program $P$ is a ranked interpretation $\mathcal{I}=\langle I, \delta \rangle$ of $P$ such that $\mathcal{I} \models P$ and each $a \in I$ is supported by some rule $r \in P$.
\end{definition}
We then obtain:
\begin{proposition}\label{prop:as_equiv_rank_supp}
For every HCF program $P$, $I\in \AS(P)$ iff $\langle I, \delta \rangle$ is a ranked supported model of $P$ for some 
$\delta$.
\end{proposition}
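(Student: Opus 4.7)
The plan is to establish the proposition by two directions: first, assume $I \in \AS(P)$ and build a suitable level assignment $\delta$; second, assume a ranked supported model $\langle I, \delta \rangle$ and show $I$ is a $\subseteq$-minimal model of the FLP reduct $P^I$. The HCF assumption will be used only to handle disjunctive heads: when a rule $r$ supports $a$, the condition $H(r)\cap I=\{a\}$ needs to follow from minimality arguments, and HCF rules out problematic inter-head dependencies.

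For the forward direction ($\Rightarrow$), let $I$ be an answer set, i.e., a $\subseteq$-minimal model of $P^I$. I would define $\delta$ inductively, following the standard ``derivation level'' construction: set $\delta(\ell_a)=\infty$ for $a\notin I$, and for $a\in I$, let $\delta(\ell_a)$ be the least $n$ such that $a$ appears in the $n$-th iteration of a fixpoint operator $T_{P^I}$ adapted to HCF disjunctive programs. Concretely, at each stage we include every $a\in I$ for which some rule $r\in P$ satisfied by $I$ has all positive body atoms of strictly lower level, all negative body atoms outside $I$, and (for disjunctive $r$) $H(r)\cap I=\{a\}$. Termination at $I$ follows because otherwise the subset reached in the limit would give a strictly smaller model of $P^I$, contradicting minimality. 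HCF is what guarantees that in the disjunctive case we can always find such a uniquely firing head atom: if two distinct $a,b\in H(r)\cap I$ existed, removing one would still give a model of $P^I$ since $a$ and $b$ are in different SCCs and cannot mutually support each other. Finally, one verifies that the constructed $\delta$ together with $I$ makes every $a\in I$ supported by the rule that introduced it, including the weighted-body case via the inequality~(\ref{eq:r-support}).

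For the backward direction ($\Leftarrow$), assume $\langle I,\delta\rangle$ is a ranked supported model. Clearly $I\models P$. Suppose toward contradiction that $J\subsetneq I$ is a model of $P^I$, and choose $a\in I\setminus J$ minimizing $\delta(\ell_a)$ (well-defined since all $a\in I$ have finite rank). Let $r$ be a rule supporting $a$. For a normal body, every $b\in B^+(r)$ has $\delta(\ell_b)<\delta(\ell_a)$, so $b\in I$ and, by minimality of $\delta(\ell_a)$ among $I\setminus J$, also $b\in J$; every $b\in B^-(r)$ lies outside $I\supseteq J$. Hence $J$ satisfies $B(r)$ in $P^I$, forcing $a\in J$ via the disjunctive-head rule (using $H(r)\cap I=\{a\}$ and $J\subseteq I$) or via the choice-rule head $a\leftarrow B^+(r)$ in the reduct---contradiction. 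For a weighted body one uses $l'=\max(0,l-\sum_{b\in B^-(r),b\notin I}w_b^r)$ together with~(\ref{eq:r-support}) to conclude $l'\leq\sum_{b\in B^+(r)\cap J}w_b^r$, again forcing $a\in J$.

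The main obstacle is the forward direction: producing a level assignment that simultaneously satisfies the uniqueness requirement $H(r)\cap I=\{a\}$ for disjunctive supports and the strict decrease $\delta(\ell_b)<\delta(\ell_a)$ along positive bodies. Both follow from HCF together with minimality of $I$ in $P^I$, but the argument must be stratified carefully through the SCCs of $\mathit{DG}^+_P$ restricted to $I$, showing that within each non-trivial SCC the atoms can be well-ordered by derivation depth without a circular deadlock; the weighted-body case additionally requires that the ``split'' of the bound $l$ into the positive and negative contributions in~(\ref{eq:r-support}) mirrors precisely the reduct's $l'$, which is where the formulation of ranked support was tailored to match the FLP reduct.
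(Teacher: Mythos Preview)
The paper's own proof is a two-line sketch that defers entirely to the literature: it cites Ben-Eliyahu and Dechter for HCF disjunctive programs and Janhunen, Niemel\"a and Sevalnev for the extension to weight rules, and observes that choice rules normalize away. Your proposal instead spells out a direct argument, which is essentially what those references establish; in particular your backward direction (pick $a\in I\setminus J$ of minimal rank and derive a contradiction from its supporting rule) is correct.

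The forward direction, however, has a genuine gap in how you invoke HCF. The claim ``if two distinct $a,b\in H(r)\cap I$ existed, removing one would still give a model of $P^I$ since $a$ and $b$ are in different SCCs'' is false: take $P=\{\,a\mid b\leftarrow,\ a\leftarrow,\ b\leftarrow\,\}$, which is HCF with answer set $I=\{a,b\}$ and $|H(r)\cap I|=2$ for the disjunctive rule, yet neither $\{a\}$ nor $\{b\}$ is a model of $P^I=P$. More importantly, even were that claim true, it would not show that the limit $L$ of your fixpoint is a model of $P^I$: an atom can be stuck in $I\setminus L$ because some positive body atom is also stuck, not because every rule with it in the head has a non-singleton head intersection, so a one-atom-removal argument does not cover the general case. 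The clean fix---and the route the cited papers take---is to shift $P$ to a normal program $P^{\mathit{sh}}$ first; HCF guarantees $\AS(P)=\AS(P^{\mathit{sh}})$, the level-mapping construction for normal (weight) programs then applies without the head complication, and a supporting shifted rule for $a$ corresponds exactly to a rule $r\in P$ with $H(r)\cap I=\{a\}$. A minor additional point: your weighted-body inequality manipulations tacitly assume nonnegative weights (enlarging an index set must not decrease the sum), which the paper does not explicitly impose.
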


We can refine this characterization by considering the modular structure of answer sets along the SCCs. 
A ranked interpretation $\langle I, \delta \rangle$ of program $P$ is \emph{modular},  if  each $a \,{\in}\, I$ fulfills $\delta(\ell_a) \leq |\SCC_P(a)|$; hence true atoms in trivial components must have rank~1.
We say a rule $r$ 
{\em scc-supports}\/ $a \in I$
by changing  in "$r$ supports $a$" above for $B(r)$ condition (i) in case 1) to "$b_i\in I$ for each 
$i \in [1,k]$ where $\delta(\ell_{b_i}) < \delta(\ell_a)$ if $b_i\in \SCC_P(a)$", and  condition (\ref{eq:r-support}) in case 2)  to
$$l\ \leq \sum_{b \in B^+(r)\setminus\SCC_P(a)} \mysummanddispq{-0.75cm}{w_b^r} + \sum_{b \in B^+(r)\cap\SCC_P(a),  \delta(\ell_{b_i}) < \delta(\ell_a)} \mysummanddispq{-0.95cm}{w_b^r} + \sum_{b\in B^-(r)\setminus I} \mysummanddisp{-0.25cm}{w_b^r}\,,
$$
and define scc-supported models analogous to supported models. We then can show:
\begin{proposition}\label{prop:as_equiv_mod_rank_supp}
For every HCF program $P$, $I\in \AS(P)$ iff $\langle I, \delta \rangle$ is a modular ranked scc-supported model of $P$ for some level assignment $\delta$.
\end{proposition}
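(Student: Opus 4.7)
The plan is to reduce the statement to Proposition~\ref{prop:as_equiv_rank_supp} by showing that, for any HCF program $P$ and interpretation $I$, a level assignment $\delta$ making $\langle I,\delta\rangle$ a ranked supported model of $P$ exists if and only if some $\delta'$ makes $\langle I,\delta'\rangle$ a modular ranked scc-supported model of $P$. Combined with Proposition~\ref{prop:as_equiv_rank_supp}, this yields the claim. I would prove the two directions by explicit, invertible transformations of the level assignment that respectively weaken and strengthen the support conditions.

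For the forward direction, I would first observe that the scc-support conditions are strictly weaker than the ordinary support conditions, since the strict rank comparison on positive body atoms is dropped outside $\SCC_P(a)$. Hence any rule that supports $a\in I$ under $\delta$ already scc-supports it under the same $\delta$. To enforce modularity, I rerank inside each SCC $S$ independently: sort the atoms of $I\cap S$ according to $\delta(\ell_\cdot)$ and let $\delta'(\ell_a)$ be the position of $a$ in this ordering, so $\delta'(\ell_a)\in[1,|I\cap S|]\subseteq[1,|S|]$; put $\delta'(\ell_a)=\infty$ for $a\notin I$. The strict relative order of atoms inside each SCC is preserved, so all scc-support conditions, which compare ranks only within the SCC, remain valid under $\delta'$.

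For the converse, I fix a topological order $S_1,\dots,S_m$ of the SCCs of $\mathit{DG}^+_P$ with the property that whenever $a\in H(r)$, $b\in B^+(r)$, and $\SCC_P(a)\neq\SCC_P(b)$, then $\SCC_P(b)$ precedes $\SCC_P(a)$. For $a\in I\cap S_i$, I set $\delta'(\ell_a) := \sum_{j<i}|S_j| + \delta(\ell_a)$, and $\delta'(\ell_a):=\infty$ otherwise. Modularity $\delta(\ell_a)\leq|S_i|$ keeps these values finite and, crucially, any $b\in I$ in an SCC strictly preceding $\SCC_P(a)$ satisfies $\delta'(\ell_b)\leq \sum_{j<i}|S_j|<\delta'(\ell_a)$. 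Intra-SCC strict comparisons and negative body conditions are unchanged by a constant additive shift, so every rule that scc-supports $a$ under $\delta$ in fact supports it under $\delta'$ in the stronger sense.

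The main obstacle, in both directions, lies in the weighted body case, where~(\ref{eq:r-support}) must be matched between its two formulations. The identity to verify is that the set $\{b\in B^+(r):\delta'(\ell_b)<\delta'(\ell_a)\}$ driving the ordinary support sum coincides with $(B^+(r)\setminus\SCC_P(a))\cup\{b\in B^+(r)\cap\SCC_P(a):\delta(\ell_b)<\delta(\ell_a)\}$ appearing in the scc-support sum, once contributions of atoms outside $I$ are treated consistently. The topological shift (backward) and the intra-SCC rerank (forward) are designed precisely so that this matching holds, making the two inequalities equivalent and closing the argument via Proposition~\ref{prop:as_equiv_rank_supp}.
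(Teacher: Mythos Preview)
Your reduction to Proposition~\ref{prop:as_equiv_rank_supp} via explicit rank transformations is correct and considerably more detailed than the paper's own proof, which is a one-line sketch citing Janhunen, Niemel\"a, and Sevalnev (LPNMR~2009) for normal programs with weight rules and asserting that the result ``can again easily be adapted for our fragment.'' Your argument makes the adaptation concrete: the topological shift in the backward direction lifts modular scc-ranks to global ranks compatible with~(\ref{eq:r-support}), and the intra-SCC compression in the forward direction restores modularity while preserving exactly the rank comparisons that matter for scc-support. What your route buys is a self-contained argument that does not require the reader to consult or adapt the cited work; what the paper's route buys is brevity.

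One small caveat in your forward direction: the claim that scc-support is \emph{weaker} than ordinary support for weighted bodies uses monotonicity of the weighted sum, which needs the weights $w_b^r$ to be non-negative. The scc-support sum adds the terms for $b\in B^+(r)\setminus\SCC_P(a)$ with $\delta(\ell_b)\geq\delta(\ell_a)$ that ordinary support omits, and with negative weights these extra terms can push the sum below $l$. Non-negative weights are the standard assumption for weight rules and are implicit in the cited result underlying Proposition~\ref{prop:as_equiv_rank_supp} as well, so this is not a gap relative to the paper; but if you want the argument to stand for arbitrary integer weights, you can avoid the monotonicity step by running the topological-shift construction in the forward direction too, so that the two index sets driving~(\ref{eq:r-support}) and its scc-variant coincide exactly rather than one merely containing the other.
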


\vspace*{-\baselineskip}
\section{Translation}

In this section, we describe our translation of a (C)ASP program $P$ into a constraint program.
We assume that the considered program adheres to the following property.
\begin{definition}\label{def:partially_shifted}
    A HCF program $P$ is called \emph{partially shifted} if every rule $r\in P$ with a weighted body $B(r)$ fulfills either $|H(r)|\leq 1$ or $H(r)\cap \SCC_P(a) =\emptyset$ for every $a \in B^+(r)$.
\end{definition}

The property is named so because any HCF program can be transformed into partially shifted form by applying the well-known \emph{shifting} operation~\citep{DBLP:journals/amai/Ben-EliyahuD94} to the violating rules, resulting in two rules that satisfy the property.


For CASP programs, the translation simply includes the theory atoms as reified constraints and the domain constraints are used as bounds of the introduced variables. 
If there are no bounds, we simply declare the variables as integer and delegate the handling of unbounded variables to the underlying FlatZinc solver. 
Minimization statements must be combined into a single objective, which is trivial in absence of priority levels. For 
priority level minimization, we rely on well-known methods to compile them away. 

\subsection{Translation Constraints}

The translation, $\Tr(P)$ consists of serveral groups of constraints, which encode different aspects of an answer set of a (C)ASP program $P$:
\begin{compactitemize}
    \item ranking constraints $\TrRk(P)$, which encode the level ranking constraint;
    \item rule body constraints $\mathit{TrBd(r)}$, which encode the satisfaction of rules bodies;
    \item rule head constraints $\mathit{TrHd(r)}$,  which must be satisfied when rule bodies fire; and 
    \item supportedness constraints $\mathit{TrSupp(P)}$, ensuring that true atoms are supported. 
\end{compactitemize}
The complete translation for a program $\Tr(P)$ is then given by
$$\textstyle\Tr(P) = \TrRk(P)  \cup \bigcup_{r\in P} \mathit{TrRule(r)} \cup \mathit{TrSupp(P)}\,,$$
where $\mathit{TrRule(r)} = \mathit{TrBd(r)} \cup \mathit{TrHd(r)}$ is the combined body and head translation of  $r$.

\mysubsubsection{Ranking Constraints $\bm{\mathit{TrRk(P)}}$.}
First, we introduce some auxiliary atoms to handle the level ranking constraints, which follows the formulation given by \cite{DBLP:journals/corr/abs-2308-15888}.
Note that we assume that there are no tautological rules, i.e., $\mathit{DG}^+_P$ has no self-loops.

For each atom $a$ such that $|\SCC_P(a)|>1$, we introduce an integer variable $\ell_a$ with domain $[1,|\SCC_P(a)|+1]$
and add the following reified constraint to the translation:
\begin{align}
    \ell_a \leq |\SCC_P(a)| \;\leftrightarrow a\,. \label{eqn:rank_pos_level}
\end{align}
The constraint enforces that atom $a$ has rank $|\SCC_P(a)|+1$ iff $a$ is set to false.
Now, for all $b\in \SCC_P(a)$ such that $\mathit{DG}^+_P$ has an edge  $(a,b)$, we add a boolean auxiliary variable $\mathit{dep}_{a,b}$ and 
\begin{align}
     \ell_a - \ell_b \geq 1\ \leftrightarrow \mathit{dep}_{a,b} \label{eqn:dep_var}
\end{align}
which ensures that $\mathit{dep}_{a,b}$ is true iff $a$ has higher rank than $b$. 
The rank defined by these constraints is not \emph{strict}, i.e., an answer set may have multiple rankings. 
To enforce strictness, we add 

\begin{minipage}{0.45\textwidth}
\begin{align}
     \ell_a - \ell_b \geq 2\ \leftrightarrow \mathit{y}_{a,b} \label{eqn:def_gap_aux} 
\end{align}
\end{minipage}
\begin{minipage}{0.45\textwidth}
\begin{align}
     a \land b \land \mathit{y}_{a,b}\ \leftrightarrow \mathit{gap}_{a,b} \label{eqn:def_gap}
\end{align}
\end{minipage}

\medskip
\noindent
where $\mathit{gap}_{a,b}$ is a Boolean 
variable indicating a gap in the ranks of true atoms $a$ and $b$. 

We denote the ranking constraints (\ref{eqn:rank_pos_level})--(\ref{eqn:def_gap}) by $\mathit{TrRk(P)}$; if $P$ is tight, $\mathit{TrRk(P)}=\emptyset$.

\mysubsubsection{Body Translation \bm{$\mathit{TrBd(r)}$}.}
Next, for each $r \in P$, we perform a body translation $\mathit{TrBd(r)}$.
Suppose first that $r$ is a constraint. If $B(r)$ is normal, i.e., 
of form (\ref{eqn:normal}), then we add the clause
\begin{align}\label{eqn:constraint_normal}
\textstyle        \bigvee_{b \in B^+(r)} \neg b \vee \bigvee_{b \in B^-(r)} b\,,
\end{align}
whereas if $B(r)$ is weighted (\ref{eqn:weighted}), we add the constraint
\begin{align}\label{eqn:constraint_weighted}
\textstyle   \sum_{b \in B^+(r)} b \cdot w_b^r + \sum_{b \in B^-(r)} \naf b \cdot w_b^r \ \leq \ l - 1\,.
\end{align}
Note that 
this is a pseudo-Boolean constraint, 
which our intended formalism does not support, and likewise Boolean variables in linear constraints. 
To circumvent this, we introduce new 0-1 integer variables for each literal and link their values; for better readability, we will leave this implicit.
We similarly use auxiliary variables for negated atoms  in conjunctions 
and leave this also implicit.

If $r$ is not a constraint, 
we 
divide $H(r)$ into $T= \{ a \in H(r) \mid \SCC_P(a) \cap B^+(r) = \emptyset \}$ and $H(r)\setminus T$, where 
$T$ are the head atoms that are locally tight.
If $T\neq \emptyset$, we perform the standard Clark's completion~\citep{DBLP:conf/adbt/Clark77} to $r$, i.e., if $B(r)$ is normal, we add 
\begin{align}\label{eqn:tight_body_normal}
\textstyle      \bigwedge_{b \in B^+(r)} b \land \bigwedge_{b \in B^-(r)} \neg b \ \leftrightarrow \ \bd{r}{}\,;
\end{align}
and if $B(r)$ is weighted, we add
\begin{align}\label{eqn:tight_body_weighted}
\textstyle         \sum_{b \in B^+(r)} b \cdot w_b^r + \sum_{b \in B^-(r)} \naf b \cdot w_b^r  \geq  l \leftrightarrow \ \bd{r}{}\,.
\end{align}
Furthermore, for each $a \in T$ such that $|\SCC_P(a)| > 1$, we add the following constraint, which enforces that $a$ has rank~1 if both $a$ and $\bd{r}{}$ are true, where $s_a = |\SCC_P(a)|+1$:
\begin{align}
     s_a \cdot \bd{r}{} \ + s_a \cdot a + \ 1 \cdot \ell_a \ \leq \ 2 \cdot s_a + 1\,,
     \label{eqn:normal_ext_rk_one}
\end{align}
%
and for each $a \in H(r)\setminus T$, 
we add constraints as follows: 
for a normal  $B(r)$ of form  (\ref{eqn:normal}), 
\begin{align}
      \bigwedge_{b \in B^+(r) \setminus \SCC_P(a)} \mysummanddispq{-0.5cm}{b} \land \bigwedge_{b \in B^+(r)\cap  \SCC_P(a)} \mysummanddispq{-0.5cm}{\mathit{dep}_{a,b}} \land\bigwedge_{b \in B^-(r)} b \ \leftrightarrow \ \bd{r}{a} \label{eqn:non_tight_body} \\
      \neg \bd{r}{a} \ \lor\ \bigvee_{b \in B^+(r)\cap \SCC_P(a)} \mysummanddisp{-0.75cm}{ \neg \mathit{gap}_{a,b}}\label{eqn:non_tight_normal_gap}\,,
\end{align}
whereas for a weighted $B(r)$ of form (\ref{eqn:weighted}), we add
\begin{align}
    \sum_{b \in B^+(r) \setminus \SCC_P(a)} \mysummanddisp{-0.75cm}{b\cdot w_b^r} \  + \sum_{b \in B^-(r)}   \mysummanddisp{-0.25cm}{\naf b \cdot w_b^r} \ \geq\ l  \ \leftrightarrow \ \mathit{ext}_r^a \label{eqn:body_non_tight_weighted_ext} \\
    \sum_{b \in B^+(r) \setminus \SCC_P(a)} \mysummanddisp{-0.75cm}{b \cdot w_b^r} \ + \sum_{b \in B^+(r) \cap \SCC_P(a)}  \mysummanddisp{-0.75cm}{ \mathit{dep}_{a,b} \cdot w_b^r} 
    + \sum_{b \in B^-(r)} \mysummanddisp{-0.25cm}{\naf b \cdot w_b^r} \ \geq\ l  \ \leftrightarrow \ \mathit{int}_r^a \label{eqn:body_non_tight_weighted_int} \\
    \sum_{b \in B^+(r) \setminus \SCC_P(a)} \mysummanddisp{-0.75cm}{b \cdot w_b^r} \ + \sum_{b \in B^+(r) \cap \SCC_P(a)} \mysummanddisp{-0.75cm}{\mathit{gap}_{a,b} \cdot w_b^r} 
    + \sum_{b \in B^-(r)} \mysummanddisp{-0.25cm}{\naf b \cdot w_b^r} \ \leq\ l-1  \ \leftrightarrow \ \mathit{aux}_r^a \label{eqn:non_tight_weighted_gap} \\[-3pt]
    \mathit{ext}_r^a \lor \mathit{aux}_r^a \lor \neg \mathit{int}_r^a  \label{eqn:non_tight_weighted_gap_aux} \\
    s_a \cdot \mathit{ext}_r^a \ + s_a \cdot a + \ 1 \cdot \ell_a \ \leq \ 2\cdot s_a + 1 \label{eqn:weighted_ext_rk_one} \\
    \mathit{ext}_r^a \lor \mathit{int}_r^a \ \leftrightarrow \ \bd{r}{a}\,.
    \label{eqn:body_non_tight_weighted_or}
\end{align}

Overall, the rule body translation follows the intuition of the original completion by \cite{DBLP:conf/adbt/Clark77}. Namely, we introduce an auxiliary variable for each rule and constrain it to be true iff the rule body is true.
For each head atom $a$ from the SCC of some body atom,
we follow the approach by \cite{DBLP:journals/corr/abs-2308-15888} and introduce an auxiliary atom $\bd{a}{r}$ for the pair of $a$ and the rule body of $r$. The atom $\bd{a}{r}$ is 
set true exactly when the rule body ``fires'' without 
need of 
cyclic support, which is achieved by considering the dependency variables instead of the atoms, cf.\  
(\ref{eqn:non_tight_body}).
For weighted rule bodies, we follow \cite{DBLP:journals/corr/abs-2308-15888} and introduce auxiliary variables for external (\ref{eqn:body_non_tight_weighted_ext}) and internal (\ref{eqn:body_non_tight_weighted_int}) support of a rule body and a head atom. The former can be seen as the fact that the rule body fires regardless of any atoms in the  SCC of the head atom, while the latter expresses 
rule firing despite some potentially cyclic dependencies.
Constraint (\ref{eqn:body_non_tight_weighted_or}) 
defines an auxiliary variable denoting that the rule supports the head atoms, which is true whenever internal or external support exists.
The constraints  (\ref{eqn:normal_ext_rk_one}), (\ref{eqn:non_tight_normal_gap}), (\ref{eqn:non_tight_weighted_gap_aux}), and (\ref{eqn:weighted_ext_rk_one}) ensure a strict ranking, i.e., no gaps in the level mapping.


\mysubsubsection{Head Translation \bm{$\TrHd(r)$}.}
%
To capture the semantics of a rule $r$, i.e., if $B(r)$ holds then $H(r)$ hold as well, we need further constraints in the translation $\TrHd(r)$.

For each $a\,{\in}\,H(r)$, we use a new Boolean variable $\supp{r}{a}$ to denote that $r$ supports $a$.
Suppose first $r$ is a disjunctive rule and $|H(r)| > 1$. Recall that
by our assumption, every $a \in H(r)$ is locally tight, 
so we only need to consider the single body variable $\mathit{bd}_r$.

Inspired by \citeauthorS{DBLP:conf/ijcai/AlvianoD16} disjunctive completion, we add for each $a_i \in H(r)$:
\begin{align}
\textstyle    \mathit{bd}_r \land \bigwedge_{a_j\in H(r), i \neq j } \neg a_j  \ \  \leftrightarrow \ \ \supp{r}{a} \label{eqn:disjunctive_supp}
\end{align}
Furthermore, we add the following clause ensuring that the rule is satisfied:
\begin{align}
\textstyle    \bigvee_{a \in H(r)} a \ \lor \neg \mathit{bd}_r \label{eqn:disj_rule_sat}
\end{align}
%
%
Otherwise, $r$ is a choice rule or $|H(r)| = 1$. For each $a \in H(r)$  we add the constraint
\nop{******* original ****
\begin{numcases}
 \   \mathit{sp}_r^{a} \leftrightarrow \mathit{bd}_r & \text{ if } $\SCC_P(a) \cap B^+(r) = \emptyset$
 \label{eqn:body_tight_supp_equiv}
 \\
\      \mathit{sp}_r^{a} \leftrightarrow \bd{r}{a} & \text{ otherwise}.\label{eqn:body_non_tight_supp_equiv}
\end{numcases}
****** original *******}
\linebreak
\begin{minipage}{0.55\textwidth}
\begin{align}
     \mathit{sp}_r^{a} \leftrightarrow \mathit{bd}_r & &\text{ if } \SCC_P(a) \cap B^+(r) = \emptyset
 \label{eqn:body_tight_supp_equiv}
 \end{align}
\end{minipage}
\begin{minipage}{0.45\textwidth}
\begin{align}
\text{ and }\quad   \mathit{sp}_r^{a} \leftrightarrow \bd{r}{a} & & \text{ otherwise}. \label{eqn:body_non_tight_supp_equiv}
\end{align}
\end{minipage}
\vspace{0pt}

Note that these constraints define the support variables as the respective rule bodies, and thus would make them 
redundant. However, we keep them to ease readability and for formulating further constraints.
Furthermore, if $r$ is not a choice rule, we add:
\begin{align}
    \supp{r}{a} \rightarrow a \label{eqn:normal_supp}
\end{align}


\mysubsubsection{Supporteness Constraints \bm{$\mathit{TrSupp(P)}$}.}
It remains to encode the supportedness condition of a model.
%
This is achieved by adding for each $a \in \atomsP \setminus \linAtomsP$ the following clause to $\mathit{TrSupp(P)}$:
\begin{align}
 \textstyle    \bigvee_{r \in P, a \in H(r)} \supp{r}{a} \lor \neg a \,.\label{eqn:support_clause}
\end{align}

\subsection{Correctness}

That 
$\Tr(P)$ captures the answer sets of a CASP program $P$ faithfully in a 1-1 correspondence is shown in several steps. We view e-interpretations as models of $\Tr(P)$ with the usual semantics.
The following lemma is useful (cf.\ Def.~\ref{def:partially_shifted} for 
partially shifted programs).
\begin{lemma}\label{lem:trans_mod_rank_supp}
    For every partially shifted HCF program $P$,  if $\langle I , \delta \rangle \models \Tr(P)$ then $\langle I\cap \atomsP, \delta' \rangle$ is a modular ranked scc-supported model of $P$, where $\delta'(\ell_a)=1$ for $a\in I$ s.t. $|\SCC_P(a)|=1$ and $\delta'(\ell_a)=\infty$ for $a \in \atomsP \setminus I$.
\end{lemma}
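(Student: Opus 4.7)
The plan is to check three defining properties of $\mathcal{I}' := \langle I \cap \atomsP, \delta' \rangle$: (a) rankedness and modularity; (b) $\mathcal{I}' \models r$ for every $r \in P$; and (c) each $a \in I \cap \atomsP$ is scc-supported by some rule of $P$. Since the lemma only fixes $\delta'$ on trivial-SCC atoms and on atoms outside $I$, I set $\delta'(\ell_a) := \delta(\ell_a)$ on the remaining atoms (true atoms in non-trivial SCCs); by constraint (\ref{eqn:rank_pos_level}) this value already lies in $[1, |\SCC_P(a)|]$, which settles (a).

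For (b), I fix $r \in P$, assume $\mathcal{I}' \models B(r)$, and derive $\mathcal{I}' \models H(r)$. When at least one head atom is locally tight ($T \neq \emptyset$), the Clark-style body completion (\ref{eqn:tight_body_normal})/(\ref{eqn:tight_body_weighted}) transfers $\mathcal{I}' \models B(r)$ into $\bd{r}{} \in I$; then (\ref{eqn:normal_supp}) via (\ref{eqn:body_tight_supp_equiv}) produces a head atom for singleton or choice heads, while (\ref{eqn:disj_rule_sat}) does so for proper disjunctions (where the partial-shifting convention already guarantees $T = H(r)$). Constraint-rule bodies immediately contradict (\ref{eqn:constraint_normal})/(\ref{eqn:constraint_weighted}). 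The only remaining sub-case, $|H(r)| = 1$ with $T = \emptyset$, is handled by contraposition: assuming the unique head atom $a$ lies outside $I$ forces $\delta(\ell_a) = |\SCC_P(a)| + 1$ via (\ref{eqn:rank_pos_level}), so every $b \in B^+(r) \cap \SCC_P(a)$ in $I$ gives $\ell_a - \ell_b \geq 1$ and activates $\mathit{dep}_{a,b}$ through (\ref{eqn:dep_var}); the body translation (\ref{eqn:non_tight_body}) or (\ref{eqn:body_non_tight_weighted_or}) then activates $\bd{r}{a}$, (\ref{eqn:body_non_tight_supp_equiv}) activates $\supp{r}{a}$, and (\ref{eqn:normal_supp}) finally yields $a \in I$ -- a contradiction.

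Part (c) is the heart of the argument. For each $a \in I \cap \atomsP$, clause (\ref{eqn:support_clause}) provides a rule $r$ with $a \in H(r)$ and $\supp{r}{a}$ true. The case split is on locally tight vs.\ non-locally-tight head atom, crossed with normal vs.\ weighted body. The locally tight cases close quickly: (\ref{eqn:body_tight_supp_equiv}) gives $\bd{r}{}$, the completion replays $B(r)$ in $\mathcal{I}'$, and when the SCC is non-trivial, (\ref{eqn:normal_ext_rk_one}) or (\ref{eqn:weighted_ext_rk_one}) pins $\ell_a = 1$ so the in-SCC sum in the scc-support condition is empty. The non-locally-tight normal case unpacks (\ref{eqn:non_tight_body}) and observes that $\mathit{dep}_{a,b}$ true, combined with $a \in I$ and (\ref{eqn:rank_pos_level}), forces both $b \in I$ and $\ell_b < \ell_a$ when $b \in \SCC_P(a)$, which is verbatim the scc-support clause.

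The delicate sub-case I expect to require the most care is the non-locally-tight weighted body. Here (\ref{eqn:body_non_tight_weighted_or}) yields $\mathit{ext}_r^a \lor \mathit{int}_r^a$. If $\mathit{ext}_r^a$ holds, (\ref{eqn:weighted_ext_rk_one}) pins $\ell_a = 1$ so the in-SCC summand of the scc-support inequality vanishes and (\ref{eqn:body_non_tight_weighted_ext}) supplies the external bound. If only $\mathit{int}_r^a$ holds, rewriting $\mathit{dep}_{a,b}$ as $\ell_b < \ell_a$ via (\ref{eqn:dep_var}) turns (\ref{eqn:body_non_tight_weighted_int}) into the weighted scc-support inequality itself, with the side observation that $\mathit{dep}_{a,b}$ true implies $b \in I$ for $b \in \SCC_P(a)$ (else $\ell_b = |\SCC_P(a)| + 1 > \ell_a$, contradicting $\mathit{dep}_{a,b}$). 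The auxiliary gap clauses (\ref{eqn:non_tight_normal_gap}) and (\ref{eqn:non_tight_weighted_gap_aux}) do not enter the scc-support check directly but enforce strict ranking, compatible with the modularity already established. Assembling the four sub-cases gives (c) and completes the lemma.
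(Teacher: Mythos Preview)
Your overall strategy coincides with the paper's: it, too, separates the argument into ``$I\cap\atomsP$ is a model of $P$'' and ``every $a\in I$ is scc-supported'', factoring these out as auxiliary lemmas (Lemmas~\ref{lem:trans_rule_mod} and~\ref{lem:trans_supp}), which in turn rest on small lemmas about the body variables $\bd{r}{}$ and $\bd{r}{a}$ that you inline. Your handling of (a), of (b), and of the non-locally-tight sub-cases of (c) matches the paper's reasoning closely.

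There is, however, a genuine gap in your locally tight sub-case of (c). You write that (\ref{eqn:body_tight_supp_equiv}) turns $\supp{r}{a}$ into $\bd{r}{}$, but that constraint is only emitted when $r$ is a choice rule or $|H(r)|=1$. For a proper disjunction with $|H(r)|>1$ the relevant defining constraint is (\ref{eqn:disjunctive_supp}); more importantly, scc-support of $a$ by a \emph{disjunctive} rule requires $H(r)\cap I=\{a\}$, and you never verify this. The fix is straightforward: from $\supp{r}{a}$ and (\ref{eqn:disjunctive_supp}) one reads off both $\bd{r}{}$ and $\neg a_j$ for every $a_j\in H(r)\setminus\{a\}$, so together with $a\in I$ one obtains $H(r)\cap I=\{a\}$ (this is exactly the content of the paper's Lemma~\ref{lem:disj_excl}). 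A second, minor point: when $a\in T$ one has $B^+(r)\cap\SCC_P(a)=\emptyset$ by definition, so the in-SCC part of the scc-support condition is vacuous and you do not need (\ref{eqn:normal_ext_rk_one}) or (\ref{eqn:weighted_ext_rk_one}) here; those constraints serve the strict-ranking/1-1 correspondence result, not soundness.
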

Based on this lemma and Proposition~\ref{prop:as_equiv_mod_rank_supp}, we obtain that the translation is sound.

\begin{theorem}[Soundness of $\Tr(P)$]
\label{theo:tr-soundness}
 For every partially shifted HCF program $P$, if  $\langle I, \delta \rangle \models \Tr(P)$ then  $\langle I', \delta' \rangle\in \AS(P)$, where $I' = I \cap \atomsP$ and $\delta'(v)=\delta(v)$ for each $v\in \varsP$.
\end{theorem}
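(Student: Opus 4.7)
The plan is to decompose the claim into its ASP part (condition~(1) of the constraint answer set definition) and its CASP-specific part (conditions~(2) and~(3) on domain and linear constraints), and to handle each separately. The heavy lifting for the ASP part is already packaged in Lemma~\ref{lem:trans_mod_rank_supp} and Proposition~\ref{prop:as_equiv_mod_rank_supp}; the CASP part will fall out directly from how $\Tr(P)$ embeds the domain and linear constraints verbatim.

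For the ASP part, I start from $\langle I, \delta\rangle \models \Tr(P)$ and invoke Lemma~\ref{lem:trans_mod_rank_supp}, yielding that $\langle I', \delta''\rangle$ is a modular ranked scc-supported model of $P$, where $\delta''$ is the specific level assignment from the lemma (equal to $1$ for trivial-SCC atoms in $I'$, $\infty$ for atoms outside $I'$, and as given by $\delta$ on non-trivial SCCs). Crucially, support clauses~(\ref{eqn:support_clause}) are only generated for atoms in $\atomsP \setminus \linAtomsP$, so no supportedness is demanded of linear atoms; this is precisely the effect of adjoining choice rules $\{a\} \leftarrow$ for each $a \in \linAtomsP$. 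Setting $P_c := P \cup \{\{a\} \leftarrow \mid a \in \linAtomsP\}$, the same witness shows that $\langle I', \delta''\rangle$ is a modular ranked scc-supported model of $P_c$ as well, and $P_c$ remains HCF and partially shifted since the added rules have empty body and trivial SCCs. Proposition~\ref{prop:as_equiv_mod_rank_supp} applied to $P_c$ then gives $I' \in \AS(P_c)$, which is condition~(1) of the constraint answer set definition.

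For the CASP-specific conditions, condition~(2) follows because every domain constraint $v \in [l,u]$ of $P$ is represented in $\Tr(P)$ as an integer bound on $v$, so $\langle I, \delta\rangle \models \Tr(P)$ forces $\delta'(v) = \delta(v) \in [l,u]$. Condition~(3) follows because each linear constraint of form~(\ref{eqn:lin}) is included in $\Tr(P)$ as a reified equivalence $a \leftrightarrow \sum v_i \cdot w_i \circ g$; its satisfaction by $\langle I, \delta\rangle$ gives $a \in I$ iff $\sum \delta(v_i) \cdot w_i \circ g$, which, since $a \in \atomsP$ and $\delta' = \delta$ on $\varsP$, transfers to $I'$ and $\delta'$. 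Global constraints (\ref{eqn:alldiff})--(\ref{eqn:cumulative}), if present, transfer in the same way. Note that the auxiliary values of $\delta'$ on the level variables $\ell_a$ play no role in the constraint answer set definition, so the discrepancy between $\delta''$ (used in the ASP argument) and $\delta'$ (used in the theorem) is harmless.

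The main obstacle I anticipate is exactly the linear-atom bookkeeping sketched above: making precise that the absence of support clauses for $\linAtomsP$ in $\Tr(P)$ corresponds to the choice-rule augmentation $P_c$ used in the CASP answer-set definition, and verifying that partial shifting and the HCF property survive this augmentation so that Proposition~\ref{prop:as_equiv_mod_rank_supp} is legitimately applicable to $P_c$. Beyond this, the argument is essentially inspection: each clause group in $\Tr(P)$ either feeds directly into the hypothesis of Lemma~\ref{lem:trans_mod_rank_supp} or mirrors one of the three CASP answer-set conditions.
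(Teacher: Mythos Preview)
Your proposal is correct and follows essentially the same route as the paper: invoke Lemma~\ref{lem:trans_mod_rank_supp} together with Proposition~\ref{prop:as_equiv_mod_rank_supp} for the ASP part, then observe that the domain and linear constraints are carried over to $\Tr(P)$ verbatim and that linear atoms are exempt from support clauses. Your treatment of the choice-rule augmentation $P_c$ and the check that it remains partially shifted and HCF is in fact more explicit than the paper's own argument, which simply remarks that atoms in $\linAtomsP$ are ``classical'' and do not require support.
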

Conversely, we show also completeness. 

\begin{theorem}[Completeness of $\Tr(P)$]
\label{theo:tr-completeness}
 For every partially shifted HCF program $P$ and answer set $\langle I , \delta \rangle$ of $P$, there exists some e-interpretation  $\mathcal{I'}=\langle I', \delta' \rangle$ s.t. $I'\cap \atomsP=I\cap \atomsP$, $\delta'(v)=\delta(v)$ for $v\in \varsP$, and $\mathcal{I}' \models \Tr(P)$.
\end{theorem}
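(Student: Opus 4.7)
The plan is to start from a constraint answer set $\langle I, \delta \rangle$ of $P$, invoke Proposition~\ref{prop:as_equiv_mod_rank_supp} to obtain a level assignment $\delta^*$ under which $\langle I, \delta^* \rangle$ is a modular ranked scc-supported model of $P$, and then build $\mathcal{I}' = \langle I', \delta' \rangle$ by extending $I$ with truth values for the auxiliary atoms appearing in $\Tr(P)$ (namely $\mathit{dep}_{a,b}$, $y_{a,b}$, $\mathit{gap}_{a,b}$, $\mathit{bd}_r$, $\bd{r}{a}$, $\mathit{ext}_r^a$, $\mathit{int}_r^a$, $\mathit{aux}_r^a$, $\supp{r}{a}$) and extending $\delta$ by $\delta^*$ on the level variables $\ell_a$. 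Since each fresh auxiliary atom occurs on exactly one side of a defining biconditional in $\Tr(P)$, I fix its value to match the corresponding right-hand side evaluated under $\langle I, \delta^* \rangle$; this choice automatically discharges the definitional constraints (\ref{eqn:rank_pos_level})--(\ref{eqn:def_gap}), (\ref{eqn:tight_body_normal})--(\ref{eqn:tight_body_weighted}), (\ref{eqn:non_tight_body}), (\ref{eqn:body_non_tight_weighted_ext})--(\ref{eqn:non_tight_weighted_gap}), (\ref{eqn:body_non_tight_weighted_or}), (\ref{eqn:disjunctive_supp}), and (\ref{eqn:body_tight_supp_equiv})/(\ref{eqn:body_non_tight_supp_equiv}).

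The remaining constraints fall into two cleanly handled groups. The rule-satisfaction clauses (\ref{eqn:constraint_normal})--(\ref{eqn:constraint_weighted}), (\ref{eqn:disj_rule_sat}), and (\ref{eqn:normal_supp}) follow immediately from $\langle I, \delta^* \rangle \models P$ combined with the defined values of $\mathit{bd}_r$ and $\supp{r}{a}$. The supportedness clauses (\ref{eqn:support_clause}) follow from scc-supportedness: for each $a \in I$ some rule $r$ scc-supports $a$, and unrolling the definitions of $\mathit{bd}_r$, $\bd{r}{a}$, $\mathit{ext}_r^a$, and $\mathit{int}_r^a$ shows that the corresponding $\supp{r}{a}$ is forced true. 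Partial shiftedness is crucial here: whenever $|H(r)|>1$ the rule is locally tight, so the disjunctive support clause (\ref{eqn:disjunctive_supp}) can actually fire via $\mathit{bd}_r$ rather than via a cyclic $\bd{r}{a}$, and the head-exclusion conjunct $\bigwedge_{a_j \neq a_i} \neg a_j$ is satisfied because the FLP reduct minimality forces exactly one disjunct of $H(r)$ to be true in a non-shifted head.

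The main obstacle is the strictness/gap-freeness constraints (\ref{eqn:normal_ext_rk_one}), (\ref{eqn:non_tight_normal_gap}), (\ref{eqn:non_tight_weighted_gap_aux}), and (\ref{eqn:weighted_ext_rk_one}), which demand that externally supported true atoms get rank exactly $1$ and that every internally supported true atom has some predecessor in its SCC with rank exactly one less. Proposition~\ref{prop:as_equiv_mod_rank_supp} only asserts the \emph{existence} of some modular ranked scc-supported assignment; it does not by itself produce a gap-free one. To overcome this, I would not pick $\delta^*$ arbitrarily, but take the canonical ``least'' assignment obtained by iterating the scc-support operator: assign each $a \in I$ the smallest $k \leq |\SCC_P(a)|$ such that some rule scc-supports $a$ using only atoms in $\SCC_P(a)$ already at a level strictly less than $k$ (together with atoms outside $\SCC_P(a)$). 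Termination follows from HCF-ness and finiteness of SCCs; by construction, atoms with external support receive level $1$ and every internally supported atom lies exactly one level above some supporting predecessor, so constraints (\ref{eqn:normal_ext_rk_one})--(\ref{eqn:weighted_ext_rk_one}) hold. A case split on normal versus weighted body, tracking whether $\mathit{ext}_r^a$ or $\mathit{int}_r^a$ is the witnessing support, then completes the verification; the weighted case is the most delicate because $\mathit{aux}_r^a$ must be shown to be true whenever $\mathit{int}_r^a$ is true but $\mathit{ext}_r^a$ is not, which reduces to the observation that at the chosen level some $b \in B^+(r) \cap \SCC_P(a)$ fails $\mathit{gap}_{a,b}$, making the weighted sum in (\ref{eqn:non_tight_weighted_gap}) fall strictly below $l$.
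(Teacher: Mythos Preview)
Your approach is essentially the paper's: obtain a strict (gap-free) modular ranked scc-supported model---the paper packages this existence into Proposition~\ref{prop:as_ranked} and its Definition~\ref{def:strict}, whereas you construct the least-level assignment directly by iteration---then fix every auxiliary atom via its defining biconditional and discharge the remaining constraints by the same case analysis on locally tight versus non-tight heads and normal versus weighted bodies.

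One justification in your write-up is incorrect, though it does not break the argument: FLP minimality does \emph{not} force exactly one disjunct of a head $H(r)$ to be true in general (take $a\lor b\leftarrow\top$ together with facts $a\leftarrow\top$ and $b\leftarrow\top$, whose unique answer set is $\{a,b\}$). What actually makes the head-exclusion conjunct in (\ref{eqn:disjunctive_supp}) hold for the witnessing rule is already built into the notion of scc-support you invoke one line earlier: for a disjunctive rule $r$ that scc-supports $a$, the condition $H(r)\cap I=\{a\}$ is part of the definition, so no separate appeal to reduct minimality is needed.
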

Theorems~\ref{theo:tr-soundness} and \ref{theo:tr-completeness} establish 
a many-to-one mapping between the models of the translation and the answer sets of the program. 
That the mapping is in fact 1-1 is achieved through {\em correspondence constraints} given by (\ref{eqn:def_gap_aux}), (\ref{eqn:def_gap}), (\ref{eqn:normal_ext_rk_one}), (\ref{eqn:non_tight_normal_gap}), (\ref{eqn:non_tight_weighted_gap}), (\ref{eqn:non_tight_weighted_gap_aux}), (\ref{eqn:weighted_ext_rk_one}), and the gap variables, which---as for \cite{DBLP:journals/corr/abs-2308-15888}---ensure that the level mapping is \emph{strict}, i.e., has no gaps and starts at 1.
\begin{lemma}\label{lem:trans_strict}
    Suppose  $P$ is a partially shifted HCF program and $\mathcal{I} = \langle I, \delta \rangle$, $\mathcal{I}' = \langle I', \delta' \rangle$ 
    are models of $\Tr(P)$. Then 
   $I\cap \atomsP=I'\cap \atomsP$ implies $\delta(\ell_a) = \delta'(\ell_a)$ for every $a \in \atomsP$.
\end{lemma}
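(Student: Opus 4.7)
The plan is to prove the lemma by strong induction on levels, showing that the correspondence constraints collectively force $\delta(\ell_a)$ to be a function of $I\cap\atomsP$ alone. First I would handle the cases outside the SCC structure: for $a\in\atomsP\setminus I$ with $|\SCC_P(a)|>1$, constraint~(\ref{eqn:rank_pos_level}) together with the declared domain $[1,|\SCC_P(a)|+1]$ pins $\delta(\ell_a)$ to $|\SCC_P(a)|+1$, and for atoms in trivial SCCs no variable $\ell_a$ is introduced by $\Tr(P)$, so there is nothing to verify.

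For $a\in I$ with $|\SCC_P(a)|>1$, I would prove by induction on $k\geq 1$ that the layer $S_k:=\{a\in I:\delta(\ell_a)=k\}$ equals its primed counterpart $S_k':=\{a\in I:\delta'(\ell_a)=k\}$. In the base case $k=1$, I would establish that $\delta(\ell_a)=1$ iff $a$ is externally supported. The forward direction uses (\ref{eqn:non_tight_body}) (respectively the weighted analog (\ref{eqn:body_non_tight_weighted_int})): if only internal support were available, $\mathit{dep}_{a,b}$ would need to hold for some $b\in B^+(r)\cap\SCC_P(a)$, forcing $\delta(\ell_b)<1$, which is impossible. The backward direction follows immediately from (\ref{eqn:normal_ext_rk_one}) or (\ref{eqn:weighted_ext_rk_one}). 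Since external supportedness depends only on truth values of body atoms outside $\SCC_P(a)$ (and weight sums), the resulting membership test is a function of $I$, so $S_1=S_1'$.

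For the inductive step, assuming $S_j=S_j'$ for all $j<k$, I would characterize $S_k$ as those $a\in I\setminus\bigcup_{j<k}S_j$ admitting a rule $r$ with $a\in H(r)$ such that: (i) $B^+(r)\setminus\SCC_P(a)\subseteq I$, (ii) $B^-(r)\cap I=\emptyset$, (iii) every $b\in B^+(r)\cap\SCC_P(a)$ lies in $\bigcup_{j<k}S_j$, and (iv) at least one such $b$ lies in $S_{k-1}$. Conditions (i)--(iii) make $\bd{r}{a}$ hold via (\ref{eqn:non_tight_body}) or its weighted analog, while (iv) is forced by the gap constraints (\ref{eqn:non_tight_normal_gap}), (\ref{eqn:non_tight_weighted_gap}), and (\ref{eqn:non_tight_weighted_gap_aux}) together with the definitions (\ref{eqn:def_gap_aux}), (\ref{eqn:def_gap}). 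Since each condition refers only to $I$ and the already-identified lower layers, $S_k=S_k'$. Combining with the domain bound $[1,|\SCC_P(a)|+1]$ yields $\delta(\ell_a)=\delta'(\ell_a)$ for every $a$.

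The main obstacle I anticipate is condition (iv): verifying that the gap constraints really force $\delta(\ell_a)=1+\max\{\delta(\ell_b):b\in B^+(r)\cap\SCC_P(a)\}$ whenever $\bd{r}{a}$ fires, rather than merely $\delta(\ell_a)=\delta(\ell_b)+1$ for some single $b$. The resolution is that $\mathit{dep}_{a,b}$ holding for every $b$ in the SCC-intersection already forces $\delta(\ell_a)\geq 1+\max_b\delta(\ell_b)$, and the gap clause then supplies the matching upper bound. The weighted case is analogous but requires reconciling the triad $(\mathit{ext}_r^a,\mathit{int}_r^a,\mathit{aux}_r^a)$ through (\ref{eqn:body_non_tight_weighted_int}), (\ref{eqn:non_tight_weighted_gap}), and (\ref{eqn:non_tight_weighted_gap_aux}); once the body atoms at levels below $k$ are fixed, the truth values of these auxiliaries and hence the gap structure are determined, and the same case analysis goes through.
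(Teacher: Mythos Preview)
Your proposal is correct and takes essentially the same approach as the paper: both argue that the correspondence constraints force the ranking to be \emph{strict} in the sense of Definition~\ref{def:strict}, so that $\delta(\ell_a)$ is uniquely determined by $I\cap\atomsP$. The paper's proof merely asserts that constraints (\ref{eqn:normal_ext_rk_one}), (\ref{eqn:weighted_ext_rk_one}), (\ref{eqn:non_tight_normal_gap}), (\ref{eqn:non_tight_weighted_gap}), (\ref{eqn:non_tight_weighted_gap_aux}) enforce strictness and concludes; your layer-by-layer induction on $S_k$ is the natural explicit unpacking of that assertion, and in fact supplies more detail than the paper does (your condition~(iv) is redundant given (i)--(iii) and $a\notin\bigcup_{j<k}S_j$, but harmless).
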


\begin{theorem}[1-1 model correspondence between $P$ and $\Tr(P)$]
\label{theo:tr-1-1-correspondence}
    For a partially shifted HCF program $P$, 
    $\AS(P)$ corresponds 1-1 to the  models of $\Tr(P)$.
\end{theorem}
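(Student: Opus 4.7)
The plan is to combine the three preceding results. Soundness (Theorem~\ref{theo:tr-soundness}) and completeness (Theorem~\ref{theo:tr-completeness}) together show that the projection $\pi$ sending a model $\langle I,\delta\rangle \models \Tr(P)$ to its restriction $\langle I\cap\atomsP,\, \delta \text{ on } \varsP\rangle$ is a well-defined surjection from the models of $\Tr(P)$ onto $\AS(P)$. The 1-1 claim therefore reduces to injectivity of $\pi$.

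To establish injectivity, suppose $\mathcal{I}_1=\langle I_1,\delta_1\rangle$ and $\mathcal{I}_2=\langle I_2,\delta_2\rangle$ are two models of $\Tr(P)$ with $\pi(\mathcal{I}_1)=\pi(\mathcal{I}_2)$, i.e., $I_1\cap\atomsP=I_2\cap\atomsP$ and $\delta_1,\delta_2$ agree on $\varsP$. Lemma~\ref{lem:trans_strict} already yields $\delta_1(\ell_a)=\delta_2(\ell_a)$ for every $a\in\atomsP$. What remains is to show that the remaining auxiliary variables of $\Tr(P)$---the dependency and gap variables $\mathit{dep}_{a,b}$, $y_{a,b}$, $\mathit{gap}_{a,b}$, the body variables $\bd{r}{}$ and $\bd{r}{a}$, the weighted-body auxiliaries $\mathit{ext}_r^a$, $\mathit{int}_r^a$, $\mathit{aux}_r^a$, and the support variables $\supp{r}{a}$---are functionally determined by the shared data.

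I would handle this by laying out a well-founded order on the auxiliaries and checking, level by level, that each one appears on the left-hand side of a biconditional---(\ref{eqn:dep_var}), (\ref{eqn:def_gap_aux}), (\ref{eqn:def_gap}) for the rank-related auxiliaries; (\ref{eqn:tight_body_normal})--(\ref{eqn:tight_body_weighted}), (\ref{eqn:non_tight_body}), (\ref{eqn:body_non_tight_weighted_ext})--(\ref{eqn:body_non_tight_weighted_int}), (\ref{eqn:non_tight_weighted_gap}) and (\ref{eqn:body_non_tight_weighted_or}) for the body auxiliaries; and (\ref{eqn:disjunctive_supp}), (\ref{eqn:body_tight_supp_equiv}), (\ref{eqn:body_non_tight_supp_equiv}) for the support variables---whose right-hand side depends only on atoms in $\atomsP$, on linear assignments over $\varsP$, on the ranks $\ell_a$, or on strictly lower-order auxiliaries already fixed. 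Because both $\mathcal{I}_1$ and $\mathcal{I}_2$ satisfy these biconditionals and agree on all shared inputs, every such variable takes the same value under both, and hence $\mathcal{I}_1=\mathcal{I}_2$.

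The main obstacle is certifying that the proposed ordering is genuinely well-founded. The body and support variables for rules whose heads lie in a non-trivial SCC are introduced precisely to break cycles in the positive dependency graph, and one must verify that the right-hand sides of their defining biconditionals never refer back to a variable higher in the order---for instance, to another $\bd{r'}{a'}$ whose rule $r'$ sits in the same SCC as $r$. This is exactly why (\ref{eqn:non_tight_body}) and (\ref{eqn:body_non_tight_weighted_int}) use the indicators $\mathit{dep}_{a,b}$ in place of the raw atom $b$: once Lemma~\ref{lem:trans_strict} has fixed the $\ell_a$-values, (\ref{eqn:dep_var}) pins down each $\mathit{dep}_{a,b}$, and from there (\ref{eqn:def_gap_aux})--(\ref{eqn:def_gap}) fix the gap variables, unblocking the remainder of the chain. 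The partially shifted assumption (Definition~\ref{def:partially_shifted}) is what guarantees that this reduction is available for every disjunctive rule with a weighted body, so the ordering closes without circularity.
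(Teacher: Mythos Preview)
Your proposal is correct and follows essentially the same route as the paper: soundness and completeness give a surjection, Lemma~\ref{lem:trans_strict} fixes the ranks, and then the auxiliary Booleans are shown to be functionally determined by the program atoms and ranks via their defining biconditionals. The paper factors this last step into a separate lemma (Lemma~\ref{lem:delta_deter_mod}) rather than inlining it, and does not spell out the dependency order as explicitly as you do, but the argument is the same.
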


For the implementation and the experiments, we also consider a non-strict version of the translation without the mentioned constraints, where Theorem~\ref{theo:tr-1-1-correspondence} does not hold.

\vspace*{-\baselineskip}

\section{Implementation}

The 
translation $\Tr(P)$ is available via  the tool \solver{}, which is implemented in Rust\footnote{\url{https://www.rust-lang.org/}} 
the source code is online accessible.\footnote{
\url{https://www.kr.tuwien.ac.at/systems/asp-fzn/}
} As mentioned above, $\Tr(P)$, as described, is not in the Integer Programming \emph{standard form}~\citep{wolsey2021}.
However, using well-known transformations and  0-1 variables instead of Booleans, it can be easily cast into 
this form.

The \solver{} tool translates a given CASP program $P$ into a FlatZinc~\citep{DBLP:conf/cp/NethercoteSBBDT07} theory that
has corresponding models. 
Program $P$ can be 
either in ASPIF format~\citep{DBLP:journals/tplp/KaminskiRSW23} as produced by gringo or as a non-ground ASP program, which is then passed on to gringo for grounding.
The FlatZinc theory can then be processed externally or 
relayed by \solver{} via an interface to MiniZinc with
a backend solver as a parameter.
Note that we do no preprocessing of the given ASPIF input, as we generally expect the grounder (for us, gringo), to handle this step and investigating further preprocessing is a topic of future work. 
%



The tool 
supports linear constraints 
similar to the gringo-based CASP solver clingcon~\citep{DBLP:journals/tplp/BanbaraKOS17}, but expects them to occur in rule bodies, and 
further several global constraints,  viz.\ \emph{alldifferent}, \emph{disjunctive}, and \emph{cumulative} constraints.
As for clingcon, these
constraints are specified via gringo's  theory interface \citep{DBLP:journals/tplp/KaminskiRSW23}; see Appendix A for theory definitions.
Minimization objectives over the linear variables are  
akin to those in clingcon,
yet \solver{} 
allows to freely mix  such objectives with plain weak constraints, resp.\ minimization objectives, in ASP.






The \solver{} tool can be run via command line:
{\small
\begin{Verbatim}
 > asp-fzn [OPTIONS] [INPUT_FILES]...
\end{Verbatim}
}
A complete description of the arguments can be found in the appendix or online.
Essentially, \solver{} can be used either as a pure translation tool to convert ASPIF read from stdin into FlatZinc (optionally including an output specification which can be given to MiniZinc), or as a solver by
specifying  
a backend solver for  
MiniZinc, which must be installed on the system.
If a MIP solver is used, 
the translation output is in standard form and no further linearization is needed. By default, \solver{} interprets input ASP files as non-ground programs and uses gringo to first ground them.

\begin{myexample}
\begin{listing}
{\scriptsize
\begin{minipage}{0.45\textwidth}
\begin{Verbatim}
> cat example.lp
{a;b} :- c.
:- 3 <= #sum{1: a; 2: b}.
c :- not d.
&dom{ 0..2 } = x.
&dom{ 0..1 } = y.
d :- &sum{ x ; y } != 3. 
val(x,V) :- &sum{ x } = V, V = 1..2.
val(y,V) :- &sum{ y } = V, V = 1..1.
\end{Verbatim}
\end{minipage}
\nop{**** hide long version ****
\begin{minipage}{0.5\textwidth}
\begin{Verbatim}
> asp-fzn -s cp-sat -a example.lp 
d val(y,1) 
----------
d val(y,1) val(x,1) 
----------
c val(y,1) val(x,2) 
----------
c a val(y,1) val(x,2) 
----------
c b val(y,1) val(x,2) 
----------
d val(x,2) 
----------
d 
----------
d val(x,1) 
----------
\end{Verbatim}
\end{minipage}
***** long version **** }
\begin{minipage}{0.3\textwidth}
\begin{Verbatim}
> asp-fzn -s cp-sat -a example.lp 

d val(y,1) 
----------
d val(y,1) val(x,1) 
----------
c val(y,1) val(x,2) 
----------
c a val(y,1) val(x,2) 
----------
\end{Verbatim}
\end{minipage}
\begin{minipage}{0.2\textwidth}
\begin{Verbatim}


c b val(y,1) val(x,2) 
----------
d val(x,2) 
----------
d 
----------
d val(x,1) 
----------
\end{Verbatim}
\end{minipage}
}

\caption{Running example (left) solved with 
\solver{} (dashed lines separate answer sets)}\label{lst:running_ex_aspfzn}
\end{listing}
Listing~\ref{lst:running_ex_aspfzn} shows the CASP program $P_2$ from Ex.~\ref{ex:running_ctd2} in the language of gringo with the \solver{} theory definition and the output set to enumerate all answer sets.
\end{myexample}


\section{Experiments}

We now demonstrate the effectiveness of \solver{} on 
benchmark problems.
All experiments were run on a cluster with 10 nodes, each 
having 2 Intel Xeon Silver 4314 (16 cores @ 2.40GHz, 24MB cache, no hyperthreading, 2 cores reserved for system, each core can use 1MB L3 cache max.), running Ubuntu 22.04 (Kernel 5.15.0-131-generic), with memory limit 30GB and 
20 min timeout. 
All encodings, instances, and logs are available at {
\dataURL
}.

\subsection{ASP Benchmarks}\label{sec:asp_bench}

We compare \solver{} 0.1.0 with ASP solvers clingo 5.7.1~\citep{DBLP:journals/tplp/GebserKKS19} and DLV 2.1.0~\citep{DBLP:conf/lpnmr/AlvianoCDFLPRVZ17} 
on 
benchmarks from ASP competitions~\citep{calimeri14comp3,alviano13comp4,calimeri16comp5}. 
As backend solvers for \solver{}, we used the MIP solver Gurobi 12.0.1~\citep{gurobi} and CP solvers CP-SAT 9.12.4544 from Google OR-Tools~\citep{perron_et_al:LIPIcs.CP.2023.3} and 
Chuffed 0.13.2~\citep{Chu11}. 

Both CP-SAT and Chuffed are \emph{lazy-clause generation} based, which is a method taken from SMT and has been highly effective for CP solving. In particular, CP-SAT has won the gold medal in the MiniZinc Challenge\footnote{\url{https://www.minizinc.org/challenge/}} for the last years. 
Gurobi on the other hand is a  state-of-the-art, proprietary MIP solver, which has a MiniZinc interface. 
We ran all solvers using default settings, except for CP-SAT (interleaved search enabled).
For  \solver{}, we used gringo 5.7.1
for grounding and MiniZinc 2.9.2~\citep{DBLP:conf/cp/NethercoteSBBDT07} to interface Gurobi and for output formatting, and
 we considered two settings: the strict translation $\Tr(P)$ with a 1-1 mapping between the models of $\Tr(P)$ and $\AS(P)$, and the 
 non-strict
many-to-one variant.

%
\begin{table}
\caption{ASP problems,
       $n$ instances, type {\bf T} \,=\, 
      (o)ptimization $\mid$ (d)ecision, (*) non-tight}
\scriptsize
     \centering
     \begin{minipage}{0.3\textwidth}
     \begin{tabular}{l@{}r@{}c}
     \toprule
         \textbf{Problem Domain} & $n$ & \textbf{T} \\
         \midrule 
         BayesianNL*                          & 60 & $o$ \\[0.5pt]
         BottleFillingProblem                & 20 & $d$ \\[0.5pt]
         CombinedConfiguration*               & 20 & $d$ \\[1pt]
         \parbox[t]{2.7cm}{ConnectedMaximum-\\[-1pt] ~~DensityStillLife*} & 20 & $o$ \\[8pt]
         CrewAllocation                      & 52 & $d$ \\[0.5pt]
         CrossingMinimization                & 20 & $o$ \\[0.5pt]
         GracefulGraphs                      & 20 & $d$ \\[0.5pt]
         GraphColouring                      & 20 & $d$ \\[0.5pt]
         HanoiTower                          & 20 & $d$ \\[0.5pt]
         IncrementalScheduling               & 20 & $d$ \\[1.5pt]
         \bottomrule
     \end{tabular}
     \end{minipage}
     \begin{minipage}{0.33\textwidth}
     \begin{tabular}{l@{}r@{}c}
     \toprule
         \textbf{Problem Domain} & $n$ & \textbf{T} \\
         \midrule 
         KnightTourWithHoles*                 & 20 & $d$ \\[0.5pt]
         Labyrinth*                           & 20 & $d$ \\[0.5pt]
         MarkovNL*                            & 60 & $o$ \\[0.5pt]
         MaxSAT                              & 20 & $o$ \\[0.5pt]
         MaximalCliqueProblem                & 20 & $o$ \\[0.5pt]
         Nomistery                           & 20 & $d$ \\[0.5pt]
         PartnerUnits                        & 20 & $d$ \\[0.5pt]
         \parbox[t]{3.1cm}{PermutationPattern-  \\[-2pt] 
         ~~Matching}          & 20 & $d$ \\[1pt]
         \parbox[t]{3.1cm}{QualitativeSpatial-  \\[-2pt] 
         ~~Reasoning}         & 20 & $d$ \\
         RicochetRobots                      & 20 & $d$ \\[0.5pt]
         \bottomrule
     \end{tabular}
     \end{minipage}
     \begin{minipage}{0.33\textwidth}
     \begin{tabular}{l@{}r@{}c}
     \toprule
         \textbf{Problem Domain} & $n$ & \textbf{T} \\
         \midrule 
         Sokoban                             & 20 & $d$ \\[0.5pt]
         Solitaire                           & 20 & $d$ \\[0.5pt]
         StableMarriage                      & 20 & $d$ \\[0.5pt]
         SteinerTree*                         & 20 & $o$ \\[0.5pt]
         Supertree                           & 60 & $o$ \\[0.5pt]
         SystemSynthesis*                     & 20 & $o$ \\[0.5pt]
         TravelingSalesPerson*                & 20 & $o$ \\[0.5pt]
         ValvesLocationProblem*               & 20 & $o$ \\[0.5pt]
         VideoStreaming                      & 20 & $o$ \\[0.5pt]
         Visit-all                           & 20 & $d$ \\[0.5pt]
         WeightedSequenceProblem             & 20 & $d$ \\[1pt]
         \bottomrule
     \end{tabular}
     \end{minipage}
    
     \label{tab:asp_bench}
 \end{table}

We included both decision and optimization problems in the benchmark, listed in Table~\ref{tab:asp_bench}, with 31 problems and 772 instances in total. 
\nop{****** hide long text
The former consists of the following problem domains: 
\emph{Bottle Filling Problem}, 
\emph{Combined Configuration}, 
\emph{Crew Allocation}, 
\emph{Graceful Graphs},
\emph{Graph Colouring},
\emph{Hanoi Tower},
\emph{Incremental Scheduling},
\emph{Knights Tour with Holes},
\emph{Labyrinth},
\emph{No Mistery},
\emph{Partner Units},
\emph{Permutation Pattern Matching},
\emph{Qualitative Spatial Reasoning},
\emph{Ricochet Robots},
\emph{Sokoban},
\emph{Solitaire},
\emph{Stable Marriage},
\emph{Visit-All}, and
\emph{Weighted Sequence Problem}.
The following optimization problems were considered:
\emph{Bayesian Network Learning},
\emph{Connected Max. Density Still Life},
\emph{Crossing Minimization},
\emph{Markov Network Learning},
\emph{MaxSAT},
\emph{Max. Clique},
\emph{Steiner Tree},
\emph{Supertree},
\emph{System Synthesis},
\emph{Travelling Salesperson Problem},
\emph{Valves Location Problem}, and
\emph{Video Streaming}.
Whenever available, we used the 20 instances for each domain which were also used in the ASP competitions. For \emph{Bayesian Network Learning}, \emph{Markov Network Learning} and \emph{Supertree}, 60 instances were considered and 52 \emph{Crew Allocation}.
The total number of instances in the benchmark is thus 772.
*** end hide text ****}
We used the encodings from the competition, 
but replaced in few 
some parts with 
modern 
constructs like choice rules.
Note that the decision variants of all problems, except \emph{StableMarriage}, are NP-hard
and several 
encodings are non-tight.

\nop{******** original table *******
\begin{table}[!ht]
    \centering
    \footnotesize
    \begin{tabular}{lrrrr}
        \toprule
         & \multicolumn{2}{c}{single thread} & \multicolumn{2}{c}{8 threads} \\
         & \emph{Score1} & \emph{Score2} & \emph{Score1} & \emph{Score2} \\
        \midrule
        asp-fzn (CP-SAT)                & 1840.0            & 1888.3            & 2025.0           & 2051.7 \\
        asp-fzn (CP-SAT, non-strict)    & 1871.7            & 1978.3            & 2051.7           & 2101.7 \\
        asp-fzn (Chuffed)               & 327.4             & 327.4             & -- & -- \\
        asp-fzn (Chuffed, non-strict)   & 362.4             & 362.4             & -- & --\\
        asp-fzn (Gurobi)                & 855.0             & 866.7             & 986.7            & 990.0 \\
        asp-fzn (Gurobi, non-strict)    & 935.0             & 960.0             & 1033.3           & 1041.7 \\
        clingo                          & \bfseries 1890.4  & \bfseries 1992.1  & \bfseries 2351.2 & \bfseries 2511.2 \\
        DLV                             & 1524.4            & 1604.4            & -- & -- \\
        \bottomrule
    \end{tabular}
    \caption{Comparison of \solver{} with ASP solvers on plain ASP benchmarks}
    \label{tab:asp_comp}
\end{table}
****** original table *****}
\begin{table}
    
    \centering
    \caption{Comparison of \solver{} with ASP solvers on plain ASP benchmarks. The symbols next to the score indicate whether a higher value ($\uparrow$) or lower value ($\downarrow$) is better.}
    \vspace{-2mm}
    \footnotesize
    \begin{tabular}{lrrr}
        \toprule
         & \multicolumn{3}{c}{single thread}  \\
         & \multicolumn{1}{c}{\emph{Score1}$\uparrow$} & \multicolumn{1}{c}{\emph{Score2}$\uparrow$} & \multicolumn{1}{c}{\new{\emph{Score3}$\downarrow$}}   \\
        \midrule
        asp-fzn (CP-SAT)  / (CP-SAT, non-strict)                & 1840.0  / 1871.7            & 1888.3 / 1978.3   &   \new{153738.5 / 149807.9}    \\
        asp-fzn (Chuffed)  / (Chuffed, non-strict)            & 782.4 / 812.4             & 782.4 / 812.4      &  \new{279592.2 / 275942.3}  \\
        asp-fzn (Gurobi)  / (Gurobi, non-strict)               & 1185.0 /  1265.0             & 1196.7 / 1290.0      &   \new{231543.2 / 222057.8}       \\
         clingo                          & \bfseries 1890.4  & \bfseries 1992.1 & \bfseries \new{147786.8}   \\
        DLV                             & 1524.4            & 1604.4    &    \new{191445.8}  \\
        \midrule
        & \multicolumn{3}{c}{8 threads} \\
        & \multicolumn{1}{c}{\emph{Score1}$\uparrow$} & \multicolumn{1}{c}{\emph{Score2}$\uparrow$} & \multicolumn{1}{c}{\new{\emph{Score3}$\downarrow$}} \\
        \midrule
        asp-fzn (CP-SAT)  / (CP-SAT, non-strict)                 & 2025.0 / 2051.7            & 2051.7 / 2101.7 & \new{131003.7 / 128072.7} \\
        asp-fzn (Gurobi)  / (Gurobi, non-strict)                & 1441.7 / 1478.3             & 1445.0 / 1486.7 & \new{201661.7 / 196921.8} \\
         clingo                         & \bfseries 2351.2 & \bfseries 2511.2 & \bfseries \new{92028.5} \\
        \bottomrule
    \end{tabular}
    
    \label{tab:asp_comp}
\end{table}

Table~\ref{tab:asp_comp} presents the comparison of \solver{} with clingo and DLV, and cactus plots can be found in Appendix A. Here
$\mathit{Score1} = \sum_{i=1}^{31} c_i/n_i * 100$ where $c_i$
is the number of closed instances of domain $D_i$, i.e., shown to be (un)satisfiable for type $d$ resp.\ optimal for type $o$; the maximum score is 3100. 
%
\emph{Score2} measures  the best performers, by
$\mathit{Score2} = \sum_{i=1}^{31} b_i/n_i * 100$, where $b_i$
is the number of instances from 
$D_i$ where the solver either closed the instance or found a solution 
of best value among all solvers.

Lastly, $\mathit{Score3} = \sum_{i=1}^{31} t_i/n_i$ is the \emph{PAR10} score, where $t_i$ is the time the solver took to complete instance $i$ respectively $10\times 1200$ if the solver did not complete the instance. Hence, here a lower number is better.

In single-threaded mode, clingo performs best on \emph{Score1}, but \solver{} with CP-SAT as backend is trailing closely behind, beating DLV. Under the non-strict translation,
\solver{} performs slightly better on \emph{Score1} and significantly better on \emph{Score2} 
. 
Furthermore, clingo also has the best \emph{Score3}, indicating it is also closing most instances quicker than the rest; however, 
\solver{} with CP-SAT under the non-strict translation is only 1.37\% worse than 
clingo.
Gurobi and Chuffed as backends perform worse than CP-SAT,  but the non-strict variant is also better here.
This difference between strict and non-strict variants is similar to previous observations for translation-based ASP solving~\citep{DBLP:conf/lpnmr/JanhunenNS09}.
It seems non-strictness does not interfere with search-tree pruning.

For space reasons, we cannot give a detailed breakdown of the results over the particular problem domains, but unsurprisingly \solver{} performs worse than clingo mostly on domains which are non-tight or feature heavy usage of disjunctions.
An exception here is the Traveling Salesperson Problem where \solver{} using CP-SAT or Gurobi outperforms clingo.
Except for a few further non-tight domains, like Bayesian Network Learning and Systems Synthesis, Gurobi achieves worse results than CP-SAT as a backend solver.

Since clingo, Gurobi, and CP-SAT support parallel solving, we ran the benchmark on them using 8 threads. Again, clingo was best, cf.\
Table~\ref{tab:asp_comp}; 
while \solver{} performed better with Gurobi and CP-SAT,
the gap to clingo widened.
Nonetheless, the benchmarks show that \solver{} with the right backend solver is competitive with known ASP solvers.%

\subsection{CASP Benchmarks}

We now turn 
our attention 
to CASP. 
We look at three problem domains with ASP benchmark instances from the literature 
that can be modeled with CASP.
We compare \solver{} against clingcon~5.2.1~\citep{DBLP:journals/tplp/BanbaraKOS17} as it supports a similar language. 


\myparagraph{Parallel Machine Scheduling Problem (PMSP)} was first studied with 
ASP by \cite{DBLP:journals/tplp/EiterGMOSS23}, who provided a benchmark set of 500 instances.
The task is assigning jobs with release dates and sequence-dependent setup times to capable machines. The objective
is minimizing the total makespan, i.e., the maximal completion time of any job.




\begin{table}
\centering
    \footnotesize
 \nop{******* original table ********
 \begin{tabular}{lrrrr}
        \toprule
        & \multicolumn{2}{c}{single thread} & \multicolumn{2}{c}{8 threads}                             \\
        & \emph{closed}                     & \emph{best} & \emph{closed} & \emph{best}                 \\
        \midrule
        asp-fzn (CP-SAT)                & \bfseries 40  & 140           & \bfseries 55  & 155           \\
        asp-fzn (CP-SAT, non-strict)    & \bfseries 40  & \bfseries 166 & 54            & 167           \\
        asp-fzn (Chuffed)               & 0             & 0             & --            & --            \\
        asp-fzn (Chuffed, non-strict)   & 20            & 20            & --            & --            \\
        asp-fzn (Gurobi)                & 26            & 26            & 28            & 36            \\
        asp-fzn (Gurobi, non-strict)    & 27            & 29            & 28            & 41            \\
        clingcon                        & 36            & 36            & 31            & \bfseries 298 \\
        \bottomrule
    \end{tabular}
*********** original table }
\caption{\solver{} vs.\ clingcon on PMSP (strict / non-strict).}
\vspace{-2mm}
\setlength{\tabcolsep}{1.7pt}
    \begin{tabular}{lrrrrrr}
        \toprule
         & \multicolumn{3}{c}{single thread} & \multicolumn{3}{c}{8 threads} \\
         & \multicolumn{1}{c}{\emph{closed}} & \multicolumn{1}{c}{\emph{best}} & \multicolumn{1}{c}{\new{\emph{PAR10}}}  & \multicolumn{1}{c}{\emph{closed}} & \multicolumn{1}{c}{\emph{best}} & \multicolumn{1}{c}{\new{\emph{PAR10}}}  \\
        \midrule
        asp-fzn (CP-SAT) 
        & {\bfseries 40} / \bfseries 40  & 140  / \bfseries 166   &    \new{{\bfseries 11050.6} / 11051.4}  & {\bfseries 55} / 54  & 155 / 167    &  \new{{\bfseries 10699.0} / 10719.5 }   \\
        asp-fzn (Chuffed) 
        & 18 / 20            & 18 / 20   &  \new{11570.5 / 11525.0} & --         & --          &  \new{--}          \\
        asp-fzn (Gurobi) 
        & 26 / 27          & 26 / 29     &  \new{11379.1 / 11355.8}  & 28  / 28           & 36 / 41   & \new{ 11330.7 / 11330.4}        \\
          clingcon                        & 36            & 36       &  \new{11147.8}   & 31            & \bfseries 298 & \new{11264.0} \\
        \bottomrule
    \end{tabular}
  
\label{tab:pmsp}   
\medskip

 \caption{\solver{} vs.\ clingcon on TLSPS.}
 \vspace{-2mm}
    \footnotesize
\noindent\begin{tabular}{lrrrrrr}
        \toprule
        & \multicolumn{3}{c}{single thread} & \multicolumn{3}{c}{8 threads} \\
        & \emph{closed} & \emph{best} & \new{\emph{PAR10}} & \emph{closed} & \emph{best}& \new{\emph{PAR10}} \\
        \midrule
        asp-fzn (CP-SAT)                & \bfseries 55  & \bfseries 76  & \new{\bfseries 6741.5} & 64  & 76 & \new{5850.1} \\
        asp-fzn (Chuffed)               & 11            & 11          & \new{10940.0} & --            & -- & \new{--}           \\
        clingcon                        & 7            & 22        &  \new{11329.1}   &  \bfseries 
 77            &  \bfseries  90  &   \new{\bfseries 4553.3}       \\
        \bottomrule
    \end{tabular}

    \label{tab:tlsps}

    \medskip
    \footnotesize
    \caption{\solver{} vs.\ clingcon on MAPF.}
    \vspace{-2mm}
    \begin{tabular}{lrrrr}
        \toprule
        & \multicolumn{2}{c}{single thread} & \multicolumn{2}{c}{8 threads}   \\
        & \emph{closed} & \new{\emph{PAR10}} & \emph{closed} & \new{\emph{PAR10}} \\
        \midrule
        asp-fzn (CP-SAT)                & \bfseries 224 & \new{\bfseries 7116.6} &  \bfseries 233  & \new{\bfseries 6913.4}          \\
        asp-fzn (Chuffed)               & 159           & \new{8553.7} & -- & \new{--} \\
        asp-fzn (Gurobi)                & 194           & \new{7766.5} &  194  & \new{7762.9}          \\
        clingcon                        & 177           & \new{8138.1} &  209  & \new{7428.2} \\
        \bottomrule
    \end{tabular}
   
    \label{tab:mapf}
\end{table}

Table~\ref{tab:pmsp} shows the results for PMSP 
on the 500 instances using the (non-tight) CASP encoding which for space reasons is given in the appendix.
%
In single-threaded solving, \solver{} with CP-SAT and the non-strict translation is again superior, closing 40 instances and achieving the best result for 166; the strict translation is slightly worse but closes the same number of instances. 
The solver clingcon closed 36 instances, which is more than \solver{} with any of the other backend solvers.

\new{Looking at the \emph{PAR10} score, cf.\  Section~\ref{sec:asp_bench}, we see that \solver{} with CP-SAT achieves the best score, indicating that it can close the instances faster than clingcon.
Interestingly, the strict translation does better here but the difference is marginal.}


The picture changes for multi-threaded solving: here clingcon achieved the top value for \emph{best} with 298 instances vs.\ 167 by \solver{} with CP-SAT for the non-strict translation. The latter setting closed the second most instances 
(54);
changing to the strict translation closed one instance but decreased best results. The large number of best results found by clingcon can be explained by its strength in finding feasible solutions for PMSP in parallel mode, while \solver{} struggles. However, when a solution is found, \solver{} and CP-SAT typically provide the best final result 
\new{and as the \emph{PAR10} score shows, it also takes the least CPU time to prove optimality}.


\myparagraph{Test Laboratory Scheduling Problem (TLSPS)} 
is a
variant of
a scheduling problem due to \cite{Mischek2018TechReport} that is efficiently solvable using a CASP encoding~\citep{geibinger2021constraint,DBLP:journals/ai/EiterGRMOPS24}. As the encoding is tight, the strict and the non-strict translation are the same.

TLSPS concerns scheduling jobs in a test lab by assigning them an execution mode, a starting time in its time window, and required resources from a set of qualified resources. 
The overall objective has several components, like assigning  preferred employees for certain jobs, minimizing the number of employees on a project, reducing tardiness, and minimizing the project duration.

For clingcon, we essentially use  \citeauthorS{DBLP:journals/ai/EiterGRMOPS24} encoding employing ASP minimization.
The \solver{} 
encoding, shown partially in Listing~\ref{lst:tlsps} (full version in Appendix A), mixes minimization of plain ASP and linear variables; clingcon does not support the latter, but allows for a more natural encoding of the objective.
Also, the \solver{} encoding uses global disjunctive constraints to enforce unary resource usage; this is not possible in clingcon but proved to be quite effective.

\begin{listing}
    {\scriptsize
\begin{Verbatim}
&dom{R..D} = start(J) :- job(J), release(J, R), deadline(J, D).
&dom{R..D} = end(J) :- job(J), release(J, R), deadline(J, D).
&dom{L..H} = duration(J) :- job(J), L = #min{ T : durationInMode(J, _, T) }, 
                            H = #max{ T : durationInMode(J, _, T) }.              
1 {modeAssign(J, M) : modeAvailable(J, M)} 1 :- job(J).
:- job(J), modeAssign(J, M), durationInMode(J, M, T), &sum{ duration(J) } != T.
:- job(J), &sum{end(J); -start(J); -duration(J)} != 0.
:- precedence(J,K), &sum{start(J); -end(K)} < 0 .
   ...
&disjoint{ start(J)@duration(J) : workbenchAssign(J,W) } :- workbench(W).
&disjoint{ start(J)@duration(J) : empAssign(J,W) } :- employee(W).
&disjoint{ start(J)@duration(J) : equipAssign(J,W) } :- equipment(W).

#minimize{1,E,J,s2 : job(J), empAssign(J, E), not employeePreferred(J, E)  }. 
#minimize{1,E,P,s3 : project(P), empAssign(J, E), projectAssignment(J, P)}.
&dom{0..H} = delay(J)  :- job(J), horizon(H).
:- job(J), due(J, T), &sum{end(J)} > T, &sum{-1*delay(J); end(J)} != T.
:- job(J), due(J, T), &sum{end(J)} <= T, &sum{delay(J)} != 0.
&minimize{delay(J) : job(J)}.
   ...
\end{Verbatim}
}
\vspace{-4mm}
\caption{Partial TLSPS encoding used by \solver{}.}\label{lst:tlsps}
\end{listing}

\nop{**** original table
\begin{table}[!ht]
    \centering
    \footnotesize
    \begin{tabular}{lrrrr}
        \toprule
        & \multicolumn{2}{c}{single thread} & \multicolumn{2}{c}{8 threads} \\
        & \emph{closed} & \emph{best} & \emph{closed} & \emph{best} \\
        \midrule
        asp-fzn (CP-SAT)                & \bfseries 55  & \bfseries 76 & 64  & 76  \\
        asp-fzn (Chuffed)               & 11            & 11            & --            & --            \\
        clingcon                        & 7            & 22            &  \bfseries 
 77            &  \bfseries  90            \\
        \bottomrule
    \end{tabular}
 \caption{\solver{} vs.\ clingcon on TLSPS}
    \label{tab:tlsps}
\end{table}
***** original table *******}

Our benchmark consisted of 123 instances from \cite{Mischek2018TechReport} of which 3 are real-world; the instances were converted to ASP facts (see supplementary data).

The results, collected in Table~\ref{tab:tlsps}, show that \solver{} performed very well. 
Column \emph{closed} lists how many 
instances were solved and proven optimal, and \emph{best} lists the number of solutions 
that were best among all solvers; instances for which no solver found any solution 
were discarded. 
%
Our tool \solver{} with backend CP-SAT performed best for TLSPS in single-threaded mode as it solved 55 instances to optimality and produced for 76 instances the best result. 
\new{Furthermore, it also achieved the lowest, and thus best, \emph{PAR10} score.}
With backend Chuffed, \solver{} performed significantly worse but produced always best results; also clingon 
lagged significantly behind. Gurobi was not used as it does not support disjunctive global constraints.
%
%
With multi-threaded solving, clingcon outperformed \solver{} and CP-SAT, closing more instances and more 
often yielding the best result\new{, while also taking less time to prove optimality on average}.


\myparagraph{Multi Agent Path Finding (MAPF)} 
was recently studied by \cite{DBLP:journals/corr/abs-2403-12153}, who provided an instances and a generator. The task is planning the routes of several agents 
to reach their goals without colliding.
%
%
%
%
%
Our tight CASP encoding (see Appendix A) is similar to
\citeauthorS{DBLP:journals/corr/abs-2403-12153} but uses linear constraints for the event ordering. 

\nop{ **** original table
\begin{table}[!ht]
    \centering
    \footnotesize
    \begin{tabular}{lrr}
        \toprule
        & single thread & 8 threads   \\
        \midrule
        asp-fzn (CP-SAT)                & \bfseries 224 &  \bfseries 233            \\
        asp-fzn (Chuffed)               & 159             & --\\
        asp-fzn (Gurobi)                & 194           &  194            \\
        clingcon                        & 177           &  209  \\
        \bottomrule
    \end{tabular}
   \caption{\solver{} vs.\ clingcon on MAPF}
    \label{tab:mapf}
\end{table}
******* hide original table ******}

For our comparison,
we selected 547 MAPF instances from one of the sets by \citeauthor{DBLP:journals/corr/abs-2403-12153}. The results are shown in Table~\ref{tab:mapf}, listing the number of instances for which a plan was found 
(MAPF has no optimization objective). 
%
With Gurobi and CP-SAT as backends, \solver{} closed more instances than clingcon, but it closed fewer with Chuffed.
The best result is achieved by \solver{} and CP-SAT with 224 instances solved\new{; it also achieves the best \emph{PAR10} score}.
%
%
For parallel solving (8 threads), \solver{} with CP-SAT closed the most instances (233, 9 more than single-threaded). Gurobi did not benefit from parallelism while it improved the clingcon results. However, the latter still lagged behind CP-SAT.

\subsection{Summary}

Overall,  \solver{} with CP-SAT as  backend achieved decent results, being competitive as a plain ASP solver and performing better than clingcon for TLSPS and MAPF.
However, we note that CP-SAT  
has a rather high memory footprint.
The average total memory usage of clingo on the plain ASP benchmark was five times lower than the one of \solver{} with CP-SAT and the latter hit the memory limit for several instances.
This is not only due to the translation itself, but a high memory usage of CP-SAT in general.


Regarding strict vs.\  non-strict translation, it appears beneficial to use the non-strict translation by default, except when solution enumeration is requested. 
The time it takes to translate the gringo output to FlatZinc, this never took longer than a couple of seconds and was dwarfed by the grounding time.

\section{Related Work and Conclusion}

        

For a thorough survey of CASP solvers, we refer to \citeauthorS{DBLP:journals/tplp/Lierler23} survey.
Closest related to \solver{} is
clingcon~\citep{DBLP:journals/tplp/BanbaraKOS17}
as it features a similar language and is  based on clingo~\citep{DBLP:journals/tplp/GebserKKS19}.
Notably, while clingcon supports some global constraints, their usage is often limited. E.g.\ 
variables occur in disjunctive constraints
 unconditionally, i.e., whether a linear variable is active 
depends only on the truth of atoms determined at grounding time. This excludes disjunctive constraints as used for TLSPS in \solver{}.
Further, clingcon lacks cumulative constraints and disallows mixing ASP minimization and minimization over linear variables.
Closely related to clingcon is clingo-dl~\citep{DBLP:journals/tplp/JanhunenKOSWS17}, 
which is not a full CASP solver as it only supports \emph{difference constraints}, a special type of linear constraint. As we consider unrestricted linear constraints,  we did not feature clingo-dl in the evaluation.

EZSMT+~\citep{DBLP:journals/corr/abs-1905-03334} is also a translation-based CASP solver but targets SMT. As it does not support optimization, we did not feature it in the comparison. As a further impediment to a direct comparison, EZSMT+ uses the language of 
EZCSP \citep{DBLP:conf/lpnmr/Balduccini11}, which is quite different from \solver{} and clingcon's theory language.
In difference to clingcon and EZSMT+, EZCSP has slightly different semantics, as the linear constraints are evaluated for each answer set
that may be pruned on violation.
\new{Another translation-based CASP solver is mingo~\citep{DBLP:conf/kr/LiuJN12}, which translates a CASP program into MIP. While mingo does feature optimization, it also differs in language from \solver{} and was not compatible with Gurobi.
}

 As for translation-based plain ASP, our approach borrows heavily from \cite{DBLP:journals/corr/abs-2308-15888} and \cite{DBLP:conf/ijcai/AlvianoD16}. \citeauthor{DBLP:journals/corr/abs-2308-15888}  extended the level mapping formulation to programs with weight rules but provided no implementation,  while \citeauthor{DBLP:conf/ijcai/AlvianoD16}  introduced completion for disjunctive rules not as a translation-based approach per se but
 for DLV~\citep{DBLP:conf/lpnmr/AlvianoCDFLPRVZ17}.
Finally, \citeauthorS{DBLP:conf/ijcai/RankoohJ24}
translation of ASP into MIP relies on prior normalization and an acyclicity transformation 
that explicitly represents dependencies among atoms
by auxiliary variables and encodes supported models; answer sets are 
obtained by adding acyclicity constraints.

\myparagraph{Outlook.} A promising avenue for future work is the investigation of vertex elimination, as used by \citeauthor{DBLP:conf/ijcai/RankoohJ24} in their translation. While it does not guarantee a 1-1 correspondence, it has shown potential for improving performance on standard ASP optimization benchmarks. Additional directions for future research include incorporating more global constraints or exploring novel language constraints that can be modeled in FlatZinc. Another possibility is evaluating metaheuristic FlatZinc solvers, such as using CP-SAT as a purely local-search-based solver.
\new{
Finally, CASP semantics was aligned more with stable reasoning, 
moving away from interpreting linear constraints classically, 
in \citep{DBLP:conf/ijcai/CabalarKOS16,DBLP:conf/ecai/CabalarFSW20,DBLP:journals/tplp/EiterK20}.
A modified translation modeling those semantics would be another highly interesting avenue for future work.
}


\section*{Acknowledgements}

This work was supported by funding from the Bosch Center for AI at Renningen, Germany. 
Tobias Geibinger is a recipient of a DOC Fellowship of the Austrian Academy of Sciences at the Institute of Logic and Computation at the TU Wien.

\bibliographystyle{abbrvnat}
\bibliography{refs}

\begin{thebibliography}{35}
\providecommand{\natexlab}[1]{#1}
\providecommand{\url}[1]{\texttt{#1}}
\expandafter\ifx\csname urlstyle\endcsname\relax
  \providecommand{\doi}[1]{doi: #1}\else
  \providecommand{\doi}{doi: \begingroup \urlstyle{rm}\Url}\fi

\bibitem[Alviano and Dodaro(2016)]{DBLP:conf/ijcai/AlvianoD16}
M.~Alviano and C.~Dodaro.
\newblock Completion of disjunctive logic programs.
\newblock In \emph{Proc.\ {IJCAI} 2016}, pages 886--892. {IJCAI/AAAI} Press, 2016.

\bibitem[Alviano et~al.(2013)Alviano, Calimeri, and {Charwat et al.}]{alviano13comp4}
M.~Alviano, F.~Calimeri, and G.~{Charwat et al.}
\newblock The fourth answer set programming competition: Preliminary report.
\newblock In \emph{Proc.\ LPNMR 2013}, volume 8148 of \emph{LNCS}, pages 42--53. Springer, 2013.

\bibitem[Alviano et~al.(2017)Alviano, Calimeri, Dodaro, Fusc{\`{a}}, Leone, Perri, Ricca, Veltri, and Zangari]{DBLP:conf/lpnmr/AlvianoCDFLPRVZ17}
M.~Alviano, F.~Calimeri, C.~Dodaro, D.~Fusc{\`{a}}, N.~Leone, S.~Perri, F.~Ricca, P.~Veltri, and J.~Zangari.
\newblock The {ASP} system {DLV2}.
\newblock In \emph{Proc.\ {LPNMR} 2017}, volume 10377 of \emph{LNCS}, pages 215--221. Springer, 2017.

\bibitem[Balduccini(2011)]{DBLP:conf/lpnmr/Balduccini11}
M.~Balduccini.
\newblock Industrial-size scheduling with {ASP+CP}.
\newblock In \emph{Proc.\ {LPNMR} 2011}, volume 6645 of \emph{LNCS}, pages 284--296. Springer, 2011.

\bibitem[Banbara et~al.(2017)Banbara, Kaufmann, Ostrowski, and Schaub]{DBLP:journals/tplp/BanbaraKOS17}
M.~Banbara, B.~Kaufmann, M.~Ostrowski, and T.~Schaub.
\newblock Clingcon: The next generation.
\newblock \emph{TPLP}, 17\penalty0 (4):\penalty0 408--461, 2017.

\bibitem[Ben{-}Eliyahu and Dechter(1994)]{DBLP:journals/amai/Ben-EliyahuD94}
R.~Ben{-}Eliyahu and R.~Dechter.
\newblock Propositional semantics for disjunctive logic programs.
\newblock \emph{Ann. Math. Artif. Intell.}, 12\penalty0 (1-2):\penalty0 53--87, 1994.

\bibitem[Brewka et~al.(2011)Brewka, Eiter, and Truszczy{\'n}ski]{brew-etal-11-asp}
G.~Brewka, T.~Eiter, and M.~Truszczy{\'n}ski.
\newblock Answer set programming at a glance.
\newblock \emph{Communications of the ACM}, 54\penalty0 (12):\penalty0 92--103, 2011.

\bibitem[Cabalar et~al.(2016)Cabalar, Kaminski, Ostrowski, and Schaub]{DBLP:conf/ijcai/CabalarKOS16}
P.~Cabalar, R.~Kaminski, M.~Ostrowski, and T.~Schaub.
\newblock An {ASP} semantics for default reasoning with constraints.
\newblock In \emph{Proc.\ {IJCAI} 2016}, pages 1015--1021. {IJCAI/AAAI} Press, 2016.

\bibitem[Cabalar et~al.(2020)Cabalar, Fandinno, Schaub, and Wanko]{DBLP:conf/ecai/CabalarFSW20}
P.~Cabalar, J.~Fandinno, T.~Schaub, and P.~Wanko.
\newblock An {ASP} semantics for constraints involving conditional aggregates.
\newblock In G.~{De Giacomo et al.}, editor, \emph{Proc.\ {ECAI} 2020}, pages 664--671. {IOS} Press, 2020.

\bibitem[Cabalar et~al.(2023)Cabalar, Fandinno, Schaub, and Wanko]{DBLP:journals/algorithms/CabalarFSW23}
P.~Cabalar, J.~Fandinno, T.~Schaub, and P.~Wanko.
\newblock On the semantics of hybrid {ASP} systems based on clingo.
\newblock \emph{Algorithms}, 16\penalty0 (4):\penalty0 185, 2023.

\bibitem[Calimeri et~al.(2014)Calimeri, Ianni, and Ricca]{calimeri14comp3}
F.~Calimeri, G.~Ianni, and F.~Ricca.
\newblock The third open answer set programming competition.
\newblock \emph{Theory and Practice of Logic Programming}, 14\penalty0 (1):\penalty0 117--135, 2014.

\bibitem[Calimeri et~al.(2016)Calimeri, Gebser, Maratea, and Ricca]{calimeri16comp5}
F.~Calimeri, M.~Gebser, M.~Maratea, and F.~Ricca.
\newblock Design and results of the fifth answer set programming competition.
\newblock \emph{Artificial Intelligence}, 231:\penalty0 151--181, 2016.

\bibitem[Chu(2011)]{Chu11}
G.~Chu.
\newblock \emph{Improving combinatorial optimization}.
\newblock PhD thesis, University of Melbourne, Australia, 2011.

\bibitem[Clark(1977)]{DBLP:conf/adbt/Clark77}
K.~L. Clark.
\newblock Negation as failure.
\newblock In \emph{Logic and Data Bases}, pages 293--322, New York, 1977. Plemum Press.

\bibitem[Eiter and Kiesel(2020)]{DBLP:journals/tplp/EiterK20}
T.~Eiter and R.~Kiesel.
\newblock {ASP($\mathcal{AC}$)}: Answer set programming with algebraic constraints.
\newblock \emph{TPLP}, 20\penalty0 (6):\penalty0 895--910, 2020.

\bibitem[Eiter et~al.(2007)Eiter, Faber, Fink, and Woltran]{DBLP:journals/amai/EiterFFW07}
T.~Eiter, W.~Faber, M.~Fink, and S.~Woltran.
\newblock Complexity results for answer set programming with bounded predicate arities and implications.
\newblock \emph{Ann. Math. Artif. Intell.}, 51\penalty0 (2-4):\penalty0 123--165, 2007.
\newblock \doi{10.1007/S10472-008-9086-5}.
\newblock URL \url{https://doi.org/10.1007/s10472-008-9086-5}.

\bibitem[Eiter et~al.(2023)Eiter, Geibinger, Musliu, Oetsch, Skocovsk{\'{y}}, and Stepanova]{DBLP:journals/tplp/EiterGMOSS23}
T.~Eiter, T.~Geibinger, N.~Musliu, J.~Oetsch, P.~Skocovsk{\'{y}}, and D.~Stepanova.
\newblock Answer-set programming for lexicographical makespan optimisation in parallel machine scheduling.
\newblock \emph{TPLP}, 23\penalty0 (6):\penalty0 1281--1306, 2023.

\bibitem[Eiter et~al.(2024)Eiter, Geibinger, Ruiz, Musliu, Oetsch, Pfliegler, and Stepanova]{DBLP:journals/ai/EiterGRMOPS24}
T.~Eiter, T.~Geibinger, N.~H. Ruiz, N.~Musliu, J.~Oetsch, D.~Pfliegler, and D.~Stepanova.
\newblock Adaptive large-neighbourhood search for optimisation in answer-set programming.
\newblock \emph{Artif. Intell.}, 337:\penalty0 104230, 2024.

\bibitem[Erdem and Lifschitz(2003)]{DBLP:journals/tplp/ErdemL03}
E.~Erdem and V.~Lifschitz.
\newblock Tight logic programs.
\newblock \emph{TPLP}, 3\penalty0 (4-5):\penalty0 499--518, 2003.

\bibitem[Gebser et~al.(2019)Gebser, Kaminski, Kaufmann, and Schaub]{DBLP:journals/tplp/GebserKKS19}
M.~Gebser, R.~Kaminski, B.~Kaufmann, and T.~Schaub.
\newblock Multi-shot {ASP} solving with clingo.
\newblock \emph{TPLP}, 19\penalty0 (1):\penalty0 27--82, 2019.

\bibitem[Geibinger et~al.(2021)Geibinger, Mischek, and Musliu]{geibinger2021constraint}
T.~Geibinger, F.~Mischek, and N.~Musliu.
\newblock Constraint logic programming for real-world test laboratory scheduling.
\newblock In \emph{Proc.\ {AAAI} 2021}, pages 6358--6366. AAAI Press, 2021.

\bibitem[{Gurobi Optimization, LLC}(2025)]{gurobi}
{Gurobi Optimization, LLC}.
\newblock {Gurobi Optimizer Reference Manual}, 2025.
\newblock URL \url{www.gurobi.com}.

\bibitem[Janhunen(2023)]{DBLP:journals/corr/abs-2308-15888}
T.~Janhunen.
\newblock Generalizing level ranking constraints for monotone and convex aggregates.
\newblock In \emph{{ICLP} 2023 Tech.\,Comm.}, volume 385 of \emph{{EPTCS}}, pages 101--115, 2023.

\bibitem[Janhunen et~al.(2009)Janhunen, Niemel{\"{a}}, and Sevalnev]{DBLP:conf/lpnmr/JanhunenNS09}
T.~Janhunen, I.~Niemel{\"{a}}, and M.~Sevalnev.
\newblock Computing stable models via reductions to difference logic.
\newblock In \emph{Proc.\ {LPNMR} 2009}, volume 5753 of \emph{LNCS}, pages 142--154. Springer, 2009.

\bibitem[Janhunen et~al.(2017)Janhunen, Kaminski, Ostrowski, Schellhorn, Wanko, and Schaub]{DBLP:journals/tplp/JanhunenKOSWS17}
T.~Janhunen, R.~Kaminski, M.~Ostrowski, S.~Schellhorn, P.~Wanko, and T.~Schaub.
\newblock Clingo goes linear constraints over reals and integers.
\newblock \emph{TPLP}, 17\penalty0 (5-6):\penalty0 872--888, 2017.

\bibitem[Kaminski et~al.(2023)Kaminski, Romero, Schaub, and Wanko]{DBLP:journals/tplp/KaminskiRSW23}
R.~Kaminski, J.~Romero, T.~Schaub, and P.~Wanko.
\newblock How to build your own {ASP}-based system?!
\newblock \emph{TPLP}, 23\penalty0 (1):\penalty0 299--361, 2023.

\bibitem[Kaminski et~al.(2024)Kaminski, Schaub, Son, Svancara, and Wanko]{DBLP:journals/corr/abs-2403-12153}
R.~Kaminski, T.~Schaub, T.~C. Son, J.~Svancara, and P.~Wanko.
\newblock Routing and scheduling in answer set programming applied to multi-agent path finding: Preliminary report.
\newblock \emph{CoRR}, abs/2403.12153, 2024.

\bibitem[Lierler(2023)]{DBLP:journals/tplp/Lierler23}
Y.~Lierler.
\newblock Constraint answer set programming: Integrational and translational (or {SMT}-based) approaches.
\newblock \emph{TPLP}, 23\penalty0 (1):\penalty0 195--225, 2023.

\bibitem[Liu et~al.(2012)Liu, Janhunen, and Niemel{\"{a}}]{DBLP:conf/kr/LiuJN12}
G.~Liu, T.~Janhunen, and I.~Niemel{\"{a}}.
\newblock Answer set programming via mixed integer programming.
\newblock In \emph{Proc.\ {KR} 2012}. {AAAI} Press, 2012.

\bibitem[Mischek and Musliu(2018)]{Mischek2018TechReport}
F.~Mischek and N.~Musliu.
\newblock The test laboratory scheduling problem.
\newblock Technical Report CD-TR 2018/1, Christian Doppler Lab for AI and Optimization for Planning and Scheduling, TU Wien, Austria, 2018.

\bibitem[Nethercote et~al.(2007)Nethercote, Stuckey, Becket, Brand, Duck, and Tack]{DBLP:conf/cp/NethercoteSBBDT07}
N.~Nethercote, P.~J. Stuckey, R.~Becket, S.~Brand, G.~J. Duck, and G.~Tack.
\newblock {MiniZinc:} towards a standard {CP} modelling language.
\newblock In \emph{Proc.\ CP 2007}, volume 4741 of \emph{LNCS}, pages 529--543. Springer, 2007.

\bibitem[Perron et~al.(2023)Perron, Didier, and Gay]{perron_et_al:LIPIcs.CP.2023.3}
L.~Perron, F.~Didier, and S.~Gay.
\newblock The {CP-SAT-LP} solver.
\newblock In \emph{Proc.\ CP 2023}, volume 280 of \emph{LIPIcs}, pages 3:1--3:2. Schloss Dagstuhl -- LZI, 2023.

\bibitem[Rankooh and Janhunen(2024)]{DBLP:conf/ijcai/RankoohJ24}
M.~F. Rankooh and T.~Janhunen.
\newblock Improved encodings of acyclicity for translating answer set programming into integer programming.
\newblock In \emph{Proc.\ {IJCAI} 2024}, pages 3369--3376. ijcai.org, 2024.

\bibitem[Shen and Lierler(2019)]{DBLP:journals/corr/abs-1905-03334}
D.~Shen and Y.~Lierler.
\newblock {SMT}-based constraint answer set solver {EZSMT+}.
\newblock \emph{CoRR}, abs/1905.03334, 2019.

\bibitem[Wolsey(2021)]{wolsey2021}
L.~Wolsey.
\newblock \emph{Integer Programming}.
\newblock John Wiley \& Sons, 2021.

\end{thebibliography}

\newpage
\appendix

\section{Implementation and Experiments}
\label{app:imp}

\subsection{Detailed Results on ASP Benchmarks}

Table \ref{tab:asp_bench_st_details} shows the detailed results for the ASP Benchmark from Section~\ref{sec:asp_bench} for single-threaded solving and Table \ref{tab:asp_bench_mt_details} for 8 threads. 
\begin{figure}
\begin{subfigure}[b]{0.5\textwidth}
        \centering
        \includegraphics[width=\linewidth]{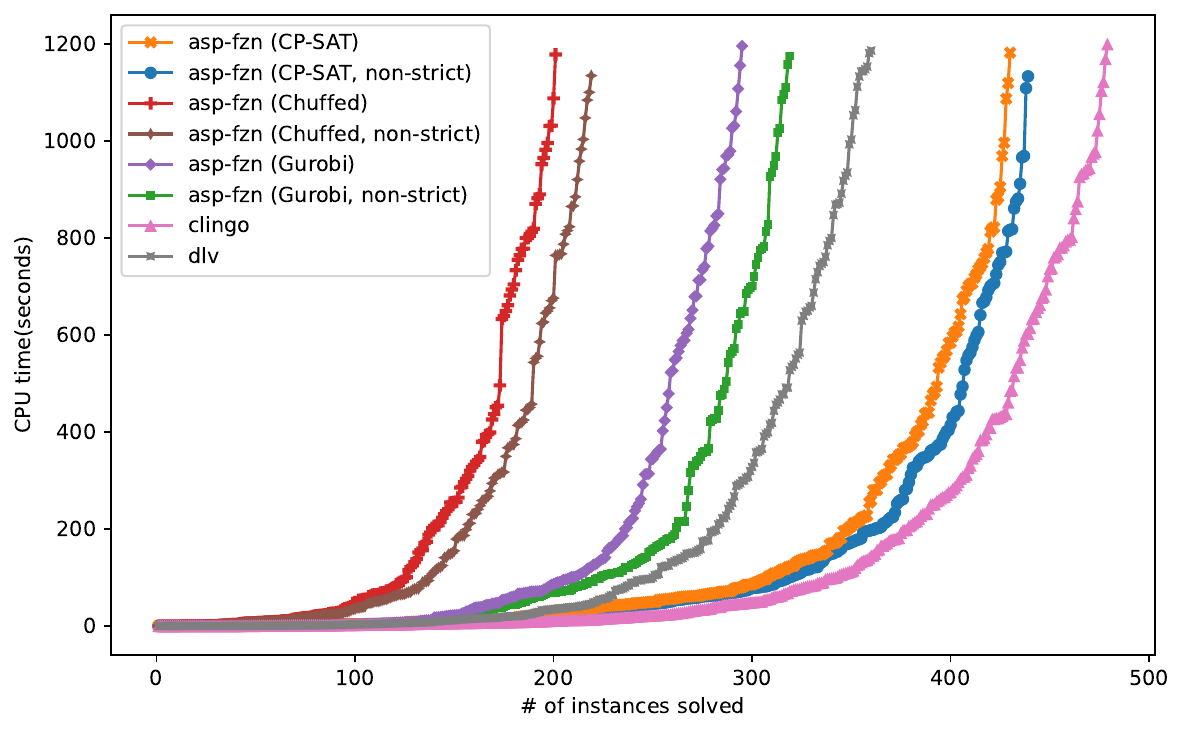}
\end{subfigure}%
\begin{subfigure}[b]{0.5\textwidth}
        \centering
        \includegraphics[width=\linewidth]{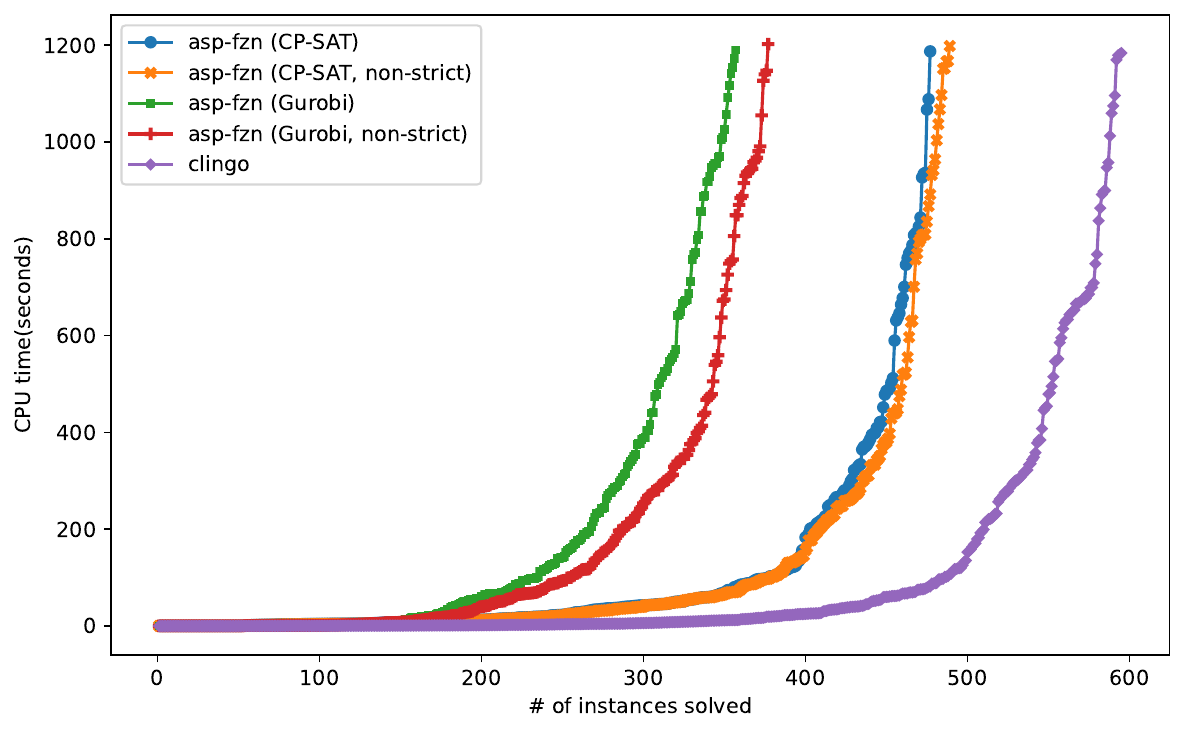}
\end{subfigure}%
    \caption{Cactus plots for solver performance on ASP problems: 1 thread (left) vs.\ 8 threads (right)}
    \label{fig:cactus}
\end{figure}
Figure~\ref{fig:cactus} shows cactus plots for solver performance.

\begin{sidewaystable}
     \setlength{\tabcolsep}{1pt}
     \fontsize{4.6}{8} \selectfont
     \centering
     \caption{Detailed Results for ASP benchmark problems (single-threaded)}

     \begin{tabular}{lrrrrrrrrrrrrrrrr}
     \toprule
         \textbf{Problem Domain} & \multicolumn{9}{c}{asp-fzn} & \multicolumn{3}{c}{clingo} & \multicolumn{3}{c}{DLV}  \\
         & \multicolumn{3}{c}{CP-SAT strict / non-strict} & \multicolumn{3}{c}{Chuffed strict / non-strict} & \multicolumn{3}{c}{Gurobi strict / non-strict} & &  \\
         & \multicolumn{1}{c}{\emph{score1}} & \multicolumn{1}{c}{\emph{score2}} & \multicolumn{1}{c}{\emph{score3}} & \multicolumn{1}{c}{\emph{score1}} & \multicolumn{1}{c}{\emph{score2}} & \multicolumn{1}{c}{\emph{score3}} & \multicolumn{1}{c}{\emph{score1}} & \multicolumn{1}{c}{\emph{score2}} & \multicolumn{1}{c}{\emph{score3}} & \multicolumn{1}{c}{\emph{score1}} & \multicolumn{1}{c}{\emph{score2}} & \multicolumn{1}{c}{\emph{score3}} & \multicolumn{1}{c}{\emph{score1}} & \multicolumn{1}{c}{\emph{score2}} & \multicolumn{1}{c}{\emph{score3}} & \\
         \midrule 
BayesianNL & 40.0 / 46.7 & 46.7 / 61.7 & 7262.2 / 6470.8 & 11.7 / 38.3 & 11.7 / 38.3 & 10646.1 / 7491.9 & 53.3 / 60.0 & 60.0 / 75.0 & 5762.7 / 4873.0 & 71.7 & 80.0 & 3488.9 & 61.7 & 61.7 & 4653.6 \\
BottleFillingProblem & 100.0 & 100.0 & 24.1 & 55.0 & 55.0 & 5420.9 & 100.0 & 100.0 & 43.3 & 100.0 & 100.0 & 3.0 & 100.0 & 100.0 & 3.1 \\
CombinedConfiguration & 10.0 / 10.0 & 10.0 / 10.0 & 10802.1 / 10806.8 & 0.0 / 0.0 & 0.0 / 0.0 & 12000.0 / 12000.0 & 10.0 / 10.0 & 10.0 / 10.0 & 10875.5 / 10890.0 & 50.0 & 50.0 & 6135.9 & 5.0 & 5.0 & 11437.4 \\
ConnectedMaximim-densityStillLife & 45.0 / 50.0 & 50.0 / 100.0 & 6712.2 / 6045.7 & 5.0 / 5.0 & 5.0 / 5.0 & 11438.9 / 11427.8 & 30.0 / 35.0 & 30.0 / 35.0 & 8566.1 / 7983.7 & 35.0 & 35.0 & 7906.0 & 50.0 & 50.0 & 6045.5 \\
CrewAllocation & 100.0 & 100.0 & 0.1 & 80.8 & 80.8 & 2477.4 & 100.0 & 100.0 & 0.2 & 90.4 & 90.4 & 1261.7 & 57.7 & 57.7 & 5204.1 \\
CrossingMinimization & 95.0 & 100.0 & 605.0 & 30.0 & 30.0 & 8456.4 & 95.0 & 100.0 & 614.5 & 35.0 & 35.0 & 7844.4 & 95.0 & 95.0 & 603.5 \\
GracefulGraphs & 55.0 & 55.0 & 5535.4 & 30.0 & 30.0 & 8470.8 & 0.0 & 0.0 & 12000.0 & 55.0 & 55.0 & 5516.7 & 40.0 & 40.0 & 7274.8 \\
GraphColouring & 55.0 & 55.0 & 5552.8 & 15.0 & 15.0 & 10325.6 & 10.0 & 10.0 & 10818.4 & 85.0 & 85.0 & 1936.3 & 55.0 & 55.0 & 5591.7 \\
HanoiTower & 100.0 & 100.0 & 17.8 & 100.0 & 100.0 & 89.4 & 20.0 & 20.0 & 9653.6 & 100.0 & 100.0 & 18.4 & 100.0 & 100.0 & 8.0 \\
IncrementalScheduling & 60.0 & 60.0 & 4936.8 & 25.0 & 25.0 & 9092.4 & 15.0 & 15.0 & 10317.0 & 70.0 & 70.0 & 3652.8 & 45.0 & 45.0 & 6663.6 \\
KnightTourWithHoles & 10.0 / 10.0 & 10.0 / 10.0 & 10804.3 / 10804.1 & 0.0 / 0.0 & 0.0 / 0.0 & 12000.0 / 12000.0 & 40.0 / 40.0 & 40.0 / 40.0 & 7251.2 / 7238.6 & 55.0 & 55.0 & 5457.8 & 45.0 & 45.0 & 6623.6 \\
Labyrinth & 45.0 / 50.0 & 45.0 / 50.0 & 6679.8 / 6118.4 & 20.0 / 15.0 & 20.0 / 15.0 & 9702.7 / 10232.4 & 0.0 / 0.0 & 0.0 / 0.0 & 12000.0 / 12000.0 & 70.0 & 70.0 & 3691.8 & 60.0 & 60.0 & 4947.1 \\
MarkovNL & 3.3 / 3.3 & 26.7 / 16.7 & 11612.4 / 11610.3 & 0.0 / 3.3 & 0.0 / 3.3 & 12000.0 / 11622.8 & 0.0 / 13.3 & 0.0 / 13.3 & 12000.0 / 10448.1 & 55.0 & 90.0 & 5508.4 & 0.0 & 5.0 & 12000.0 \\
MaxSAT & 100.0 & 100.0 & 74.4 & 0.0 & 0.0 & 12000.0 & 85.0 & 85.0 & 1952.1 & 35.0 & 35.0 & 7810.9 & 90.0 & 70.0 & 1250.1 \\
MaximalCliqueProblem & 75.0 & 75.0 & 3168.5 & 0.0 & 0.0 & 12000.0 & 85.0 & 85.0 & 1865.6 & 0.0 & 10.0 & 12000.0 & 5.0 & 5.0 & 11438.1 \\
Nomistery & 50.0 & 50.0 & 6074.4 & 0.0 & 0.0 & 12000.0 & 0.0 & 0.0 & 12000.0 & 40.0 & 40.0 & 7267.9 & 40.0 & 40.0 & 7301.7 \\
PartnerUnits & 70.0 & 70.0 & 3721.5 & 10.0 & 10.0 & 10801.0 & 0.0 & 0.0 & 12000.0 & 70.0 & 70.0 & 3619.5 & 55.0 & 55.0 & 5632.5 \\
PermutationPatternMatching & 35.0 & 35.0 & 7837.5 & 40.0 & 40.0 & 7282.7 & 25.0 & 25.0 & 9040.3 & 80.0 & 80.0 & 2629.6 & 20.0 & 20.0 & 9635.5 \\
QualitativeSpatialReasoning & 25.0 & 25.0 & 9095.8 & 5.0 & 5.0 & 11436.7 & 0.0 & 0.0 & 12000.0 & 100.0 & 100.0 & 108.0 & 85.0 & 85.0 & 2150.6 \\
RicochetRobots & 60.0 & 60.0 & 4942.9 & 25.0 & 25.0 & 9092.0 & 0.0 & 0.0 & 12000.0 & 55.0 & 55.0 & 5491.0 & 40.0 & 40.0 & 7249.0 \\
Sokoban & 65.0 & 65.0 & 4293.2 & 15.0 & 15.0 & 10283.5 & 0.0 & 0.0 & 12000.0 & 45.0 & 45.0 & 6656.0 & 60.0 & 60.0 & 4953.3 \\
Solitaire & 95.0 & 95.0 & 630.4 & 90.0 & 90.0 & 1262.9 & 85.0 & 85.0 & 1919.1 & 95.0 & 95.0 & 618.7 & 95.0 & 95.0 & 603.7 \\
StableMarriage & 100.0 & 100.0 & 486.1 & 0.0 & 0.0 & 12000.0 & 5.0 & 5.0 & 11447.1 & 90.0 & 90.0 & 1510.5 & 65.0 & 65.0 & 4568.0 \\
SteinerTree & 5.0 / 5.0 & 5.0 / 15.0 & 11429.2 / 11411.7 & 5.0 / 5.0 & 5.0 / 5.0 & 11422.7 / 11412.4 & 5.0 / 5.0 & 5.0 / 5.0 & 11404.7 / 11402.5 & 15.0 & 20.0 & 10224.4 & 5.0 & 80.0 & 11400.2 \\
Supertree & 31.7 & 45.0 & 8246.0 & 35.0 & 35.0 & 7854.3 & 11.7 & 11.7 & 10667.8 & 53.3 & 91.7 & 5672.2 & 30.0 & 35.0 & 8481.0 \\
SystemSynthesis & 0.0 / 0.0 & 0.0 / 0.0 & 12000.0 / 12000.0 & 0.0 / 0.0 & 0.0 / 0.0 & 12000.0 / 12000.0 & 35.0 / 90.0 & 40.0 / 95.0 & 8004.6 / 1562.6 & 0.0 & 0.0 & 12000.0 & 0.0 & 0.0 & 12000.0 \\
TravelingSalesPerson & 45.0 / 60.0 & 45.0 / 60.0 & 6764.8 / 4940.8 & 0.0 / 0.0 & 0.0 / 0.0 & 12000.0 / 12000.0 & 90.0 / 95.0 & 90.0 / 100.0 & 1241.9 / 653.0 & 0.0 & 0.0 & 12000.0 & 0.0 & 0.0 & 12000.0 \\
ValvesLocationProblem & 65.0 / 65.0 & 65.0 / 65.0 & 4337.0 / 4267.3 & 70.0 / 75.0 & 70.0 / 75.0 & 3735.6 / 3114.5 & 50.0 / 45.0 & 50.0 / 45.0 & 6064.2 / 6633.7 & 80.0 & 85.0 & 2420.4 & 80.0 & 95.0 & 2451.1 \\
VideoStreaming & 100.0 & 95.0 & 3.3 & 0.0 & 0.0 & 12000.0 & 100.0 & 95.0 & 0.7 & 65.0 & 65.0 & 4223.3 & 0.0 & 0.0 & 12000.0 \\
Visit-all & 100.0 & 100.0 & 55.3 & 15.0 & 15.0 & 10285.3 & 35.0 & 35.0 & 8030.3 & 95.0 & 95.0 & 1101.7 & 40.0 & 40.0 & 7214.9 \\
WeightedSequenceProblem & 100.0 & 100.0 & 33.0 & 100.0 & 100.0 & 15.2 & 100.0 & 100.0 & 2.0 & 100.0 & 100.0 & 10.6 & 100.0 & 100.0 & 60.1 \\
\bottomrule
     \end{tabular}
    
    \label{tab:asp_bench_st_details}
\end{sidewaystable}

\begin{sidewaystable}
\scriptsize
     \centering
     \caption{Detailed Results for ASP benchmark problems (8 threads)}
     \begin{tabular}{lrrrrrrrrrr}
     \toprule
         \textbf{Problem Domain} & \multicolumn{6}{c}{asp-fzn} & \multicolumn{3}{c}{clingo}   \\
         & \multicolumn{3}{c}{CP-SAT strict / non-strict} & \multicolumn{3}{c}{Gurobi strict / non-strict} & &  \\
         &  \multicolumn{1}{c}{\emph{score1}} &  \multicolumn{1}{c}{\emph{score2}} & \multicolumn{1}{c}{\emph{score3}} & \multicolumn{1}{c}{\emph{score1}} & \multicolumn{1}{c}{\emph{score2}} & \multicolumn{1}{c}{\emph{score3}} & \multicolumn{1}{c}{\emph{score1}} & \multicolumn{1}{c}{\emph{score2}} & \multicolumn{1}{c}{\emph{score3}}  \\
         \midrule 
        BayesianNL & 45.0 / 61.7 & 51.7 / 68.3 & 6644.2 / 4666.9 & 60.0 / 70.0 & 68.3 / 78.3 & 4880.8 / 3658.6 & 81.7 & 88.3 & 2263.6 \\
BayesianNL & 45.0 / 61.7 & 51.7 / 68.3 & 6644.2 / 4666.9 & 60.0 / 70.0 & 68.3 / 78.3 & 4880.8 / 3658.6 & 81.7 & 88.3 & 2263.6 \\
BottleFillingProblem & 100.0 & 100.0 & 21.0 & 100.0 & 100.0 & 41.7 & 100.0 & 100.0 & 2.8 \\
CombinedConfiguration & 15.0 / 10.0 & 15.0 / 10.0 & 10215.6 / 10801.3 & 10.0 / 5.0 & 10.0 / 5.0 & 10832.1 / 11456.3 & 55.0 & 55.0 & 5490.8 \\
ConnectedMaximim-densityStillLife & 55.0 / 55.0 & 65.0 / 90.0 & 5508.8 / 5483.4 & 35.0 / 40.0 & 35.0 / 40.0 & 7913.1 / 7267.6 & 60.0 & 65.0 & 4885.5 \\
CrewAllocation & 100.0 & 100.0 & 0.2 & 100.0 & 100.0 & 0.2 & 96.2 & 96.2 & 501.8 \\
CrossingMinimization & 100.0 & 100.0 & 45.6 & 100.0 & 100.0 & 55.8 & 100.0 & 100.0 & 15.4 \\
GracefulGraphs & 60.0 & 60.0 & 4924.8 & 0.0 & 0.0 & 12000.0 & 65.0 & 65.0 & 4265.7 \\
GraphColouring & 70.0 & 70.0 & 3792.2 & 55.0 & 55.0 & 5622.0 & 100.0 & 100.0 & 50.0 \\
HanoiTower & 100.0 & 100.0 & 13.0 & 30.0 & 30.0 & 8461.2 & 100.0 & 100.0 & 2.5 \\
IncrementalScheduling & 65.0 & 65.0 & 4328.9 & 20.0 & 20.0 & 9715.3 & 70.0 & 70.0 & 3631.9 \\
KnightTourWithHoles & 10.0 / 10.0 & 10.0 / 10.0 & 10804.2 / 10803.9 & 45.0 / 40.0 & 45.0 / 40.0 & 6668.1 / 7232.9 & 70.0 & 70.0 & 3610.4 \\
Labyrinth & 50.0 / 60.0 & 50.0 / 60.0 & 6056.7 / 4978.6 & 0.0 / 0.0 & 0.0 / 0.0 & 12000.0 / 12000.0 & 100.0 & 100.0 & 70.3 \\
MarkovNL & 13.3 / 11.7 & 21.7 / 13.3 & 10464.3 / 10651.5 & 8.3 / 30.0 & 8.3 / 30.0 & 11055.3 / 8511.8 & 75.0 & 100.0 & 3103.0 \\
MaxSAT & 100.0 & 100.0 & 58.3 & 90.0 & 90.0 & 1368.6 & 85.0 & 85.0 & 1839.4 \\
MaximalCliqueProblem & 80.0 & 80.0 & 2456.2 & 85.0 & 85.0 & 1862.3 & 50.0 & 65.0 & 6272.6 \\
Nomistery & 60.0 & 60.0 & 4935.7 & 0.0 & 0.0 & 12000.0 & 65.0 & 65.0 & 4297.5 \\
PartnerUnits & 70.0 & 70.0 & 3695.1 & 0.0 & 0.0 & 12000.0 & 75.0 & 75.0 & 3017.1 \\
PermutationPatternMatching & 35.0 & 35.0 & 7836.1 & 25.0 & 25.0 & 9035.4 & 80.0 & 80.0 & 2587.9 \\
QualitativeSpatialReasoning & 25.0 & 25.0 & 9081.9 & 5.0 & 5.0 & 11458.7 & 100.0 & 100.0 & 43.9 \\
RicochetRobots & 95.0 & 95.0 & 782.8 & 0.0 & 0.0 & 12000.0 & 95.0 & 95.0 & 868.8 \\
Sokoban & 65.0 & 65.0 & 4249.0 & 0.0 & 0.0 & 12000.0 & 65.0 & 65.0 & 4309.9 \\
Solitaire & 100.0 & 100.0 & 34.8 & 95.0 & 95.0 & 707.6 & 100.0 & 100.0 & 16.9 \\
StableMarriage & 100.0 & 100.0 & 189.4 & 80.0 & 80.0 & 2951.2 & 100.0 & 100.0 & 49.1 \\
SteinerTree & 5.0 / 5.0 & 5.0 / 10.0 & 11406.1 / 11406.6 & 5.0 / 5.0 & 5.0 / 5.0 & 11404.3 / 11401.5 & 15.0 & 95.0 & 10220.7 \\
Supertree & 41.7 & 48.3 & 7069.7 & 23.3 & 23.3 & 9287.3 & 78.3 & 96.7 & 2735.5 \\
SystemSynthesis & 0.0 / 0.0 & 0.0 / 0.0 & 12000.0 / 12000.0 & 85.0 / 100.0 & 85.0 / 100.0 & 2368.8 / 305.0 & 30.0 & 30.0 & 8593.2 \\
TravelingSalesPerson & 90.0 / 90.0 & 90.0 / 90.0 & 1238.9 / 1261.2 & 95.0 / 95.0 & 95.0 / 100.0 & 618.4 / 610.2 & 0.0 & 0.0 & 12000.0 \\
ValvesLocationProblem & 75.0 / 75.0 & 75.0 / 75.0 & 3091.3 / 3057.6 & 50.0 / 45.0 & 50.0 / 45.0 & 6054.9 / 6630.0 & 90.0 & 100.0 & 1249.6 \\
VideoStreaming & 100.0 & 95.0 & 2.8 & 100.0 & 95.0 & 0.7 & 50.0 & 50.0 & 6000.4 \\
Visit-all & 100.0 & 100.0 & 46.5 & 40.0 & 40.0 & 7297.1 & 100.0 & 100.0 & 31.3 \\
WeightedSequenceProblem & 100.0 & 100.0 & 9.7 & 100.0 & 100.0 & 0.9 & 100.0 & 100.0 & 1.2 \\
\bottomrule
     \end{tabular}
    
    \label{tab:asp_bench_mt_details}
\end{sidewaystable}

\subsection{\solver{} Theory Definition and Command Line Arguments}

\begin{listing}[b]
{
\scriptsize
\begin{Verbatim}
#theory cp {
    var_term  {
    -  : 1, unary
    };
    pos_var_term  {
    };
    sum_term {
    -  : 1, unary;
    *  : 0, binary, left
    };
    dom_term {
    -  : 1, unary;
    .. : 0, binary, left
    };
    dom_term_right  {
    };
    disjoint_term {
    @  : 0, binary, left
    };
    &sum/0 : sum_term, {<=,=,!=,<,>,>=}, var_term, body;
    &minimize/0 : sum_term, directive;
    &dom/0 : dom_term, {=}, pos_var_term, head;
    &disjoint/0 : disjoint_term, head;
    &cumulative/0 : disjoint_term, {<=}, pos_var_term, head;
    &distinct/0 : pos_var_term, head
}.
\end{Verbatim}
}
\vspace*{-0.75\baselineskip}

\caption{Theory specification of \solver{}}
\label{lst:casp_theory}
\end{listing}

\begin{listing}
{\scriptsize
\begin{Verbatim}
> asp-fzn -h
A tool that enables solving ASP programs via FlatZinc solvers.

Usage: asp-fzn [OPTIONS] [INPUT_FILES]...

Arguments:
  [INPUT_FILES]...  Input ASP files to process which are passed on to gringo for grounding. 
                    If no files are provided, ASPIF input is read from stdin

Options:
  -f, --output-fzn <FZN_FILE>          Output file path for the FZN target output file. 
                                       Cannot be used with --solver-id
  -o, --output-ozn <OZN_FILE>          Output file path for the target OZN output file. 
                                       Cannot be used with --solver-id
      --non-strict-ranking             Disable strict ranking in the translation
      --linearize                      Linearize constraints. Always on for MIP solvers 
                                       specified with --solver-id
  -v, --verbose                        Enable verbose output
  -s, --solver-id <SOLVER_ID>          MiniZinc solver ID to use (e.g., "cp-sat", 
                                       "org.chuffed.chuffed", ...) for solving the FlatZinc 
                                       directly. Overrides --fzn-file and --ozn-file options
  -t, --time-limit <SECONDS>           Time limit in seconds for the solving process. 
                                       Only relevant with --solver-id
  -p, --parallel <N_THREADS>           Number of threads to use for parallel solving. 
                                       Only relevant with --solver-id
  -a, --all-solutions                  Compute all solutions instead of just one or whether to
                                       print intermediate solutions for optimization problems. 
                                       Only relevant with --solver-id
      --solution-json                  Output is printed as a JSON stream
      --solver-args <SOLVER_ARGS>      Additional arguments passed on to the FZN solver or 
                                       MiniZinc MIP wrapper. Only relevant with --solver-id
      --gringo-path <GRINGO_PATH>      Path to gringo executable used for grounding 
                                       if not in PATH. Only relevant when input is not ASPIF 
                                       from stdin
      --minizinc-path <MINIZINC_PATH>  Path to MiniZinc installation used for solving 
                                       if not in PATH. Only relevant with --solver-id
  -h, --help                           Print help
  -V, --version                        Print version
\end{Verbatim}
}

\vspace*{-\baselineskip}
\caption{The \solver{} command line tool}\label{lst:aspfzn_cmd}
\end{listing}

\noindent
Listing~\ref{lst:casp_theory} shows the gringo theory specification supported by \solver{} and Listing~\ref{lst:aspfzn_cmd} its command line arguments.

\subsection{CASP Problem Encodings}
\label{app:casp}

The encodings of the problems TLSPS, PMSP, and MAPF  that we used in our experiments are shown in Listings~\ref{lst:tlsps_full}, \ref{lst:pmsp}, and \ref{lst:mapf}, respectively.

\begin{listing}
    {\scriptsize
\begin{Verbatim}
&dom{R..D} = start(J) :- job(J), release(J, R), deadline(J, D).
&dom{R..D} = end(J) :- job(J), release(J, R), deadline(J, D).
&dom{L..H} = duration(J) :- job(J), L = #min{ T : durationInMode(J, _, T) }, 
                            H = #max{ T : durationInMode(J, _, T) }.
1 {modeAssign(J, M) : modeAvailable(J, M)} 1 :- job(J).
:- job(J), modeAssign(J, M), durationInMode(J, M, T), &sum{ duration(J) } != T.
:- job(J), &sum{end(J); -start(J); -duration(J)} != 0.
:- precedence(J,K), &sum{start(J); -end(K)} < 0 .
:- job(J), started(J), &sum{start(J)} != 0 .
1 {workbenchAssign(J, W) : workbenchAvailable(J, W)} 1 :- job(J), workbenchRequired(J).
R {empAssign(J, E) : employeeAvailable(J, E)} R :- job(J), modeAssign(J, M), 
                                                   requiredEmployees(M, R).
R {equipAssign(J, E) : equipmentAvailable(J, E), group(E, G)} R :- job(J), group(_, G), 
                                                                   requiredEquipment(J, G, R).
:- job(J), job(K), linked(J, K), empAssign(J, E), not empAssign(K, E).
&disjoint{ start(J)@duration(J) : workbenchAssign(J,W) } :- workbench(W).
&disjoint{ start(J)@duration(J) : empAssign(J,W) } :- employee(W).
&disjoint{ start(J)@duration(J) : equipAssign(J,W) } :- equipment(W).
start(J,S) :- job(J), &sum{start(J)} = S, S = R..D, deadline(J,D), release(J, R).

#minimize{1,E,J,s2 : job(J), empAssign(J, E), not employeePreferred(J, E)  }. 
#minimize{1,E,P,s3 : project(P), empAssign(J, E), projectAssignment(J, P)}.
&dom{0..H} = delay(J)  :- job(J), horizon(H).
:- job(J), due(J, T), &sum{end(J)} > T, &sum{-1*delay(J); end(J)} != T.
:- job(J), due(J, T), &sum{end(J)} <= T, &sum{delay(J)} != 0.
&minimize{delay(J) : job(J)}.
&dom{0..H} = projectStart(P)  :- project(P), horizon(H).
&dom{0..H} = projectEnd(P)  :- project(P), horizon(H).
&dom{0..H} = completionTime(P)  :- project(P), horizon(H).
1 {firstJob(J) : job(J), projectAssignment(J, P)} 1  :- project(P).
:- firstJob(J), projectAssignment(J, P), &sum{projectStart(P); -start(J)} != 0.
:- job(J), projectAssignment(J, P), &sum{projectStart(P); -start(J)} > 0.
1 {lastJob(J) : job(J), projectAssignment(J, P)} 1  :- project(P).
:- lastJob(J), projectAssignment(J, P), &sum{projectEnd(P); -end(J)} != 0.
:- job(J), projectAssignment(J, P), &sum{projectEnd(P); -end(J)} < 0.
:- project(P), &sum{projectEnd(P); -projectStart(P); -completionTime(P)} != 0.
&minimize{completionTime(P) : project(P)}.
\end{Verbatim}
}

\caption{The TLSPS encoding used by \solver{}}\label{lst:tlsps_full}
\end{listing}


\begin{listing}
    {\scriptsize
\begin{Verbatim}
1 { assigned(J,M) : capable(M,J) } 1 :- job(J).
1 { first(J,M) : capable(M,J) } 1 :- assigned(_,M).
1 { last(J,M) : capable(M,J) } 1 :- assigned(_,M).
:- first(J,M), not assigned(J,M).
:- last(J,M), not assigned(J,M).
1 { next(J1,J2,M) : capable(M,J1), J1 != J2  } 1 :- assigned(J2,M), not first(J2,M).
1 { next(J1,J2,M) : capable(M,J2), J1 != J2  } 1 :- assigned(J1,M), not last(J1,M).
:- next(J1,J2,M), not assigned(J1,M).
:- next(J1,J2,M), not assigned(J2,M).
reach(J1,M) :- first(J1,M).
reach(J2,M) :- reach(J1,M), next(J1,J2,M).
:- assigned(J1,M), not reach(J1,M).

&dom{0..H} = start(J) :- job(J), horizon(H).
&dom{0..H} = compl(J) :- job(J), horizon(H).
&dom{0..H} = makespan :- horizon(H).
processing_time(J2,P) :- next(J1,J2,M), setup(J1,J2,M,S), duration(J2,M,D), P = S+D.
processing_time(J,P) :- first(J,M), duration(J,M,P).
:- job(J), processing_time(J,P), &sum{ compl(J) ; -start(J) } != P. 
:- next(J1,J2,M), &sum{ compl(J1) ; -start(J2) } > 0. 
:- assigned(J,M), release(J,M,T), &sum{ start(J) } < T.
:- job(J1), &sum{ compl(J1); -makespan } > 0.

&minimize{ makespan }.
\end{Verbatim}
}
\caption{The PMSP encoding used by \solver{} and clingcon}\label{lst:pmsp}
\end{listing}


\begin{listing}
    {\scriptsize
\begin{Verbatim}
{ move(A,U,V): edge(U,V) } <= 1 :- agent(A), vertex(V).
{ move(A,U,V): edge(U,V) } <= 1 :- agent(A), vertex(U).
:- move(A,U,_), not start(A,U), not move(A,_,U).
:- move(A,_,U), not goal(A,U), not move(A,U,_).
:- start(A,U), move(A,_,U).
:- goal(A,U), move(A,U,_).
:- start(A,U), not goal(A,U), not move(A,U,_).
:- goal(A,U), not start(A,U), not move(A,_,U).

resolve(A,B,U) :- start(A,U), move(B,_,U), A!=B.
resolve(A,B,U) :- goal(B,U), move(A,_,U), A!=B.
{ resolve(A,B,U); resolve(B,A,U) } >= 1 :- move(A,_,U), move(B,_,U), A<B.
:- resolve(A,B,U), resolve(B,A,U).

&dom{ 0..M } = (A,V) :- agent(A), vertex(V), N = #count{ NA : agent(NA) }, 
                        K = #count{ KV : vertex(KV) }, M=N*K*2.
:- move(A,U,V), &sum{(A,U); -(A,V)} > -1.
:- resolve(A,B,U), move(A,U,V), &sum{(A,V); -(B,U)} > -1.
\end{Verbatim}
}

\caption{The MAPF encoding used by \solver{} and clingcon}\label{lst:mapf}
\end{listing}


\section{Proofs}\label{app:proofs}

\subsection{Proof of Propositions \ref{prop:as_equiv_rank_supp} and \ref{prop:as_equiv_mod_rank_supp}}

\begin{customprop}{\ref{prop:as_equiv_rank_supp}}
For every HCF program $P$, $I\in \AS(P)$ iff $\langle I, \delta \rangle$ is a ranked supported model of $P$ for some level assignment $\delta$.
\end{customprop}
\begin{proof}[Proof (Sketch)]
    For programs without choice and weight rules, this was shown by \cite{DBLP:journals/amai/Ben-EliyahuD94} and adapted for normal programs with weight rules by \cite{DBLP:conf/lpnmr/JanhunenNS09}.

    Adapting the later proof for HCF programs with choice rules is trivial, as the program can be normalized.
\end{proof}

\begin{customprop}{\ref{prop:as_equiv_mod_rank_supp}}
For every HCF program $P$, $I\in \AS(P)$ iff $\langle I, \delta \rangle$ is a modular ranked scc-supported model of $P$ for some level assignment $\delta$.
\end{customprop}
\begin{proof}[Proof (Sketch)]
    This was shown for normal programs without choice rules by \cite{DBLP:conf/lpnmr/JanhunenNS09} and can again easily be adapted for our fragment.
\end{proof}

\subsection{Proof of Theorem \ref{theo:tr-soundness}}

Proving the theorem essentially amounts to showing that each model of $\Tr(P)$ is a modular ranked scc-supported model of $P$, where the core of the argument concerns considering rules, i.e.,  ASP programs. For CASP programs, we in addition have to consider linear variables and linear constraints; however, they carry over directly to $\Tr(P)$ and do not need supportedness, and thus require no special treatment.

\begin{lemma}\label{lem:bd_var_iff_body}
    Let $r$ be a disjunctive rule such that for each $a\in H(r)$, $\SCC_P(a)\cap B^+(r)=\emptyset$ and $\langle I, \delta \rangle\models \mathit{TrRule}(r)$.
    Then, $I\models B(r)$ iff $\langle I, \delta \rangle\models \mathit{bd}_r$. 
\end{lemma}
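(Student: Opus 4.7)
The plan is to proceed by a direct case analysis on the form of $B(r)$, observing that the hypothesis on the head atoms causes the translation $\mathit{TrRule}(r)$ to include precisely the Clark-style biconditional that defines $\mathit{bd}_r$ in terms of satisfaction of $B(r)$.

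First I would note that, by hypothesis, every $a \in H(r)$ satisfies $\SCC_P(a)\cap B^+(r)=\emptyset$, so in the notation of the body translation we have $T = H(r)$ and $H(r)\setminus T = \emptyset$. Consequently, the only body-related constraint actually introduced by $\mathit{TrBd}(r)$ is the tight-case completion: constraint (\ref{eqn:tight_body_normal}) if $B(r)$ is normal, or constraint (\ref{eqn:tight_body_weighted}) if $B(r)$ is weighted. None of the non-tight constraints (\ref{eqn:non_tight_body})--(\ref{eqn:body_non_tight_weighted_or}) contribute, so $\mathit{bd}_r$ is constrained only by (\ref{eqn:tight_body_normal}) or (\ref{eqn:tight_body_weighted}).

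Next I would unfold the two cases. If $B(r)$ is normal of form (\ref{eqn:normal}), then (\ref{eqn:tight_body_normal}) asserts
$\bigwedge_{b\in B^+(r)} b \,\land\, \bigwedge_{b\in B^-(r)} \neg b \leftrightarrow \mathit{bd}_r$,
and by the standard semantics of the translation (under the implicit identification of Boolean atoms with their 0-1 counterparts mentioned in the paper), $\langle I,\delta\rangle$ satisfies the left-hand conjunction iff $b\in I$ for all $b\in B^+(r)$ and $b\notin I$ for all $b\in B^-(r)$, which by definition is $I\models B(r)$. Hence $I\models B(r)$ iff $\langle I,\delta\rangle\models \mathit{bd}_r$. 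If $B(r)$ is weighted of form (\ref{eqn:weighted}), the argument is analogous using (\ref{eqn:tight_body_weighted}): the linear inequality
$\sum_{b\in B^+(r)} b\cdot w_b^r + \sum_{b\in B^-(r)} \naf b\cdot w_b^r \,\geq\, l$
holds under $\langle I,\delta\rangle$ iff the weighted body is satisfied by $I$ according to the semantics in the preliminaries.

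I do not foresee any substantive obstacle: the lemma is essentially a read-off of the definitional biconditional produced by Clark's completion in the tight case, and the hypothesis on SCCs is exactly what ensures the translation falls into that case. The only mild care required is to invoke the implicit correspondence between propositional atoms and their 0-1 integer encodings used in constraints (\ref{eqn:tight_body_normal}) and (\ref{eqn:tight_body_weighted}), but this is explicitly justified in the paper.
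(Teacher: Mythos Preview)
Your proposal is correct and follows essentially the same approach as the paper: a case split on whether $B(r)$ is normal or weighted, followed by reading off the biconditional from constraint~(\ref{eqn:tight_body_normal}) resp.\ (\ref{eqn:tight_body_weighted}). You are a bit more explicit than the paper in first arguing that $T=H(r)$ forces the translation into the tight case, but this is just added justification, not a different route.
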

\begin{proof}
    If $B(r)$ is a normal rule body (\ref{eqn:normal}), then $I\models B(r)$ iff $\langle I, \delta \rangle \models b$ for each $b \in B^+(r)$ and $\langle I, \delta \rangle \models\neg b$ for each $b \in B^-(r)$. 
    Hence, $I\models B(r)$ iff $\langle I, \delta \rangle\models \bigwedge_{b \in B^+(r)} b \bigwedge_{b \in B^-(r)} \neg b$, which by constraint (\ref{eqn:tight_body_normal}) holds iff $\langle I, \delta \rangle\models \mathit{bd}_r$.

    The case when $B(r)$ is a weighted rule body can be shown mutatis mutandis.
\end{proof}

\begin{lemma}\label{lem:bd_var_body_n}
    Suppose $P$ is a HCF program $P$ and $r\in P$ is a disjunctive rule with $H(r)=\{a\}$ or a choice rule, and a normal rule body such that $\SCC_P(a)\cap B^+(r)\neq\emptyset$ and $\langle I, \delta \rangle\models \Tr(P)$.
    Then, $I\models B(r)$ and $\delta(\ell_a) > \mathit{max}_{b\in B^+(r)\cap\SCC_P(a)} \ \delta(\ell_b)$ iff $I\models \bd{r}{a}$. 
\end{lemma}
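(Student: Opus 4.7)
My plan is to prove both implications directly, using three ingredients from $\Tr(P)$: the defining equivalence (\ref{eqn:non_tight_body}) for $\bd{r}{a}$, the definition (\ref{eqn:dep_var}) of $\mathit{dep}_{a,b}$, and the reified level constraint (\ref{eqn:rank_pos_level}). Observe first that the hypothesis $\SCC_P(a)\cap B^+(r)\neq\emptyset$ places $a$ in $H(r)\setminus T$ and guarantees $|\SCC_P(a)|>1$, so $\ell_a$ and $\ell_b$ (for each $b\in \SCC_P(a)$) are introduced with domain $[1,|\SCC_P(a)|+1]$, and constraint (\ref{eqn:non_tight_body}) is indeed part of $\Tr(P)$. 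Throughout, I will silently use that the implicit auxiliary 0-1 variables attached to negated literals correctly mirror membership in $I$, as noted right after (\ref{eqn:constraint_weighted}).

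For the ($\Rightarrow$) direction, assume $I\models B(r)$ and $\delta(\ell_a)>\delta(\ell_b)$ for every $b\in B^+(r)\cap\SCC_P(a)$. Since $B(r)$ is a normal body of the form (\ref{eqn:normal}), every $b\in B^+(r)$ lies in $I$ and every $b\in B^-(r)$ does not, so the conjuncts over $B^+(r)\setminus\SCC_P(a)$ and $B^-(r)$ in the left-hand side of (\ref{eqn:non_tight_body}) are satisfied. For each $b\in B^+(r)\cap\SCC_P(a)$, integrality of ranks gives $\delta(\ell_a)-\delta(\ell_b)\geq 1$, so (\ref{eqn:dep_var}) yields $\mathit{dep}_{a,b}$. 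Hence the whole left-hand side of (\ref{eqn:non_tight_body}) holds and therefore $\langle I,\delta\rangle\models\bd{r}{a}$.

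For the ($\Leftarrow$) direction, assume $\langle I,\delta\rangle\models\bd{r}{a}$. Then (\ref{eqn:non_tight_body}) forces each conjunct on the left-hand side: $b\in I$ for every $b\in B^+(r)\setminus\SCC_P(a)$, $b\notin I$ for every $b\in B^-(r)$, and $\mathit{dep}_{a,b}$ for every $b\in B^+(r)\cap\SCC_P(a)$. By (\ref{eqn:dep_var}), each such $\mathit{dep}_{a,b}$ means $\delta(\ell_a)-\delta(\ell_b)\geq 1$, yielding $\delta(\ell_a)>\delta(\ell_b)$ and, since the intersection is finite and nonempty, $\delta(\ell_a)>\max_{b\in B^+(r)\cap\SCC_P(a)}\delta(\ell_b)$. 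It remains to show $b\in I$ for every $b\in B^+(r)\cap\SCC_P(a)$, which together with the previous items establishes $I\models B(r)$.

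This last step is the only part I expect to be non-routine, and it exploits the bounds on $\ell_a$. Since $\ell_a$ has domain $[1,|\SCC_P(a)|+1]$, the gap $\delta(\ell_a)-\delta(\ell_b)\geq 1$ entails
\[
\delta(\ell_b)\ \leq\ \delta(\ell_a)-1\ \leq\ |\SCC_P(a)|\ =\ |\SCC_P(b)|,
\]
and then the reified constraint (\ref{eqn:rank_pos_level}) instantiated at $b$ forces $b\in I$. All other parts of the argument amount to unfolding the biconditionals in (\ref{eqn:non_tight_body}) and (\ref{eqn:dep_var}); the conceptual content is exactly this translation of a strict rank gap into membership in $I$ via the domain cap on $\ell_a$ together with (\ref{eqn:rank_pos_level}).
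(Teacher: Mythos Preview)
Your proof is correct and follows the same line as the paper's, unfolding constraints (\ref{eqn:non_tight_body}) and (\ref{eqn:dep_var}) in both directions. You are in fact more careful in the $(\Leftarrow)$ direction: the paper's brief argument simply asserts that the lemma ``follows from the satisfaction of constraint (\ref{eqn:non_tight_body})'' after recording the equivalence between $\mathit{dep}_{a,b}$ and $\delta(\ell_a)>\delta(\ell_b)$, but (\ref{eqn:non_tight_body}) by itself only yields $b\in I$ for $b\in B^+(r)\setminus\SCC_P(a)$, not for $b\in B^+(r)\cap\SCC_P(a)$; your use of the domain cap on $\ell_a$ together with (\ref{eqn:rank_pos_level}) to recover $b\in I$ for those atoms supplies precisely the step the paper leaves implicit.
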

\begin{proof}
    First note that $\langle I, \delta \rangle\models \Tr(P)$ implies that $\delta(\ell_a) > \delta(\ell_b)$ iff $I \models \mathit{dep}_{a,b}$ for each $b\in \SCC_P(a)$ by constraint (\ref{eqn:dep_var}).
    Hence, $\delta(\ell_a) > \mathit{max}_{b\in B^+(r)\cap\SCC_P(a)} \ \delta(\ell_b)$ iff $I\models \mathit{dep}_{a,b}$ for each $b\in B^+(r)\cap\SCC_P(a)$.
    Furthermore,  $I\models B(r)$ iff $I\models b$ for each $b\in B^+(r)$ and $I\models \neg b$ for each $b\in B^-(r)$.
    Hence, the Lemma follows from the satisfaction of constraint (\ref{eqn:non_tight_body}).
\end{proof}

\begin{lemma}\label{lem:bd_var_body_w}
    Suppose $P$ is a HCF program  and $r\in P$ is a disjunctive rule such that $H(r)=\{a\}$ or a choice rule with weighted rule body, and $\SCC_P(a)\cap B^+(r)\neq\emptyset$, and $\langle I, \delta \rangle\models \Tr(P)$.
    Then, $I\models \bd{r}{a}$ iff 
    \begin{equation}\label{eq:lem5}
    l \ \leq \sum_{b \in (I\cap B^+(r)) \setminus \SCC_P(a)} \mysummanddispq{-0.75cm}{w_b^r} + \sum_{b \in B^+(r) \cap \SCC_P(a), \delta(x_{b}) < \delta(\ell_a)}\mysummanddispq{-0.75cm}{w_b^r} + \sum_{b \in B^-(r)\setminus I}\mysummanddisp{-0.5cm}{w_b^r}.
    \end{equation}
\end{lemma}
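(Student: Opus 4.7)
The argument is a direct unfolding of the three constraints that together define $\bd{r}{a}$ in the weighted non-tight case, namely (\ref{eqn:body_non_tight_weighted_ext}), (\ref{eqn:body_non_tight_weighted_int}), and (\ref{eqn:body_non_tight_weighted_or}), combined with the characterisation of the dependency variables from (\ref{eqn:dep_var}). I would first use (\ref{eqn:body_non_tight_weighted_or}) to replace the statement ``$I \models \bd{r}{a}$'' by ``$I \models \mathit{ext}_r^a \lor \mathit{int}_r^a$,'' reducing the claim to a statement about these two auxiliary variables.

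Next I would unfold (\ref{eqn:body_non_tight_weighted_int}). Since the 0--1 variables linked to the literals take value $1$ exactly on $I \cap B^+(r)$ and on $B^-(r)\setminus I$, the arithmetic inequality defining $\mathit{int}_r^a$ reads
\[
\sum_{b \in (I\cap B^+(r))\setminus \SCC_P(a)} \!\!\!\!w_b^r \;+\;\sum_{b \in B^+(r)\cap \SCC_P(a),\; I\models \mathit{dep}_{a,b}} \!\!\!\!w_b^r \;+\; \sum_{b \in B^-(r)\setminus I}\!\! w_b^r \;\geq\; l.
\]
Appealing to constraint (\ref{eqn:dep_var}), which states $\mathit{dep}_{a,b} \leftrightarrow \ell_a - \ell_b \geq 1$, the condition ``$I \models \mathit{dep}_{a,b}$'' can be replaced by ``$\delta(\ell_b) < \delta(\ell_a)$.'' What remains is precisely the inequality (\ref{eq:lem5}) in the lemma statement. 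This establishes the equivalence ``$I \models \mathit{int}_r^a$ iff (\ref{eq:lem5}) holds.''

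It then suffices to show that the $\mathit{ext}_r^a$ disjunct in (\ref{eqn:body_non_tight_weighted_or}) is redundant, i.e.\ $I \models \mathit{ext}_r^a$ implies $I \models \mathit{int}_r^a$. Comparing (\ref{eqn:body_non_tight_weighted_ext}) and (\ref{eqn:body_non_tight_weighted_int}), the LHS of the $\mathit{int}$-inequality differs from that of the $\mathit{ext}$-inequality only by the extra non-negative contributions $\sum_{b\in B^+(r)\cap \SCC_P(a)} \mathit{dep}_{a,b}\cdot w_b^r$, so the $\mathit{ext}$-sum being $\geq l$ forces the $\mathit{int}$-sum to be $\geq l$ as well. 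Hence $\mathit{ext}_r^a \lor \mathit{int}_r^a$ collapses to $\mathit{int}_r^a$, and combining with the previous paragraph yields the claim.

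\textbf{Main obstacle.} The implication ``$\mathit{ext}_r^a \Rightarrow \mathit{int}_r^a$'' rests on the weights $w_b^r$ for $b \in B^+(r)$ being non-negative, which is the standard assumption for weight rules (and is implicit in the weighted-body satisfaction definition given for~(\ref{eqn:weighted})). If arbitrary integer weights were allowed, the two disjuncts would be genuinely incomparable and the statement of the lemma would need to mention $\mathit{ext}_r^a$ explicitly; so the single substantive check is that the weight-rule fragment under consideration has non-negative positive-body weights. Modulo this observation, the proof is a purely syntactic rewriting of the defining constraints.
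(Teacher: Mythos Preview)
Your proposal is correct and essentially matches the paper's own argument: both unfold (\ref{eqn:body_non_tight_weighted_or}) into the $\mathit{ext}_r^a$/$\mathit{int}_r^a$ disjunction, use (\ref{eqn:dep_var}) to identify the $\mathit{int}$-inequality with (\ref{eq:lem5}), and rely on the fact that the $\mathit{ext}$-case is subsumed. The only cosmetic difference is that the paper argues each direction separately (showing $\mathit{ext}_r^a \Rightarrow$~(\ref{eq:lem5}) and $\mathit{int}_r^a \Rightarrow$~(\ref{eq:lem5}) for one direction, and (\ref{eq:lem5}) $\Rightarrow \mathit{int}_r^a$ for the other), whereas you first collapse $\mathit{ext}_r^a \lor \mathit{int}_r^a$ to $\mathit{int}_r^a$ and then invoke a single equivalence; your explicit remark that this step needs non-negative positive-body weights is a point the paper leaves implicit.
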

\begin{proof}
    Again note that $\langle I, \delta \rangle\models \Tr(P)$ implies that $\delta(\ell_a) > \delta(\ell_b)$ iff $I \models \mathit{dep}_{a,b}$ for each $b\in \SCC_P(a)$ by constraint (\ref{eqn:dep_var}).
    
    ($\Rightarrow$) Suppose $I\models \bd{r}{a}$.
    Since $B(r)$ is a  weighted rule body, $\langle I, \delta \rangle\models \Tr(P)$ implies that constraint (\ref{eqn:body_non_tight_weighted_or}) is satisfied and thus either (i) $\langle I, \delta \rangle\models \mathit{ext}_r^a$ or (ii) $\langle I, \delta \rangle\models \mathit{int}_r^a$.
    If (i) holds, then constraint (\ref{eqn:body_non_tight_weighted_ext}) implies 
    $$l \leq \sum_{b \in B^+(r) \setminus \SCC_P(a)} \mysummanddispq{-0.75cm}{w_i^r} + \sum_{b \in B^-(r) \setminus I} \mysummanddisp{-0.25cm}{w_j^r}$$
    which in turn implies 
    inequality (\ref{eq:lem5}).
    Similarly, if (ii) holds then constraint (\ref{eqn:body_non_tight_weighted_int}) implies 
    inequality (\ref{eq:lem5}).
    
    ($\Rightarrow$) Conversely, suppose that
    %
    inequality (\ref{eq:lem5}) holds. Then constraint (\ref{eqn:body_non_tight_weighted_int}) implies $\langle I, \delta \rangle\models \mathit{int}_r^a$ which in turn implies $\langle I, \delta \rangle\models \bd{r}{a}$ by constraint (\ref{eqn:body_non_tight_weighted_or}) and thus $I\models \bd{r}{a}$.
\end{proof}

\begin{lemma}\label{lem:bd_var_head}
    Suppose $r$ is a disjunctive rule  such that for each $a\in H(r)$, $\SCC_P(a)\cap B^+(r)=\emptyset$, and $\langle I, \delta \rangle\models \mathit{TrRule}(r)$.
    Then, $\langle I, \delta \rangle\models \mathit{bd}_r$ implies $I \models a$ for some $a \in H(r)$. 
\end{lemma}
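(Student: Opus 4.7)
My plan is to argue by case analysis on the structure of the disjunctive head, exploiting the different parts of $\mathit{TrHd}(r)$ that the translation emits in each case. Since $\bd{r}{}$ is only introduced by the translation when $r$ is not a constraint, the premise $\langle I,\delta\rangle \models \bd{r}{}$ already ensures $H(r)\neq\emptyset$, so the existential in the conclusion is nonvacuous.

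First I would handle the case $|H(r)|=1$, say $H(r)=\{a\}$. By the hypothesis $\SCC_P(a)\cap B^+(r)=\emptyset$, the head-translation picks clause (\ref{eqn:body_tight_supp_equiv}), which yields $\supp{r}{a}\leftrightarrow \bd{r}{}$. Since a rule with singleton head is not a choice rule, clause (\ref{eqn:normal_supp}) is also included, giving $\supp{r}{a}\to a$. Chaining these two under the assumption $\langle I,\delta\rangle\models \bd{r}{}$ yields $\langle I,\delta\rangle\models a$, so $I\models a$ as required.

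Next I would treat the case $|H(r)|>1$, in which $r$ is a genuine disjunctive rule and therefore the translation emits, in addition to the supported-definition clauses (\ref{eqn:disjunctive_supp}), the satisfaction clause (\ref{eqn:disj_rule_sat}): $\bigvee_{a\in H(r)} a \,\lor\, \lnot\bd{r}{}$. Together with $\langle I,\delta\rangle\models \bd{r}{}$, this immediately forces some $a\in H(r)$ with $\langle I,\delta\rangle\models a$, and hence $I\models a$.

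The two cases exhaust all non-constraint disjunctive rules, so the lemma follows. I do not expect any real obstacle: the only subtlety is matching the premise to the correct branch of the translation (singleton head versus proper disjunction), and verifying that $\bd{r}{}$ is indeed defined in each case (which follows since constraints are excluded by the very presence of $\bd{r}{}$ in the translation). All of the ingredients are syntactic consequences of the emitted clauses in $\mathit{TrHd}(r)$, so no deeper structural argument about $P$, ranking, or supportedness is needed.
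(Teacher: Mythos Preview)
Your proposal is correct and follows essentially the same approach as the paper's proof: a case split on $|H(r)|=1$ versus $|H(r)|>1$, using (\ref{eqn:body_tight_supp_equiv}) together with (\ref{eqn:normal_supp}) in the former case and (\ref{eqn:disj_rule_sat}) in the latter. The only cosmetic difference is that the paper phrases the argument as a proof by contradiction, whereas you argue directly.
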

\begin{proof}
    Towards a contradiction, suppose $I \not\models a$ for every $a \in H(r)$, i.e.,  $I\cap H(r) = \emptyset$.
    If $r$ is a normal rule where $H(r)=\{a\}$, then $\langle I, \delta \rangle\models \mathit{TrRule}(r)$ implies that constraint (\ref{eqn:body_non_tight_supp_equiv}) is satisfied and thus $\langle I, \delta \rangle\models \supp{r}{a}$.
    By constraint (\ref{eqn:normal_supp}), $I \models a$. Contradiction.

    If $|H(r)|>1$, then constraint (\ref{eqn:disj_rule_sat}) and $\langle I, \delta \rangle\models \mathit{bd}_r$ imply $I \models a$ for some $a \in H(r)$.
\end{proof}

\begin{lemma}\label{lem:support_var}
    Suppose $r$ is a partially shifted rule and $I$ an interpretation such that $\langle I, \delta \rangle\models \mathit{TrRule}(r)$. 
    If $\langle I, \delta \rangle \models \supp{r}{a}$ for some $a \in H(r)$, then
    \begin{enumerate}
        \item[(i)] $\langle I, \delta \rangle \models \bd{r}{a}$ whenever $\SCC_P(a)\cap B^+(r)\neq\emptyset$, and
        \item[(ii)] $\langle I, \delta \rangle \models \mathit{bd}_r$ otherwise.
    \end{enumerate}
\end{lemma}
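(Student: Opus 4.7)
The plan is to prove the lemma by a straightforward case analysis driven by the form of the rule $r$ and whether the head atom $a$ is locally tight with respect to $B^+(r)$. The key observation is that the support variable $\supp{r}{a}$ is introduced by exactly one of three defining constraints from $\mathit{TrHd}(r)$, and in each case the body variable on the right-hand side (either $\mathit{bd}_r$ or $\bd{r}{a}$) is trivially implied once $\supp{r}{a}$ is true.

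First I would enumerate the relevant defining constraints. If $r$ is disjunctive with $|H(r)|>1$, then constraint (\ref{eqn:disjunctive_supp}) yields $\supp{r}{a}\leftrightarrow \mathit{bd}_r\land \bigwedge_{a'\in H(r)\setminus\{a\}}\neg a'$, so $\langle I,\delta\rangle\models\supp{r}{a}$ immediately gives $\langle I,\delta\rangle\models\mathit{bd}_r$. Otherwise, $r$ is a choice rule or $|H(r)|=1$; then either constraint (\ref{eqn:body_tight_supp_equiv}) applies, giving $\supp{r}{a}\leftrightarrow\mathit{bd}_r$, or constraint (\ref{eqn:body_non_tight_supp_equiv}) applies, giving $\supp{r}{a}\leftrightarrow\bd{r}{a}$, depending on whether $\SCC_P(a)\cap B^+(r)=\emptyset$.

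Next I would combine this classification with the two cases of the lemma. For case (i), where $\SCC_P(a)\cap B^+(r)\neq\emptyset$, I must first rule out that $r$ is a disjunctive rule with $|H(r)|>1$: this is where the partially-shifted assumption is invoked, since it forces every head atom of such a rule to be locally tight with respect to $B^+(r)$, so the assumption $\SCC_P(a)\cap B^+(r)\neq\emptyset$ implies $r$ is a choice rule or $|H(r)|=1$. Constraint (\ref{eqn:body_non_tight_supp_equiv}) then directly yields $\langle I,\delta\rangle\models\bd{r}{a}$. For case (ii), where $\SCC_P(a)\cap B^+(r)=\emptyset$, I split on whether $|H(r)|>1$ with $r$ disjunctive (apply constraint (\ref{eqn:disjunctive_supp})) or not (apply constraint (\ref{eqn:body_tight_supp_equiv})); both yield $\langle I,\delta\rangle\models\mathit{bd}_r$.

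The proof is essentially mechanical, so the main obstacle is being explicit about why the case distinction is exhaustive and mutually exclusive. In particular, verifying that the partially-shifted condition in Def.~\ref{def:partially_shifted} (together with the treatment of disjunctive heads with $|H(r)|>1$ in the head translation) correctly precludes the combination of a disjunctive rule with $|H(r)|>1$ and a head atom not locally tight with respect to $B^+(r)$ is the only subtle point; once this is established, the remaining cases each reduce to one application of the biconditional in the corresponding defining constraint.
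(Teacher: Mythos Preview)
Your proposal is correct and follows essentially the same approach as the paper's proof: a case analysis on the rule form that reduces each case to a single application of the relevant defining constraint (\ref{eqn:disjunctive_supp}), (\ref{eqn:body_tight_supp_equiv}), or (\ref{eqn:body_non_tight_supp_equiv}). The only difference is organizational: the paper splits first on whether $|H(r)|=1$ versus $|H(r)|>1$, whereas you split first on the two conclusions (i)/(ii) of the lemma; you are also slightly more explicit in covering choice rules under the (\ref{eqn:body_tight_supp_equiv})/(\ref{eqn:body_non_tight_supp_equiv}) case.
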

\begin{proof}
    Suppose that $r$ is normal, i.e., $H(r)=\{a\}$. Then the statement follows trivially from constraints (\ref{eqn:body_tight_supp_equiv}) and (\ref{eqn:body_non_tight_supp_equiv}).

    So suppose $|H(r)|>1$, then $\SCC_P(a)\cap B^+(r)=\emptyset$ since $r$ is partially shifted.
    Now, $\langle I, \delta \rangle \models \supp{r}{a}$ and constraint (\ref{eqn:disjunctive_supp}) imply $\langle I, \delta \rangle \models \mathit{bd}_r$.
\end{proof}

\begin{lemma}\label{lem:disj_excl}
    Suppose $r$ is a disjunctive rule $r$ such that for each $a\in H(r)$, $\SCC_P(a)\cap B^+(r)=\emptyset$, and $\langle I, \delta \rangle$ is a ranked interpretation such that $\langle I, \delta \rangle\models \TrHd(r)$. Then, $\langle I, \delta \rangle\models \supp{r}{a}$ for some $a \in H(r)$ implies $I\cap H(r) =\{a\}$.
\end{lemma}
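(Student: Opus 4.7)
}
The plan is to unpack the definition of $\TrHd(r)$ for the two possible shapes of a disjunctive rule $r$ (namely $|H(r)|=1$ versus $|H(r)|>1$) and read off the two required conclusions: first that $a \in I$, and second that no other head atom of $r$ lies in $I$. Since $r$ is assumed to be disjunctive (not a choice rule), constraint~(\ref{eqn:normal_supp}) is present in $\TrHd(r)$, which will immediately give us membership of $a$. The uniqueness conclusion will come from the completion-style constraint~(\ref{eqn:disjunctive_supp}).

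First I would handle the degenerate case $|H(r)|=1$. Here $H(r)=\{a\}$, and by the assumption that $r$ is not a choice rule, constraint~(\ref{eqn:normal_supp}) yields $\supp{r}{a}\to a$, whence $a\in I$ and trivially $I\cap H(r)=\{a\}$ because there are no other candidates. Then I would turn to the main case $|H(r)|>1$. The assumption $\SCC_P(a)\cap B^+(r)=\emptyset$ for every $a\in H(r)$ is exactly what the body translation needs in order for the single variable $\mathit{bd}_r$ to capture body firing, and it also ensures that for disjunctive heads $\TrHd(r)$ uses constraints~(\ref{eqn:disjunctive_supp}) and~(\ref{eqn:disj_rule_sat}) together with~(\ref{eqn:normal_supp}). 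From $\langle I,\delta\rangle\models\supp{r}{a}$ and the right-to-left direction of~(\ref{eqn:disjunctive_supp}) I obtain $\langle I,\delta\rangle\models\neg a_j$ for every $a_j\in H(r)\setminus\{a\}$, i.e.\ $I\cap (H(r)\setminus\{a\})=\emptyset$. Applying~(\ref{eqn:normal_supp}) once more gives $a\in I$, and combining the two facts yields $I\cap H(r)=\{a\}$ as required.

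There is no real obstacle: the heart of the argument is simply reading the biconditional in~(\ref{eqn:disjunctive_supp}), which was explicitly designed (following \cite{DBLP:conf/ijcai/AlvianoD16}) to witness that a disjunctive head atom $a$ is supported by rule $r$ only when all other head atoms are false. The only point that requires a small amount of care is keeping track of which constraints of $\TrHd(r)$ are actually generated for the rule $r$ under consideration: the assumption that $r$ is disjunctive (hence~(\ref{eqn:normal_supp}) is included) and the side condition $\SCC_P(a)\cap B^+(r)=\emptyset$ for all $a\in H(r)$ (which selects constraints~(\ref{eqn:body_tight_supp_equiv}) rather than~(\ref{eqn:body_non_tight_supp_equiv}) in the translation, and makes~(\ref{eqn:disjunctive_supp}) available in the $|H(r)|>1$ case) together determine exactly the set of constraints used above.
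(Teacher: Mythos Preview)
Your overall plan matches the paper's, but there is one concrete gap in the $|H(r)|>1$ case. You claim that constraint~(\ref{eqn:normal_supp}) is part of $\TrHd(r)$ for a disjunctive rule with $|H(r)|>1$, and you use it to conclude $a\in I$. It is not: in the head translation, (\ref{eqn:normal_supp}) is only emitted in the ``otherwise'' branch, i.e.\ when $r$ is a choice rule or $|H(r)|=1$, and additionally $r$ is not a choice rule---so effectively only when $|H(r)|=1$. For a genuinely disjunctive head, $\TrHd(r)$ consists of~(\ref{eqn:disjunctive_supp}) and~(\ref{eqn:disj_rule_sat}) only, so you cannot appeal to~(\ref{eqn:normal_supp}) there.

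The fix is the route the paper takes: from $\langle I,\delta\rangle\models \supp{r}{a}$ and the biconditional~(\ref{eqn:disjunctive_supp}) you get not only $I\cap(H(r)\setminus\{a\})=\emptyset$ but also $\langle I,\delta\rangle\models \bd{r}{}$. Then constraint~(\ref{eqn:disj_rule_sat}) (equivalently, Lemma~\ref{lem:bd_var_head}) forces some head atom into $I$, and since all head atoms other than $a$ are already excluded, that atom must be $a$. You even list~(\ref{eqn:disj_rule_sat}) among the available constraints, so the repair is immediate; your argument for the $|H(r)|=1$ case is fine as written.
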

\begin{proof}
    Suppose $\langle I, \delta \rangle\models \supp{r}{a}$ for some $a \in H(r)$. 
    If $H(r)=\{a\}$, then the statement follows directly from constraint (\ref{eqn:normal_supp}), so suppose $|H(r)|>1$.
    Then, by constraint (\ref{eqn:disjunctive_supp}) $\langle I, \delta \rangle\models \supp{r}{a}$ implies $I\cap H(r) \subseteq \{a\}$ and $\langle I, \delta \rangle\models \bd{r}{}$. The latter now implies $I\cap H(r) = \{a\}$ by Lemma~\ref{lem:bd_var_head}. 
\end{proof}

\begin{lemma}\label{lem:trans_rule_mod}
For every rule $r$ of a partially shifted HCF program $P$ and e-interpretation  $\langle I, \delta \rangle$, $\langle I, \delta \rangle \models \Tr(P)$ implies $I\cap \atomsP \models r$.
\end{lemma}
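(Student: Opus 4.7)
The plan is to proceed by case analysis on the form of the rule $r$, showing in each case that if $I \models B(r)$ (restricted to $\atomsP$) then $I \cap \atomsP \models H(r)$. Throughout, I will freely use the preceding Lemmas~\ref{lem:bd_var_iff_body}--\ref{lem:disj_excl}, which connect the auxiliary Boolean variables of $\Tr(P)$ with the actual body/head semantics of $r$. Since linear variables and linear atoms are orthogonal to rule satisfaction (they influence only linear constraints in $\Tr(P)$ and are not subject to supportedness), I may
work with the projection $I \cap \atomsP$ without loss of generality.

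First, suppose $r$ is a constraint rule, i.e., $H(r) = \emptyset$. Then $\TrBd(r)$ contributes either clause~(\ref{eqn:constraint_normal}) or inequality~(\ref{eqn:constraint_weighted}). A direct inspection shows that $\langle I, \delta \rangle$ satisfying these forbids $I \cap \atomsP \models B(r)$, so $r$ is vacuously satisfied. Next, if $r$ is a choice rule, then every interpretation satisfies a choice head, so $I \cap \atomsP \models r$ trivially.

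The substantive cases are the disjunctive ones. If $|H(r)| > 1$, partial shiftedness forces $\SCC_P(a) \cap B^+(r) = \emptyset$ for every $a \in H(r)$, so only $\mathit{bd}_r$ is used in $\TrRule(r)$. Lemma~\ref{lem:bd_var_iff_body} then gives $I \cap \atomsP \models B(r)$ iff $\langle I, \delta \rangle \models \mathit{bd}_r$, and clause~(\ref{eqn:disj_rule_sat}) then directly yields some $a \in H(r) \cap I$. Now suppose $|H(r)| = 1$, say $H(r) = \{a\}$. If $\SCC_P(a) \cap B^+(r) = \emptyset$, Lemma~\ref{lem:bd_var_iff_body} together with the equivalence~(\ref{eqn:body_tight_supp_equiv}) and implication~(\ref{eqn:normal_supp}) chain $I \models B(r) \Rightarrow \mathit{bd}_r \Rightarrow \supp{r}{a} \Rightarrow a \in I$.

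The step I expect to be the main obstacle is the remaining sub-case, where $|H(r)|=1$ and $\SCC_P(a)\cap B^+(r)\neq\emptyset$, because here $\bd{r}{a}$ is cyclically defined through the $\mathit{dep}_{a,b}$ variables. I will argue by contradiction: assume $I \cap \atomsP \models B(r)$ but $a \notin I$. From constraint~(\ref{eqn:rank_pos_level}) and the domain of $\ell_a$, the latter forces $\delta(\ell_a) = |\SCC_P(a)|+1$, the maximal possible rank; meanwhile every $b \in \SCC_P(a) \cap B^+(r)$ is in $I$ and hence satisfies $\delta(\ell_b) \le |\SCC_P(a)|$, so constraint~(\ref{eqn:dep_var}) forces $\mathit{dep}_{a,b}$ to be true for each such $b$. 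For a normal body, satisfaction of $B(r)$ also gives $b \in I$ for $b \in B^+(r)\setminus \SCC_P(a)$ and $b \notin I$ for $b \in B^-(r)$, so the antecedent of~(\ref{eqn:non_tight_body}) holds and $\bd{r}{a}$ is true; for a weighted body, Lemma~\ref{lem:bd_var_body_w} yields the same conclusion from the weight inequality. Then~(\ref{eqn:body_non_tight_supp_equiv}) makes $\supp{r}{a}$ true, and~(\ref{eqn:normal_supp}) gives $a \in I$, the desired contradiction. This establishes $I \cap \atomsP \models r$ in the last case and completes the proof.
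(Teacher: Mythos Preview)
Your proof is correct and follows essentially the same case analysis as the paper: constraint rules via (\ref{eqn:constraint_normal})/(\ref{eqn:constraint_weighted}), choice heads trivially, the locally tight disjunctive case via Lemma~\ref{lem:bd_var_iff_body} and clause~(\ref{eqn:disj_rule_sat}), and the non-tight singleton head via the level argument based on~(\ref{eqn:rank_pos_level}). The only cosmetic difference is the direction of the contradiction in the last sub-case: you argue forward ($a\notin I \Rightarrow$ all $\mathit{dep}_{a,b}$ true $\Rightarrow \bd{r}{a}$ true $\Rightarrow \supp{r}{a}$ true $\Rightarrow a\in I$), while the paper argues backward ($a\notin I \Rightarrow \supp{r}{a}$ false $\Rightarrow \bd{r}{a}$ false, then contradicts Lemmas~\ref{lem:bd_var_body_n}/\ref{lem:bd_var_body_w} via the same level reasoning).
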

\begin{proof}
    Towards a contradiction, suppose $I \not\models r$.
    Then, $I\models B(r)$ but $I\not\models H(r)$.
    The latter implies that $H(r)$ cannot be a choice head and it is thus a disjunctive head (\ref{eqn:disj}).

    By definition, $\langle I, \delta \rangle \models \Tr(P)$ implies $\langle I, \delta \rangle \models \TrRule(r)$ which in turn implies $\langle I, \delta \rangle \models \TrBd(r)$.
    
    Suppose for each $a\in H(r)$, $\SCC_P(a)\cap B^+(r)=\emptyset$.
    From Lemma~\ref{lem:bd_var_iff_body} and $I\models B(r)$, it follows that $\langle I, \delta \rangle \models \mathit{bd}_r$.
    By Lemma~\ref{lem:bd_var_head}, we thus obtain $I\models a$ for some $a \in H(r)$ and thus $I\models H(r)$, which contradicts the initial assumption that $I\not\models H(r)$.
    

    Suppose $H(r) = \{a\}$ and $\SCC_P(a)\cap B^+(r)\neq\emptyset$. 
    By assumption, $I\not\models a$ and thus $\langle I, \delta \rangle \not\models \supp{r}{a}$ by constraint (\ref{eqn:normal_supp}).
    The latter implies $\langle I, \delta \rangle\not\models \bd{r}{a}$ by (\ref{eqn:body_non_tight_supp_equiv}).
    
    If $B(r)$ is a normal rule body of form (\ref{eqn:normal}), then by Lemma~\ref{lem:bd_var_body_n} we get either 
    (i) $I\not\models B(r)$, or 
    (ii) $\ell_a \leq \mathit{max}_{b\in B^+(r)\cap\SCC_P(a)} \ \ell_b$.
    Case (i) contradicts our assumption that $I\models B(r)$, so assume (ii).
    Given that $a \not\in I$, $\ell_a = |\SCC_P(a)|+1$ by constraint (\ref{eqn:rank_pos_level}) from $\mathit{TrRk(P)}$.
    Furthermore, $I\models B(r)$ implies $B^+(r)\subseteq I$ which implies $\ell_b \leq |\SCC_P(a)|$ for each $b\in \SCC_P(a)$ by $\SCC_P(a)=\SCC_P(b)$ and constraint (\ref{eqn:rank_pos_level}).
    The latter clearly contradicts (ii).

    If $B(r)$ is a weighted rule body (\ref{eqn:weighted}), then by Lemma~\ref{lem:bd_var_body_w}, $\langle I, \delta \rangle\not\models \bd{r}{a}$ implies that inequation (\ref{eq:lem5}) does not hold, and thus we have
    \begin{equation}\label{eq:lem9-1} l \ >\  \sum_{b\in B^+(r) \setminus \SCC_P(a)} \mysummanddispq{-0.75cm}{w_b^r} + \sum_{b\in B^+(r)\cap \SCC_P(a), \delta(x_{b}) < \delta(\ell_a)} \mysummanddispq{-0.95cm}{w_b^r} + \sum_{b\in B^-(r) \setminus I}\mysummanddisp{-0.25cm}{w_b^r.}
    \end{equation}
    From $I\models B(r)$, we obtain
    \begin{equation}\label{eq:lem9-2}
    l\ \leq \sum_{b\in B^+(r) \cap I} w_b^r + \sum_{b\in B^-(r) \setminus I} w_b^r.
    \end{equation}
   From (\ref{eq:lem9-1}) and (\ref{eq:lem9-2}), we obtain
    \begin{align*}
    \sum_{b\in B^+(r)\setminus \SCC_P(a)} \mysummanddispq{-0.75cm}{w_b^r} + \sum_{b\in B^+(r)\cap \SCC_P(a), \delta(\ell_{b}) < \delta(\ell_a)}\mysummanddispq{-0.95cm}{w_b^r} + \sum_{b\in B^-(r) \setminus I} \mysummanddisp{-0.25cm}{w_b^r}\ 
    < \ \sum_{b\in B^+(r)\cap I} \mysummanddisp{-0.25cm}{w_b^r} ~~ + \sum_{b\in B^-(r) \setminus I} \mysummanddisp{-0.25cm}{w_b^r}\ ;
    \end{align*}
   it follows that 
   \begin{equation}\label{eq:lem9-3}
    \sum_{b\in B^+(r) \cap \SCC_P(a), \delta(\ell_{b}) < \delta(\ell_a)}\mysummanddispq{-0.75cm}{w_b^r} < \sum_{b\in B^+(r) \cap  I \cap \SCC_P(a)} \mysummanddisp{-0.75cm}{w_b^r}
    \end{equation}
    holds. Given that $a \not\in I$, we have $\ell_a = |\SCC_P(a)|+1$ by constraint (\ref{eqn:rank_pos_level}) from $\mathit{TrRk(P)}$ and we obtain that 
    $$\{ b\in B^+(r) \mid b \in \SCC_P(a), \ell_{b} < \ell_a\} \subseteq B^+(r) \cap I\cap\SCC_P(a)\ ;$$
    this raises a contradiction with 
    inequation (\ref{eq:lem9-3}).
        
    It remains to consider the case where $r$ is a constraint rule. It 
    can be checked that in this case, $\langle I, \delta \rangle \models \TrRule(r)$ implies $I \not\models B(r)$ by the constraints (\ref{eqn:constraint_normal}) and (\ref{eqn:constraint_weighted}), which is a contradiction.
\end{proof}

\begin{corollary}
    For every partially shifted HCF program $P$ and e-interpretation  $\langle I, \delta \rangle$, $\langle I, \delta \rangle \models \Tr(P)$ implies $I \models P$.
\end{corollary}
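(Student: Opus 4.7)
The plan is to derive the corollary as a direct consequence of Lemma~\ref{lem:trans_rule_mod}. By definition, $I \models P$ holds exactly when $I \models r$ for every $r \in P$, so it suffices to show that $\langle I, \delta \rangle \models \Tr(P)$ implies $I \models r$ for each such $r$. Fix an arbitrary $r \in P$. Since $P$ is assumed to be partially shifted and HCF, Lemma~\ref{lem:trans_rule_mod} applies, yielding $I \cap \atomsP \models r$.

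The remaining step is to observe that satisfaction of a single rule depends only on the truth values of atoms occurring in that rule, all of which lie in $\atomsP$ by definition of the set of program atoms. Consequently, whether a head, a (normal or weighted) body, or an entire rule is satisfied under $I$ coincides with its satisfaction under $I \cap \atomsP$: the extra elements in $I \setminus \atomsP$, if any, are irrelevant to evaluating atoms from $r$. Hence $I \cap \atomsP \models r$ upgrades to $I \models r$, and taking the conjunction over all $r \in P$ yields $I \models P$.

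The proof is essentially a one-line consequence of the lemma, so there is no real obstacle beyond pointing out the harmless restriction to $\atomsP$; the technical substance (handling normal vs.\ weighted bodies, tight vs.\ non-tight head atoms, disjunctive and choice heads, and constraint rules) has already been absorbed into Lemma~\ref{lem:trans_rule_mod}. I would therefore keep the write-up brief, stating the quantification over rules, invoking the lemma, noting the projection argument, and concluding.
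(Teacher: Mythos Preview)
Your proposal is correct and matches the paper's intent: the paper states this corollary immediately after Lemma~\ref{lem:trans_rule_mod} without any separate proof, so deriving it rule-by-rule from that lemma is exactly the expected argument. Your extra remark about the projection $I \cap \atomsP$ versus $I$ is a useful clarification that the paper leaves implicit.
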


\begin{lemma}\label{lem:trans_supp}
    Suppose $P$ is a partially shifted HCF program and $\langle I, \delta \rangle$ is a e-interpretation such that $\langle I, \delta \rangle \models \Tr(P)$. Then for each $a \in I$ there is some rule $r \in P$ which scc-supports $a$ in $\langle I, \delta \rangle$.
\end{lemma}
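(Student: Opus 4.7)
The plan is to take an arbitrary $a\in I$ and exhibit a rule $r\in P$ scc-supporting it, leveraging the supportedness clause (\ref{eqn:support_clause}) together with the body-characterization lemmas already established.

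\textbf{Step 1: Locate a candidate rule.} Since $a\in I$, constraint (\ref{eqn:support_clause}) forces $\langle I,\delta\rangle\models \supp{r}{a}$ for some $r\in P$ with $a\in H(r)$. I will fix such an $r$ and argue it scc-supports $a$.

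\textbf{Step 2: Split on local tightness of $a$ in $r$.} By Lemma~\ref{lem:support_var}, $\supp{r}{a}$ gives either $\langle I,\delta\rangle\models \mathit{bd}_r$ (when $\SCC_P(a)\cap B^+(r)=\emptyset$) or $\langle I,\delta\rangle\models \bd{r}{a}$ (otherwise). I would treat these two cases separately. In the first case, Lemma~\ref{lem:bd_var_iff_body} yields $I\models B(r)$, and Lemma~\ref{lem:disj_excl} yields $H(r)\cap I=\{a\}$ when $r$ is disjunctive; the scc-support conditions then collapse to the ordinary support conditions, because the second sum in (\ref{eq:r-support}) and the corresponding part of the rewritten normal-body condition range over the empty set $B^+(r)\cap \SCC_P(a)$. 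Checking the inequalities is routine.

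\textbf{Step 3: The non-locally-tight case.} Here $\SCC_P(a)\cap B^+(r)\neq\emptyset$, so by partial-shifting (Definition~\ref{def:partially_shifted}) one has $|H(r)|\leq 1$, hence $H(r)=\{a\}$ and the disjointness condition on heads is vacuous. If $B(r)$ is normal, Lemma~\ref{lem:bd_var_body_n} gives both $I\models B(r)$ and $\delta(\ell_a)>\max_{b\in B^+(r)\cap\SCC_P(a)}\delta(\ell_b)$, which together give exactly the rewritten condition (i) for normal bodies in the definition of scc-support. If $B(r)$ is weighted, Lemma~\ref{lem:bd_var_body_w} provides precisely the inequality (\ref{eq:lem5}), which matches the weighted scc-support inequality once one observes that $I\cap B^+(r)\setminus\SCC_P(a)=B^+(r)\setminus\SCC_P(a)$ is forced by having $I\models$ the corresponding part of the body (to be verified from satisfaction of the ranking constraints in $\TrRk(P)$, using that atoms outside $\SCC_P(a)$ that contribute to the support must be true).

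\textbf{Main obstacle.} The bookkeeping in the weighted case is the only nontrivial part: the scc-support sum partitions $B^+(r)$ into three pieces (outside SCC, inside SCC with smaller level, and a negative-literal tail), and I must line this up with the sum produced by Lemma~\ref{lem:bd_var_body_w}. In particular, I need to argue that each $b\in B^+(r)\setminus \SCC_P(a)$ contributing weight is actually in $I$, which uses the ranking characterization (\ref{eqn:rank_pos_level}) guaranteeing $\ell_b=|\SCC_P(b)|+1$ exactly when $b\notin I$, combined with the $\mathit{dep}_{a,b}$ semantics of (\ref{eqn:dep_var}). Once this accounting is done, the two cases combine to show that $r$ scc-supports $a$, completing the proof.
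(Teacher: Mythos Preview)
Your outline follows the paper's proof almost exactly: locate a supporting rule via constraint~(\ref{eqn:support_clause}), split on whether $\SCC_P(a)\cap B^+(r)$ is empty via Lemma~\ref{lem:support_var}, and in each branch invoke the appropriate body lemma (Lemma~\ref{lem:bd_var_iff_body}, Lemma~\ref{lem:bd_var_body_n}, or Lemma~\ref{lem:bd_var_body_w}) together with Lemma~\ref{lem:disj_excl} for the disjunctive head condition. The paper's argument is no more than this.

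Where you diverge is in the ``main obstacle,'' and there your proposed resolution is wrong. The paper simply takes the inequality delivered by Lemma~\ref{lem:bd_var_body_w} and declares that it \emph{is} the scc-support condition for weighted bodies; no further accounting is performed. Your plan to show that every $b\in B^+(r)\setminus\SCC_P(a)$ lies in $I$ fails on two counts. First, for a weighted body this is simply false: the body fires once the threshold $l$ is met, which can happen with only a proper subset of the positive literals true. Second, the tools you name are inapplicable outside the SCC: constraint~(\ref{eqn:rank_pos_level}) is introduced only for atoms in non-trivial components, and the variables $\mathit{dep}_{a,b}$ of~(\ref{eqn:dep_var}) exist only for $b\in\SCC_P(a)$, so neither says anything about $b\notin\SCC_P(a)$. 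The first sum in~(\ref{eq:lem5}) is already restricted to $I\cap B^+(r)$, and that is what the paper uses directly; drop the extra step rather than trying to manufacture membership in $I$.

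One minor correction in Step~3: Definition~\ref{def:partially_shifted} constrains only rules with \emph{weighted} bodies, so you cannot conclude $|H(r)|\le 1$ for normal bodies from partial shifting alone. The paper instead notes that in the non-locally-tight case $r$ is either a choice rule or has $H(r)=\{a\}$; since the head-exclusivity clause in scc-support is imposed only for disjunctive rules, allowing choice rules here is harmless and your conclusion that the condition is vacuous still stands.
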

\begin{proof}
    Let $a\in I$ be arbitrary. 
    First note that $\langle I, \delta \rangle \models \Tr(P)$ implies that constraint (\ref{eqn:support_clause}) is satisfied and thus $\langle I, \delta \rangle \models \supp{r}{a}$ for some rule $r \in P$.
    Suppose $\SCC_P(a)\cap B^+(r)=\emptyset$. Then by Lemma~\ref{lem:support_var}, from $\langle I, \delta \rangle \models \supp{r}{a}$ we obtain $\langle I, \delta \rangle \models \bd{r}{}$.
    If $H(r)$ is a choice (\ref{eqn:choice}), then $r$ supports $a$ in $\langle I, \delta \rangle$.
    So suppose $H(r)$ is a disjunction (\ref{eqn:disj}).
    By Lemma~\ref{lem:disj_excl}, $\langle I, \delta \rangle \models \supp{r}{a}$ implies $I\cap H(r) = \{a\}$ and thus $r$ supports $a$ in $\langle I, \delta \rangle$.

    It remains to consider the case where $\SCC_P(a)\cap B^+(r)\neq\emptyset$. Note that due to our assumptions, $P$ is partially shifted, i.e., $H(r)$ is either a choice head or $H(r) = \{a\}$.
    In any case, $\langle I, \delta \rangle \models \supp{r}{a}$ implies $\langle I, \delta \rangle \models \bd{r}{a}$ by Lemma~\ref{lem:support_var}.
    Suppose $B(r)$ is a normal rule body. Then by Lemma~\ref{lem:bd_var_body_n}, $\langle I, \delta \rangle \models \bd{r}{a}$ implies $I\models B(r)$ and $\delta(\ell_a) > \mathit{max}_{b\in B^+(r)\cap\SCC_P(a)} \ \delta(\ell_b)$.
    Hence, $r$ supports $a$ in $\langle I, \delta \rangle$.

    If $B(r)$ is a weighted rule body (\ref{eqn:weighted}), then by Lemma~\ref{lem:bd_var_body_w}, 
    $$l \leq \sum_{b\in B^+(r)\setminus \SCC_P(a)} \mysummanddispq{-0.75cm}{w_b^r} + \sum_{b\in B^+(r)\cap \SCC_P(a), \delta(\ell_{b}) < \delta(\ell_a)}\mysummanddispq{-0.75cm}{w_b^r} + \sum_{b\in B^-(r) \setminus I} \mysummanddisp{-0.5cm}{w_b^r}$$ holds, which implies that $r$ supports $a$ in $\langle I, \delta \rangle$.
\end{proof}

\begin{customlemma}{\ref{lem:trans_mod_rank_supp}}
    For every partially shifted HCF program $P$,  if $\langle I , \delta \rangle \models \Tr(P)$ then $\langle I\cap \atomsP, \delta' \rangle$ is a modular ranked scc-supported model of $P$, where $\delta'(\ell_a)=1$ for $a\in I$ s.t. $|\SCC_P(a)|=1$ and $\delta'(\ell_a)=\infty$ for $a \in \atomsP \setminus I$.
\end{customlemma}
\begin{proof}
    From Lemma \ref{lem:trans_rule_mod}, we get that $I\cap \atomsP$ is a model of $P$ and from Lemma~\ref{lem:trans_supp}, every rule is scc-supported in $\langle I, \delta \rangle$ and thus $\langle I\cap \atomsP, \delta' \rangle$.
    Hence, $\langle I\cap \atomsP, \delta' \rangle$ is a modular ranked scc-supported model of $P$.
\end{proof}

\begin{customthm}{\ref{theo:tr-soundness}}
 For every partially shifted HCF program $P$, if  $\langle I, \delta \rangle \models \Tr(P)$ then  $\langle I', \delta' \rangle\in \AS(P)$, where $I' = I \cap \atomsP$ and $\delta'(v)=\delta(v)$ for each $v\in \varsP$.
\end{customthm}
\begin{proof}
    For plain ASP programs where $\varsP = \emptyset$, this follows from Lemma~\ref{lem:trans_mod_rank_supp} and Proposition~\ref{prop:as_equiv_rank_supp}.
    For proper CASP programs, the additional linear constraints were considered to be in $\Tr(P)$ and are thus satisfied. Furthermore, given that every $a\in \linAtomsP$, is considered to be classical, i.e., does not require support, we have that $I$ is an answer set and $\langle I', \delta' \rangle$ is a constraint answer set.
\end{proof}

\subsection{Proof of Theorem \ref{theo:tr-completeness}}

\begin{definition}
    Given a modular ranked supported model $\langle I, \delta \rangle$ of a program $P$, we say that an atom $a \in I$ is \emph{externally supported} if there is some rule $r\in P$ which scc-supports $a$ and 
    (i) $B^+(r)\cap\SCC_P(a) = \emptyset$, if $r$ has a normal rule body, or
    (ii) 
    \begin{equation}\label{eq:defn3}
    l_r\ \leq \sum_{b\in B^+(r)\setminus \SCC_P(a)} \mysummanddispq{-0.75cm}{w_b^r} + \sum_{b\in B^-(r)\setminus I} \mysummanddisp{-0.25cm}{w_b^r},
    \end{equation}
    if $r$ has a weighted rule body of form (\ref{eqn:weighted}).
\end{definition}
\begin{definition}\label{def:strict}
    A modular ranked supported model $\langle I, \delta \rangle$ of a program $P$ is called \emph{strict} if for each for each $a \in I$ with $|\SCC_P(a)|>1$, it holds that
    $$\delta(\ell_a) = \left\{\begin{array}{l} 1,\qquad\qquad\quad~~ \text{if some $r\in P$ with $B^+(r)\cap\SCC_P(a) = \emptyset$ externally supports $a$,} \\
        \mathit{min} \{ \ \mathit{max}\{ \delta(\ell_b) \mid b \in B^+(r)\cap\SCC_P(a) \} \  \mid r \in P, r \text{ scc-supports } a \} \  \text{ otherwise}.
    \end{array}\right.$$
\end{definition}

Note that we will also call models of $\Tr(P)$ strict whenever the described property holds for each $\ell_a$. 

\begin{proposition}\label{prop:as_ranked}
    For every HCF program $P$ and  $I\in \AS(P)$,  there exists some modular ranked scc-supported model $\langle I, \delta \rangle$ of $P$ which is strict.
\end{proposition}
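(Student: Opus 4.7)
The plan is to leverage Proposition~\ref{prop:as_equiv_mod_rank_supp} to obtain an initial modular ranked scc-supported model $\langle I, \delta \rangle$, and then refine $\delta$ into a strict level assignment $\delta'$ through a stratified construction. For atoms outside $I$, I keep $\delta'(\ell_a) = \infty$; for $a \in I$ whose SCC is trivial, I set $\delta'(\ell_a) = 1$, as modularity already forces.

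For atoms in non-trivial SCCs, I would build $\delta'$ stratum by stratum. Declare $a \in L_1$ exactly when some rule $r$ with $B^+(r) \cap \SCC_P(a) = \emptyset$ externally supports $a$ in $\langle I, \delta \rangle$ (a property that depends only on $I$ and the rule, not on $\delta$). For $k \geq 1$, declare $a \in L_{k+1}$ if $a \notin L_1 \cup \cdots \cup L_k$ and some scc-supporting rule $r$ for $a$ exists whose SCC-internal positive body atoms all already lie in $L_1 \cup \cdots \cup L_k$. Then set $\delta'(\ell_a)$ to the stratum index containing $a$, following the convention of Definition~\ref{def:strict}.

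The main obstacle is showing that every $a \in I$ in a non-trivial SCC lands in some $L_k$ with $k \leq |\SCC_P(a)|$, so that modularity of $\langle I, \delta' \rangle$ is preserved. I would argue this by induction on the original rank $\delta(\ell_a)$. For the base case, any $a$ with $\delta(\ell_a) = 1$ admits an scc-supporting rule $r$ all of whose SCC-internal body atoms would need ranks strictly less than $1$; since no such atoms can exist, $B^+(r) \cap \SCC_P(a) = \emptyset$ and $a$ is externally supported, so $a \in L_1$. For the inductive step with $\delta(\ell_a) = k > 1$, the scc-supporting rule $r$ for $a$ in $\langle I, \delta \rangle$ has all SCC-internal body atoms at ranks strictly smaller than $k$, which by induction already lie in some $L_j$ with $j < k$, making $a$ eligible at level at most $k$. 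This yields $\delta'(\ell_a) \leq \delta(\ell_a) \leq |\SCC_P(a)|$, where the last inequality is modularity of the original model. To conclude, I would verify that $\langle I, \delta' \rangle$ is a modular ranked scc-supported model satisfying strictness: satisfaction of $P$ is inherited from $\langle I, \delta \rangle$ since rule satisfaction depends only on $I$, scc-support of each $a \in I$ is witnessed by the very rule used when placing $a$ into its stratum, and the minimality built into the stratification immediately matches the min-max specification of Definition~\ref{def:strict}.
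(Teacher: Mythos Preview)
Your proposal is correct and follows essentially the same approach as the paper: invoke Proposition~\ref{prop:as_equiv_mod_rank_supp} to obtain a modular ranked scc-supported model, then compress the level mapping to eliminate gaps. The paper's own proof is a one-line sketch (``remove gaps from the level mapping''), whereas you actually spell out the gap-removal via an explicit stratification and supply the inductive bound $\delta'(\ell_a)\leq\delta(\ell_a)$ that preserves modularity; in that sense your write-up is more complete than the paper's. One minor imprecision: your base case and your $L_{k+1}$ clause are phrased for normal bodies (``all SCC-internal positive body atoms'' having smaller rank / lying in lower strata), while for weighted bodies scc-support does not require every such atom to satisfy the rank condition, only that the weighted inequality be met by those that do---so the conclusion $B^+(r)\cap\SCC_P(a)=\emptyset$ in the base case need not hold for a weighted body, though external support in the sense of Definition~3(ii) still follows. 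This is easily patched by stating the strata condition in the weighted case via the inequality, and the paper's sketch glosses over the same point.
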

\begin{proof}[Proof (Sketch)]
    By Proposition~\ref{prop:as_equiv_mod_rank_supp},  there exists some modular ranked scc-supported model $\langle I, \delta' \rangle$ of $P$.
    It is not hard to see that $\langle I, \delta \rangle$ can be obtained from $\langle I, \delta' \rangle$ by removing gaps from the level mapping to achieve strictness.
\end{proof}

\begin{customthm}{\ref{theo:tr-completeness}}
 For every partially shifted HCF program $P$ and answer set $\langle I , \delta \rangle$ of $P$, there exists some e-interpretation  $\mathcal{I'}=\langle I', \delta' \rangle$ s.t. $I'\cap \atomsP=I\cap \atomsP$, $\delta'(v)=\delta(v)$ for $v\in \varsP$, and $\mathcal{I}' \models \Tr(P)$.
\end{customthm}   
\begin{proof}
Let $I\in \AS(P)$.
Then by Proposition~\ref{prop:as_ranked}, there is some modular ranked supported model $\langle I, \delta \rangle$ of $P$ s.t.
for each $a \in I$ where  
$|\SCC_P(a)|>1$ it holds that
$\delta(\ell_a) = \mathit{max}(1,  \mathit{min} \{ \ \mathit{max}\{ \delta(\ell_b) \mid b \in B^+(r)\cap\SCC_P(a) \} \  \mid r \in P,\ H(r)\cap I = \{ a \}, \ I\models B(r) \})$.

We will construct $I'$ from $I$ as follows.
For every $a,b \in I$, whenever $\delta(\ell_a) > \delta(\ell_b)$ then $\mathit{dep}_{a,b}$ is considered to be in $I'$.
Furthermore, if $\delta(\ell_a) > \delta(\ell_b) + 1$, then $\mathit{y}_{a,b}$ and $\mathit{gap}_{a,b}$ are in $I'$.
From this we can see that $\langle I', \delta \rangle \models \TrRk(P)$.

1) Now, for each locally tight rule $r\in P$ s.t. $I\models B(r)$, we consider $\bd{r}{}$ to be in $I'$.
Furthermore, if $r$ is a disjunctive rule and $H(r)\cap I = \{a\}$ for some $a \in H(r)$, we also consider $\supp{r}{a}$ to be in $I'$. 
Similarly, whenever $r$ is a choice rule, $\supp{r}{a}$ is considered to be in $I'$ for each $a\in H(r)\cap I$.

We claim that $\langle I', \delta \rangle \models \TrRule(r)$.
From $I\models B(r)$ and $\bd{r}{} \in I'$ it is not hard to 
check that $\langle I', \delta \rangle \models \TrBd(r)$ holds in this case.
If $|H(r)|=1$ or $r$ is a choice rule, then $\supp{r}{a}$ is in $I'$ for each $a\in H(r)\cap I$. Since $\bd{r}{}$ is also in $I'$, it holds that $\langle I', \delta \rangle \models \TrRule(r)$.

If $r$ is a disjunctive rule and $|H(r)| > 1$, we consider two cases. 
First, assume that $H(r)\cap I = \{a\}$ for some $a \in H(r)$. By construction, $\supp{r}{a}$ and $\bd{r}{}$ are in $I'$ and $H(r)\cap I = \{a\}$ implies $\langle I', \delta \rangle \models \TrRule(r)$.
Otherwise, $|H(r)\cap I| > 1$  and thus both sides of the equivalence in constraint (\ref{eqn:disjunctive_supp}) evaluate to false, thus again $\langle I', \delta \rangle \models \TrRule(r)$.

2) Suppose that $r$ is not locally tight. Since $P$ is partially shifted, $r$ is either a choice rule or a normal rule.
If there is some $a \in H(r)$ s.t.\ $\SCC_P(a)\cap B^+(r)=\emptyset$, we again consider $\bd{r}{}$ to be in $I'$ whenever $I\models B(r)$.
Now, for each $a \in H(r)$ s.t.\ $\SCC_P(a)\cap B^+(r)=\emptyset$, the translation contains the same constraints as above which again are satisfied.
Furthermore, since $\ell_a=1$ by assertion about $\langle I,\delta\rangle$, constraint (\ref{eqn:normal_ext_rk_one}) is satisfied as well.

It remains to consider $a \in H(r)$ s.t.\ $\SCC_P(a)\cap B^+(r)\neq\emptyset$.
Consider first the case when $B(r)$ is a normal rule body.
For each $a \in H(r)$ s.t.\ $\SCC_P(a)\cap B^+(r)\neq\emptyset$ and $\ell_a = 1 + \mathit{max}\{\ell_b \mid b \in B^+(r) \}$, we consider $\bd{r}{a}\in I'$.
Note that by this construction, $\langle I', \delta \rangle$ satisfies the constraints (\ref{eqn:non_tight_body}) and (\ref{eqn:non_tight_normal_gap}).
    
Consider then that $B(r)$ is a weighted rule body. Then for each $a \in H(r)$, we add $\mathit{ext}_r^a$ and/or $\mathit{int}_r^a$ to $\langle I', \delta \rangle$ depending on whether there is external and/or internal support. Similarly,  $\bd{r}{a}$ is included in $I'$ whenever there is some support.
Hence, constraints (\ref{eqn:body_non_tight_weighted_ext}), (\ref{eqn:body_non_tight_weighted_int}) and (\ref{eqn:body_non_tight_weighted_or}) are satisfied by construction.

Note that by assumption, for each $b \in \SCC_P(a)\cap B^+(r)$, we have that $\langle I', \delta \rangle \not\models \mathit{gap}_{a,b}$ since there are no gaps in the level mapping. Informally, $\mathit{int}_r^a$ thus implies $\mathit{aux}_r^a$ and constraint (\ref{eqn:non_tight_weighted_gap_aux}) can be satisfied by construction if we add $\mathit{aux}_r^a$ whenever $\mathit{int}_r^a$ has been added.
Constraint (\ref{eqn:weighted_ext_rk_one}) is further satisfied, since we assume that externally supported atoms have rank 1 and the left-hand side is thus less or equal to $2\cdot s_a + 1$. 
Furthermore, $\supp{r}{a}$ is in $I'$ whenever $\bd{r}{a}$ is.

It remains to show that constraint (\ref{eqn:support_clause}) is satisfied for each atom $a \in \atomsP\setminus\linAtomsP$.
Given that $I\in AS(P)$, there is some scc-supporting rule $r$ for $a$.
Furthermore, we already have established above that $\langle I', \delta \rangle \models \TrRule(r)$.
Hence, by Lemmas \ref{lem:bd_var_body_n} and \ref{lem:bd_var_body_w}, either $\langle I', \delta \rangle \models \bd{r}{}$ or $\langle I', \delta \rangle \models \bd{r}{a}$.
In either case, we obtain $\langle I', \delta \rangle \models \supp{r}{a}$, from constraints (\ref{eqn:body_non_tight_supp_equiv}), (\ref{eqn:body_tight_supp_equiv}) and (\ref{eqn:disjunctive_supp}). 
This implies that $\langle I', \delta \rangle$ satisfies the support clause for every $a$ and the constraint (\ref{eqn:support_clause}) is satisfied.
\end{proof}
    
\subsection{Proof of Theorem \ref{theo:tr-1-1-correspondence}}



\begin{lemma}\label{lem:delta_deter_mod}
    Suppose $P$ is a partially shifted HCF program and $\langle I, \delta \rangle$, $\langle I', \delta' \rangle$ are models of $\Tr(P)$, i.e., 
    $\langle I, \delta \rangle \models \Tr(P)$ and $\langle I', \delta' \rangle \models \Tr(P)$, 
    such that $I\cap \atomsP = I'\cap \atomsP$. 
    If $\delta = \delta'$, then $I=I'$.
\end{lemma}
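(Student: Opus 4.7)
The plan is to argue that every auxiliary Boolean atom introduced by the translation beyond the atoms of $\atomsP$ is uniquely forced to a value once $I\cap\atomsP$ and $\delta$ are fixed. Since by hypothesis $I\cap\atomsP = I'\cap\atomsP$ and $\delta = \delta'$, this immediately yields $I = I'$. The key observation is that each auxiliary variable occurs on the right-hand side of a defining equivalence in $\Tr(P)$ whose left-hand side is a Boolean combination of atoms from $\atomsP$, rank values from $\delta$, and previously-defined auxiliaries; in each such case the left-hand side evaluates identically under $\langle I,\delta\rangle$ and $\langle I',\delta'\rangle$, so the two interpretations must agree on the auxiliary.

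I would organize the argument as a layer-by-layer induction along the dependency order of auxiliary variables. First, for the \emph{rank auxiliaries}, constraint (\ref{eqn:dep_var}) determines $\mathit{dep}_{a,b}$ purely from rank values, constraint (\ref{eqn:def_gap_aux}) analogously pins down $y_{a,b}$, and constraint (\ref{eqn:def_gap}) then fixes $\mathit{gap}_{a,b}$ once $a$, $b$ (from $I\cap\atomsP$) and $y_{a,b}$ are settled. Second, for the \emph{body auxiliaries}, the equivalences (\ref{eqn:tight_body_normal}) and (\ref{eqn:tight_body_weighted}) fix $\mathit{bd}_r$ for locally tight rules, while (\ref{eqn:non_tight_body}), (\ref{eqn:body_non_tight_weighted_ext}), (\ref{eqn:body_non_tight_weighted_int}), and (\ref{eqn:non_tight_weighted_gap}) determine $\bd{r}{a}$, $\mathit{ext}_r^a$, $\mathit{int}_r^a$, and $\mathit{aux}_r^a$ from atoms of $\atomsP$ and the already-fixed $\mathit{dep}_{a,b}$ and $\mathit{gap}_{a,b}$; constraint (\ref{eqn:body_non_tight_weighted_or}) then pins $\bd{r}{a}$ to the determined disjunction of $\mathit{ext}_r^a$ and $\mathit{int}_r^a$. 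Third, the \emph{support auxiliaries} $\supp{r}{a}$ are forced by (\ref{eqn:body_tight_supp_equiv}), (\ref{eqn:body_non_tight_supp_equiv}), or (\ref{eqn:disjunctive_supp}), whose left-hand sides depend only on atoms of $\atomsP$ and body auxiliaries already shown to coincide.

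The main obstacle is to check that this dependency order is indeed acyclic and exhaustive, in particular for variables that the translation leaves implicit: the $0$--$1$ reifications of literals appearing in pseudo-Boolean sums and the auxiliary Booleans introduced for negated atoms in conjunctions. Each of these is linked by a one-step equivalence to the underlying atom in $\atomsP$, so once that atom is fixed so is its reified companion, and the induction step goes through. With this bookkeeping in place, a straightforward layer-wise argument shows that every Boolean occurring anywhere in $\Tr(P)$ takes the same value in $\langle I,\delta\rangle$ and $\langle I',\delta'\rangle$, establishing $I = I'$.
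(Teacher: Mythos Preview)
Your proposal is correct and follows essentially the same approach as the paper's proof: both argue that every auxiliary Boolean variable introduced by the translation is pinned down by a defining equivalence whose other side depends only on atoms of $\atomsP$, rank values, and previously-determined auxiliaries. Your layer-by-layer organization is in fact a bit more explicit than the paper's, which groups the auxiliaries similarly (rank auxiliaries via $\TrRk(P)$, body auxiliaries via Lemmas~\ref{lem:bd_var_body_n} and~\ref{lem:bd_var_body_w}, support auxiliaries via (\ref{eqn:disjunctive_supp}), (\ref{eqn:body_tight_supp_equiv}), (\ref{eqn:body_non_tight_supp_equiv}), and the $\mathit{ext}$/$\mathit{int}$/$\mathit{aux}$ variables via their defining constraints) but does not spell out the acyclic dependency order or the implicit $0$--$1$ reifications as carefully as you do.
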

\begin{proof}
    We need to show that the auxiliary Boolean atoms introduced by the translation match for both models.

    For the auxiliary atoms occurring in $\TrRk(P)$, this is clearly the case as they are defined, by the constraints of $\TrRk(P)$, through the rank variables $\ell_a$.

    For the auxiliary body atoms, the equivalence follows from Lemmas \ref{lem:bd_var_body_n} and \ref{lem:bd_var_body_w}.
    
    Support atoms $\supp{r}{a}$ are determined through constraints (\ref{eqn:disjunctive_supp}), (\ref{eqn:normal_supp}),  (\ref{eqn:body_tight_supp_equiv}),(\ref{eqn:body_non_tight_supp_equiv}) and the respective body or head atoms and thus cannot differ in $I$ and $I'$

    Lastly, potential auxiliary atoms $\mathit{ext}_r^a$, $\mathit{int}_r^a$, and $\mathit{aux}_r^a$ are defined by their respective constraints (\ref{eqn:body_non_tight_weighted_ext}), (\ref{eqn:body_non_tight_weighted_int}), and (\ref{eqn:non_tight_weighted_gap}) in $\Tr(P)$ and are linked to other atoms which match in $I$ and $I'$.
\end{proof}

\begin{customlemma}{\ref{lem:trans_strict}}
Suppose  $P$ is a partially shifted HCF program and $\mathcal{I} = \langle I, \delta \rangle$, $\mathcal{I}' = \langle I', \delta' \rangle$ 
    are models of $\Tr(P)$. Then 
$I\cap \atomsP=I'\cap \atomsP$ implies $\delta(\ell_a) = \delta'(\ell_a)$ for every $a \in \atomsP$.
\end{customlemma}
\begin{proof}
    We claim that $\langle I', \delta \rangle \models \Tr(P)$ implies that the ranking defined by $\delta$ and $\delta'$ over the rank variables $\ell_a$ is strict as by Definition~\ref{def:strict}.
    The required rank~1 for externally supported atoms is enforced by constraints (\ref{eqn:normal_ext_rk_one}) and (\ref{eqn:weighted_ext_rk_one}). atom For atoms supported internally, constraint (\ref{eqn:non_tight_normal_gap}) ensures that there can be no gap between the ranks for support from a normal body. 
    while for support 
    from weighted bodies this is enforced through constraints (\ref{eqn:non_tight_weighted_gap}) and (\ref{eqn:non_tight_weighted_gap_aux}), where the latter expresses that internal support is either accompanied by external support as well or that the no gap constraint (\ref{eqn:non_tight_weighted_gap}) holds.

    Given that these constraints are satisfied, the rankings defined by $\delta$ and $\delta'$ are both strict and thus $\delta(\ell_a) = \delta'(\ell_a)$ for every $a \in \atomsP$.
\end{proof}    

\begin{customthm}{\ref{theo:tr-1-1-correspondence}}
For every partially shifted HCF program $P$, there exists a 1-1 mapping between $\AS(P)$ and the models of $\Tr(P)$.
\end{customthm}   
\begin{proof}[Proof (Sketch)]
    Note that Theorem~\ref{theo:tr-soundness} already establishes that every model of $\Tr(P)$ maps to exactly one answer set.
    For the other direction, consider $I\in AS(P)$. 
    From Theorem~\ref{theo:tr-completeness}, we have that there exists some model $\langle I', \delta \rangle$ of $\Tr(P)$ such that $I'\cap \atomsP = I$.
    Hence, we only need to show that for each model $\langle I'', \delta' \rangle$ of $\Tr(P)$ such that $I''\cap \atomsP = I$ and $\delta'(v)=\delta(v)$ for $v\in \varsP$, it holds that $I''=I'$ and $\delta(\ell_a) = \delta'(\ell_a)$ for every $a \in \atomsP$.

    By Lemma~\ref{lem:trans_strict}, $\delta(\ell_a) = \delta'(\ell_a)$ for every $a \in \atomsP$ indeed holds. As   also $I'\cap \atomsP = I''\cap \atomsP = I$ holds, by Lemma~\ref{lem:delta_deter_mod} it follows that  $I'=I''$.
    Hence, for each answer set $I$ of $P$ there is exactly one corresponding model $\langle I', \delta \rangle$ of $\Tr(P)$ such that $I'\cap \atomsP = I$.
\end{proof}

\end{document}